\newcolumntype{C}[1]{>{\centering\arraybackslash}p{#1}}
\DeclareMathOperator*{\argmax}{arg\,max}
\journalname{Data Mining and Knowledge Discovery}
\begin{document}

\title{Homophily Outlier Detection in Non-IID Categorical Data}

\titlerunning{Homophily Outlier Detection in Non-IID Categorical Data}        

\author{Guansong Pang \and Longbing Cao \and Ling Chen}


\institute{
        Guansong Pang  \at Australian Institute for Machine Learning, University of Adelaide\\
        Adelaide, SA 5000, Australia\\
        \email{pangguansong@gmail.com}\\
        This work was mainly done when Guansong Pang was with University of Technology Sydney
            \and 
            Longbing Cao (corresponding author) \at
             Data Science Lab, University of Technology Sydney\\ Sydney, NSW 2007, Australia\\
              \email{longbing.cao@uts.edu.au}           
           \and 
           Ling Chen  \at
              Center of Artificial Intelligence, University of Technology Sydney\\ Sydney, NSW 2007, Australia\\
              \email{ling.chen@uts.edu.au}
}

\date{Received: date / Accepted: date}

\maketitle

\begin{abstract}

Most of existing outlier detection methods assume that the \textit{outlier factors} (i.e., outlierness scoring measures) of data entities (e.g., feature values and data objects) are Independent and Identically Distributed (IID). This assumption does not hold in real-world applications where the outlierness of different entities is dependent on each other and/or taken from different probability distributions (non-IID). This may lead to the failure of detecting important outliers that are too subtle to be identified without considering the non-IID nature. The issue is even intensified in more challenging contexts, e.g., high-dimensional data with many noisy features. This work introduces a novel outlier detection framework and its two instances to identify outliers in \textit{categorical data} by capturing non-IID outlier factors. Our approach first defines and incorporates distribution-sensitive outlier factors and their interdependence into a value-value graph-based representation. It then models an outlierness propagation process in the value graph to learn the outlierness of \textit{feature values}. The learned value outlierness allows for either direct outlier detection or outlying feature selection. The graph representation and mining approach is employed here to well capture the rich non-IID characteristics. Our empirical results on 15 real-world data sets with different levels of data complexities show that (i) the proposed outlier detection methods significantly outperform five state-of-the-art methods at the 95\%/99\% confidence level, achieving 10\%-28\% AUC improvement on the 10 most complex data sets; and (ii) the proposed feature selection methods significantly outperform three competing methods in enabling subsequent outlier detection of two different existing detectors.

\keywords{Outlier Detection  \and Feature Selection \and Non-IID Learning \and Categorical Data \and Homophily Relation \and Random Walk \and Coupling Learning}
\end{abstract}

\section{Introduction}

Outliers are data objects that are rare or inconsistent from the majority of objects in a data set \citep{aggarwal2017outlieranalysis}. A broad range of applications, such as intrusion detection, fraud detection, terrorist detection and early detection of diseases, require the detection of outliers in categorical data, which is described by categorical features. Examples of categorical features are network socket features, like Internet services and protocols; demographic features, like language, nationality and profession; and some symptoms of diseases.

\subsection{Research Problems}

Numerous outlier detection methods have been introduced over the decades. However, most of existing outlier detection methods implicitly assume that the \textit{outlier factors} (i.e., outlierness scoring measures that quantify the extent of deviation from some specific norms) of data entities (e.g., feature values, combinations of multiple feature values, and data objects) are Independent and Identically Distributed (IID). This is challenged by non-IID data characteristics widely appearing in many real-world applications, i.e., the outlierness of the data entities is interdependent and drawn from heterogeneous distributions. For example, suppose people diagnosed with type-2 diabetes are outliers, then the outlierness of the symptoms in three highly relevant features `weight loss', `thirsty' and `urination' are interdependent rather than independent, e.g., the outlierness of having the symptom of excessive thirsty can be largely influenced by the outlierness of some other co-occurred symptoms such as weight loss and frequent urination\footnote{Having excessive thirsty, weight loss and frequent urination are the abnormal concurrent symptoms in diagnosing type-2 diabetes according to https://www.diabetesaustralia.com.au/.}. As a result, existing methods likely fail to detect some important outliers, e.g., outliers that are too subtle to be identified without analyzing the coupling relationships between \textit{outlying behaviors} (i.e., exceptional values that deviate significantly from other values in the same features). This issue may be much more severe in challenging outlier detection environments, such as in high-dimensional data sets with many noisy features (i.e., features in which normal objects manifest infrequent behaviors while outliers show frequent behaviors).

In this work, we are interested in detecting outliers in \textit{categorical data} with non-IID outlier factors. Learning non-IID information involves a broad range of explicit and/or implicit couplings and heterogeneities \citep{cao2012coupled,cao2014computer,cao2015coupling}, which fundamentally challenge traditional IID learning techniques. This work focuses on exploring a particular coupling and heterogeneity, homophily coupling and heterogeneous probability distributions, embedded in the outlier factors.

\textit{Homophily couplings} refer to the phenomenon that an entity tends to bind with other entities that have similar characteristics and consequently the entities have mutually positive influence on their characteristics  \citep{fowler2008dynamic,mcpherson2001homophily}. For example, people who are surrounded by many other happy people are more likely to be happy people or become happy in the near future \citep{fowler2008dynamic}. By having homophily couplings in outlier detection, we posit that the outlying behaviors are explicitly and/or implicitly coupled with each other, and the outlierness of one behavior is influenced by the outlierness of other behaviors. As a result, the outlierness of a behavior is dependent on the outlierness of its coupled behaviors, and the outlierness of these coupled behaviors are further dependent on that of their own coupled behaviors. 

Existing outlier detection methods for categorical data \citep{akoglu2012comprex,das2007marp,he2005fp,otey2006fast,smets2011krimp,tang2015contextualoutliers,wu2013information,ienco2017semisupervised,aggarwal2017outlieranalysis,angiulli2008outlier,angiulli2010outlier} take the IID assumption. Such methods identify a set of normal/outlying patterns from all possible patterns and compute the outlierness of the identified patterns \textit{individually}. In doing this, they ignore the couplings between the outlierness of the patterns. Consequently, they may face critical problems, e.g., they may treat the wrongly identified patterns as important as the genuine ones and result in high detection errors. Accordingly, properly modeling the outlying behaviors with homophily couplings is critical and can iteratively reinforce the outlierness of genuine outlying patterns, which may consequently reduce the impact of the erroneous patterns.

Additionally, outlier factors may be taken from heterogeneous distributions due to \textit{heterogeneous probability distributions} in different features, e.g., features with two values follow a Bernoulli distribution while features with multiple values follow a categorical distribution. As a result, heterogeneous outlier factors w.r.t. different distributions are required to accurately capture the outlierness of the entities. However, existing methods generally assume an \textit{identical} distribution for all identified outlying/normal patterns and assign the same outlierness to patterns of the same frequencies, leading to incorrect estimation of the outlierness of patterns. 

\subsection{Our Solution - Coupled Unsupervised Outlier Detection}

This paper introduces a novel framework, called Coupled Unsupervised OuTlier detection (CUOT), as illustrated in Fig. \ref{fig:framework}, to detect outliers in categorical data with the aforementioned characteristics using \textit{non-IID outlier factors}, i.e., outlier factors that are coupled with each other and have heterogeneous settings for different feature contexts. CUOT estimates an initial outlierness of each value by modeling intra-feature value couplings using \textit{intra-feature outlier factors}, which consider the local context within a feature to yield value outlierness that is normalized and thus comparable across heterogeneous features. CUOT then leverages the inter-feature value couplings to model \textit{outlierness influence} between different values. It further integrates these two components into a \textit{value-value graph} and subsequently learns the outlierness of values by off-the-shelf graph mining techniques. The value graph representation is employed to support flexible integration and effective modeling of the outlier factors and their homophily couplings.

The defined outlierness of values can detect outliers in two ways: (i) by directly computing the outlier scores of objects through consolidating the outlierness of their values; (ii) by first measuring the relevance of a feature to outliers through consolidating the outlierness of the values in the feature, i.e., features with high outlierness are considered to be \textit{outlying features} and then selecting important features for subsequent outlier detection.

The CUOT framework is further instantiated into two instances by modeling the outlierness propagation on attributed value-value graphs using biased random walks. The first instance, namely Coupled Biased Random Walks (CBRW), defines an intra-feature outlier factor via a feature mode-based normalization and considers the mutual dependency of the outlierness of values from different features using the conditional probabilities of those values. The intra-feature outlier factors and their couplings are mapped onto a \textit{directed} attributed value-value graph and modeled by biased random walks \citep{gomez2008brws} to estimate the outlierness of all values. 

Another instance, called multiple-granularity Subgraph Densities-augmented Random Walks (SDRW), further handles noisy features. SDRW improves CBRW in the following aspects. SDRW defines a different outlierness influence scheme and works on an \textit{undirected} value graph. SDRW can therefore obtain an efficient parameter-free closed-form solution for learning value outlierness, whereas CBRW relies on power iterations for the learning. Moreover, SDRW adds a new subgraph density-based outlier factor to capture high-order homophily couplings to further enhance its tolerance to noisy features.

\subsection{Our Contributions}
Accordingly, this paper makes the following major contributions.
\renewcommand{\theenumi}{\roman{enumi}}
\begin{enumerate}
\item We introduce a new outlier detection task, outlier detection in non-IID multidimensional data. This task aims to leverage interdependent and heterogeneous outlier factors to identify outliers with non-IID outlying behaviors which are otherwise overlooked by traditional approaches using IID outlier factors.

\item A novel CUOT framework for this new task is proposed to estimate the outlier score of each \textit{value} by modeling the homophily couplings and heterogeneous distributions of value outlierness. Learning the value outlierness at the value level provides an effective and efficient way to model the non-IID outlying behaviors. Moreover, the value-level outlier scores are more fine-grained and flexible than the pattern-level scores. This approach makes outlying feature selection possible in addition to direct outlier detection.

\item CUOT is further instantiated by two methods, CBRW and SDRW, to model outlierness propagation on directed attributed value-value graphs in the case of CBRW and undirected value graphs in the case of SDRW. Both methods integrate intra-feature outlier factors and the couplings between the outlier factors in a seamless manner. Learning with such different outlier factors and their interactions enables the models to capture couplings that are genuinely relevant to outlier detection. We theoretically and empirically show that these models not only handle non-IID outlying behaviors but also handle data with many noisy features and/or low outlier separability (i.e., data containing only weakly relevant features). SDRW significantly enhances CBRW to be parameter-free, more computationally efficient, and more effective at handling homophily couplings and noisy features.



\item We quantify the data complexities of categorical data by four value-feature-object hierarchical data indicators from four different aspects. Further, a collection of data sets with such quantitative complexities is made available to promote the development and evaluation of outlier detection on complex data.  

\item This paper is built on its preliminary version \citep{pang2016outlier} and has made significant additional contributions. These include the generalization of CBRW to a novel generic framework, a new instantiation of the CUOT framework that significantly improves CBRW, and the comprehensive empirical evaluations using hierarchical data indicators from different aspects.

\end{enumerate}

Extensive experiments show that: (i) our SDRW- and CBRW-based outlier detection methods significantly outperform five state-of-the-art methods on 15 real-world data sets with different levels of non-IID outlying behaviors, outlier separability, and feature noise, which respectively achieve 16\%-28\% and 10\%-21\% AUC improvement on the 10 most complex data sets; (ii) the SDRW and CBRW methods run substantially faster than pattern-based methods; (iii) the SDRW- and CBRW-based feature selection methods can be used to significantly improve two different types of outlier detectors; (iv) the SDRW-based outlier detector performs significantly better than the CBRW-based detector, achieving more than 5\% average improvement on complex data sets; and (v) the proposed complexity indication of data sets is verified by the detection performance of various outlier detectors.

The rest of this paper is organized as follows. The problem statement and our proposed CUOT framework are detailed in Section \ref{sec:framework}. The two instances, CBRW and SDRW, are introduced in Section \ref{sec:cbrw} and \ref{sec:sdrw}, respectively. An analysis of how CBRW and SDRW handle non-IID outlying behaviors and their ability to handle low-separable/noisy data is presented in Section \ref{sec:thm}. Section \ref{sec:exp} outlines the experiment design, followed by the evaluation results in Section \ref{sec:results}. We discuss the related work in Section \ref{sec:relatedwork}. This work is then concluded with future research directions in Section \ref{sec:con}.

\section{Learning Non-IID Value Outlierness}\label{sec:framework}

\subsection{Problem Statement}

\textit{Outlier factor} is referred to as a function that assigns outlierness values to data entities, in which the \textit{entity} can be feature values, combinations of multiple values, features, and data objects, etc. For example, the inverse of the frequency of frequent patterns is a widely-used feature-value-level outlier factor in pattern-based outlier detection methods; $k$-th nearest-neighbor distance is a commonly used object-level outlier factor in distance-based methods. Non-IID outlier detection aims to learn the outlierness of a given entity by modeling the non-IID characteristics of a set of outlier factors, which is formally defined as follows.

\begin{definition}[Non-IID Outlier Detection]\label{def:noniidod}
Let $\mathsf{X} \in \mathbb{R}^{M}$ be a multivariate random variable composed by $M$ outlier factors. Then given an entity $e_i$, non-IID outlier detection methods define:
\begin{equation}\label{eqn:coupling}
	\mathsf{X}_{e_i} \not\!\perp\!\!\!\perp \mathsf{X}_{e_j},\; \exists j, 1 \leq j \leq M \; \& \; i \neq j,
\end{equation}
 or
 \begin{equation}\label{eqn:heterogeneity}
 	\mathsf{X}_{e_i} \thicksim \mathcal{D}_{i},\; \forall i, 1 \leq i \leq M.
 \end{equation}
 where $\mathcal{D}_{i}$ is an unknown distribution.
\end{definition}

Unlike most outlier detection methods that treat the outlier factors of the entities in an IID way, this definition considers the coupling relation between the outlier factors of different entities in Equation (\ref{eqn:coupling}); it also considers the heterogeneous distributions taken by different outlier factors in Equation (\ref{eqn:heterogeneity}).  The definition can be applied to different types of entities. For example, at the feature level, we may examine how the outlierness of features is affected by each other in data with interdependent mixed numeric and categorical features, e.g., how the outlierness of values in numeric features is dependent on the outlierness of their associated categorical values; at the object level, we may examine what sort of heterogeneities or couplings the object outlierness has in different clusters, e.g., how to understand the outlierness of centers in clusters of different sizes/shapes (heterogeneity) and how the outlierness of the objects in one cluster influences that in other clusters (interdependence). In an attempt to understand the low-level non-IID characteristics, this work focuses on exploring the non-IID outlier factors w.r.t. feature values. 

Let $\mathcal{X}$ be a set of data objects with size $N$, described by a set of $D$ categorical features  $\mathcal{F}=\{\mathsf{F}_1,\mathsf{F}_2,\cdots,\mathsf{F}_D\}$. Each feature $\mathsf{F} \in \mathcal{F}$ has a domain $\mathit{dom}(\mathsf{F})=\{v_{1},v_{2},\cdots\}$, which consists of a finite set of possible feature values. Note that the semantic of the domain in different features is different from each other, since each feature has a different context. We therefore assume that the domains between features are distinct, i.e., $\mathit{dom}(\mathsf{F}_i) \cap \mathit{dom}(\mathsf{F}_j) = \emptyset, \forall i \neq j$. The entire set of feature values, $\mathcal{V}$, is the union of all the feature domains: $\mathcal{V} = \cup_{\mathsf{F} \in \mathcal{F}} \mathit{dom}(\mathsf{F})$. Our problem can be stated as follows.

\begin{problem} To learn an outlierness scoring function $\phi:\mathcal{V} \mapsto \mathbb{R}$ with the assumption that: given a value $u \in \mathit{dom}(\mathsf{F}_i)$, its outlierness is based on an outlier factor $\mathsf{X}_{\mathsf{F}_i}$ with $\mathsf{X}_{\mathsf{F}_i} \not\!\perp\!\!\!\perp \mathsf{X}_{\mathsf{F}_j},\; \exists j, 1 \leq j \leq D \; \& \; i \neq j$ or $\mathsf{X}_{\mathsf{F}_i} \thicksim \mathcal{D}_{i},\; \forall i, 1 \leq i \leq D$, where $\mathsf{X}_{\mathsf{F}_i}$ characterizes the outlierness of the values in feature $\mathsf{F}_i$.
\end{problem}


\subsection{The Proposed CUOT Framework: A Graph Mining Approach}
The CUOT framework aims to incorporate intrinsic value interactions into graph representations to learn interdependent and heterogeneous outlier factors at the value level. The basic idea is to first map relevant heterogeneity and coupling information into a value-value graph, such as the graph in Fig. \ref{fig:toyexample} derived from the toy example in Table \ref{tab:toyexample} where we aim at detecting fraudulent users, and then we learn the outlierness of each node of the graph by using graph mining techniques. The graph representations are used because (i) learning from graph representations is a straightforward and effective way to capture homophily couplings, and many off-the-shelf graph mining techniques and theories can be used to support such learning; and (ii) a variety of graph representations, such as directed/undirected graphs and attributed/plain graphs, provides a multitude of options for fusing a collection of the above two different components.

\begin{figure}[h!]
\CenterFloatBoxes
\begin{floatrow}
\ttabbox
  {\caption{A Toy Example of Fraud Detection, which contains one outlier and 11 normal objects.}\label{tab:toyexample}}
  {
  \scalebox{0.75}{
  \begin{tabular}{cp{1.0cm}p{1.2cm}p{1.0cm}p{1.0cm}p{1.0cm}}
    \toprule
    \textbf{ID} & \textbf{Gender} & \textbf{Education} & \textbf{Marriage} & \textbf{Income} & \textbf{Cheat?} \\
    \midrule
    1     & male  & master & divorced & low   & yes \\
    2     & female & master & married & medium & no \\
    3     & male  & master & single & high  & no \\
    4     & male  & bachelor & married & medium & no \\
    5     & female & master & divorced & medium  & no \\
    6     & male  & PhD   & married & high  & no \\
    7     & male  & master & single & high  & no \\
    8     & female & PhD   & single & medium & no \\
    9     & male  & PhD   & married & medium & no \\
    10    & male  & bachelor & single & low   & no \\
    11    & female & PhD   & married & medium & no \\
    12    & male  & master & single & low   & no \\
    \bottomrule
    \end{tabular}%
  }
  }

\ffigbox
  {\includegraphics[width=0.40\textwidth]{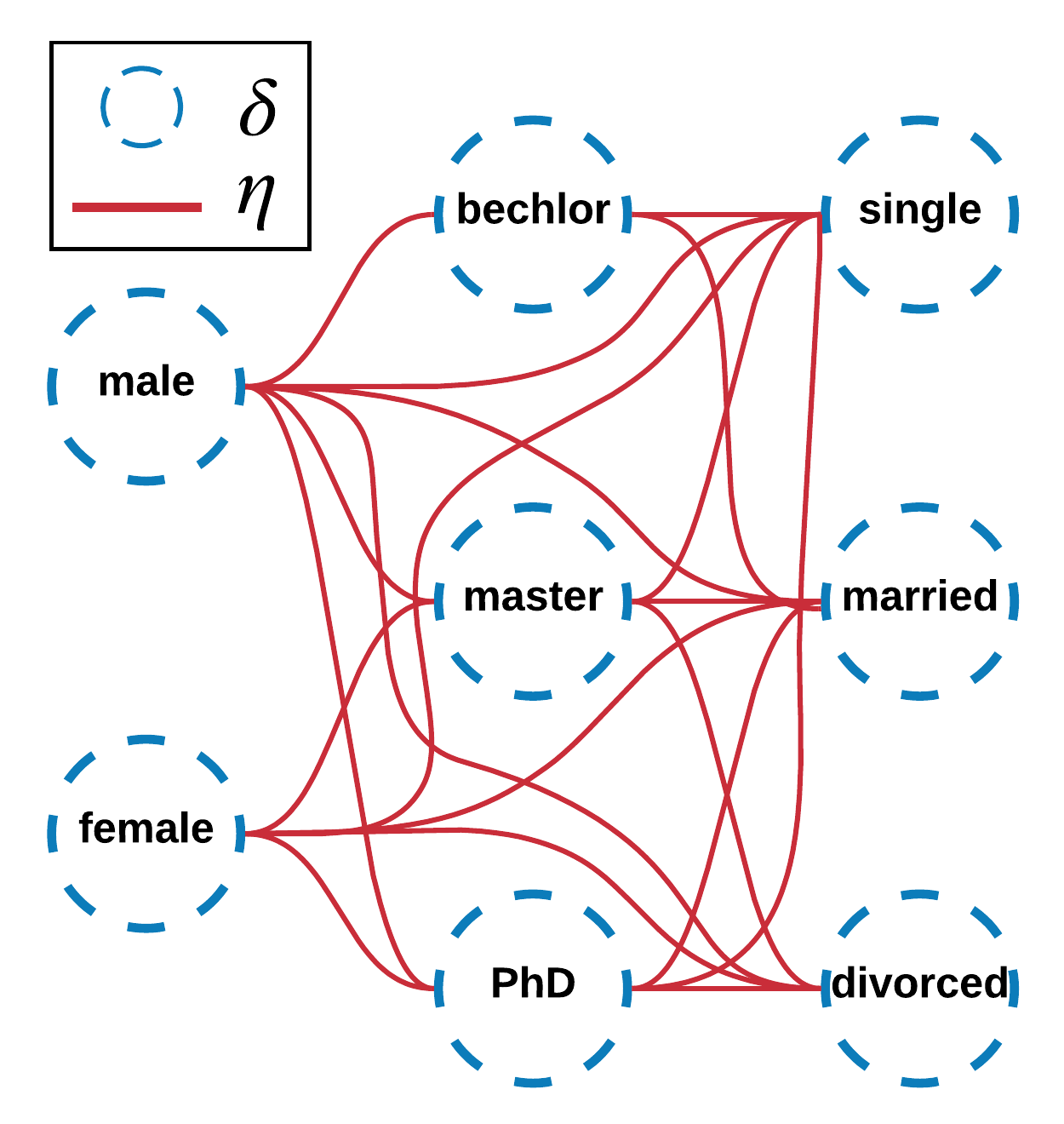}}
  {\caption{A (partial) Value-Value Graph. The $\delta$ outlier factor is applied to each node, while the $\eta$ factor is applied to each edge.}
\label{fig:toyexample}}

\end{floatrow}
\end{figure}

The procedure of the proposed framework is shown in Fig. \ref{fig:framework}. CUOT first leverages intra- and inter-feature value coupling information to capture the intrinsic data characteristics w.r.t. the outlierness of values. Specifically, given a feature $\mathsf{F} \in \mathcal{F}$, $\delta:\mathit{dom}(\mathsf{F})\mapsto \mathbb{R}$ is defined as an \textit{intra-feature outlier factor} that computes an initial outlierness of each value based on the local value couplings within the given feature. Given a feature subspaces of size two,  $\mathcal{S} \subseteq \mathcal{F}$, which is a \textit{Cartesian} product set of two features, i.e., $\mathit{dom}(\mathsf{F}_{k_{1}})\times \mathit{dom}(\mathsf{F}_{k_{2}})$, we define an \textit{inter-feature outlierness influence} between values as $\eta:\mathcal{S} \mapsto \mathbb{R}$ which considers the pairwise relations of $D$ intra-feature outlier factors w.r.t. the outlierness of a value. This results in a square matrix $\mathbf{M}_{\eta}$ that contains all pairs of value-value outlierness couplings. CUOT then maps these two components, $\delta$ and $\mathbf{M}_{\eta}$, to a \textit{value-value graph} $\mathsf{G}=<\mathcal{V},\mathcal{E},\omega_{\delta,\eta}>$ where $\mathcal{V}$ represents a set of nodes and each node represents a feature value, $\mathcal{E}$ denotes a set of edges, and $\omega:\mathcal{V}\times\mathcal{V}  \mapsto \mathbb{R} $ is an edge weighting function based on $\delta$ and $\eta$. We further define a graph-based scoring function $\phi$ for the final \textit{value outlierness estimation}. Additionally, before learning the final value outlierness, CUOT offers an option to extract a set of subgraphs, $\{\mathsf{H}_1, \mathsf{H}_2, \cdots\}$, from $\mathsf{G}$ to refine the value graph representation to have a better estimation of value outlierness.

 \begin{figure}[h!]
  \centering
  \includegraphics[width=0.45\textwidth]{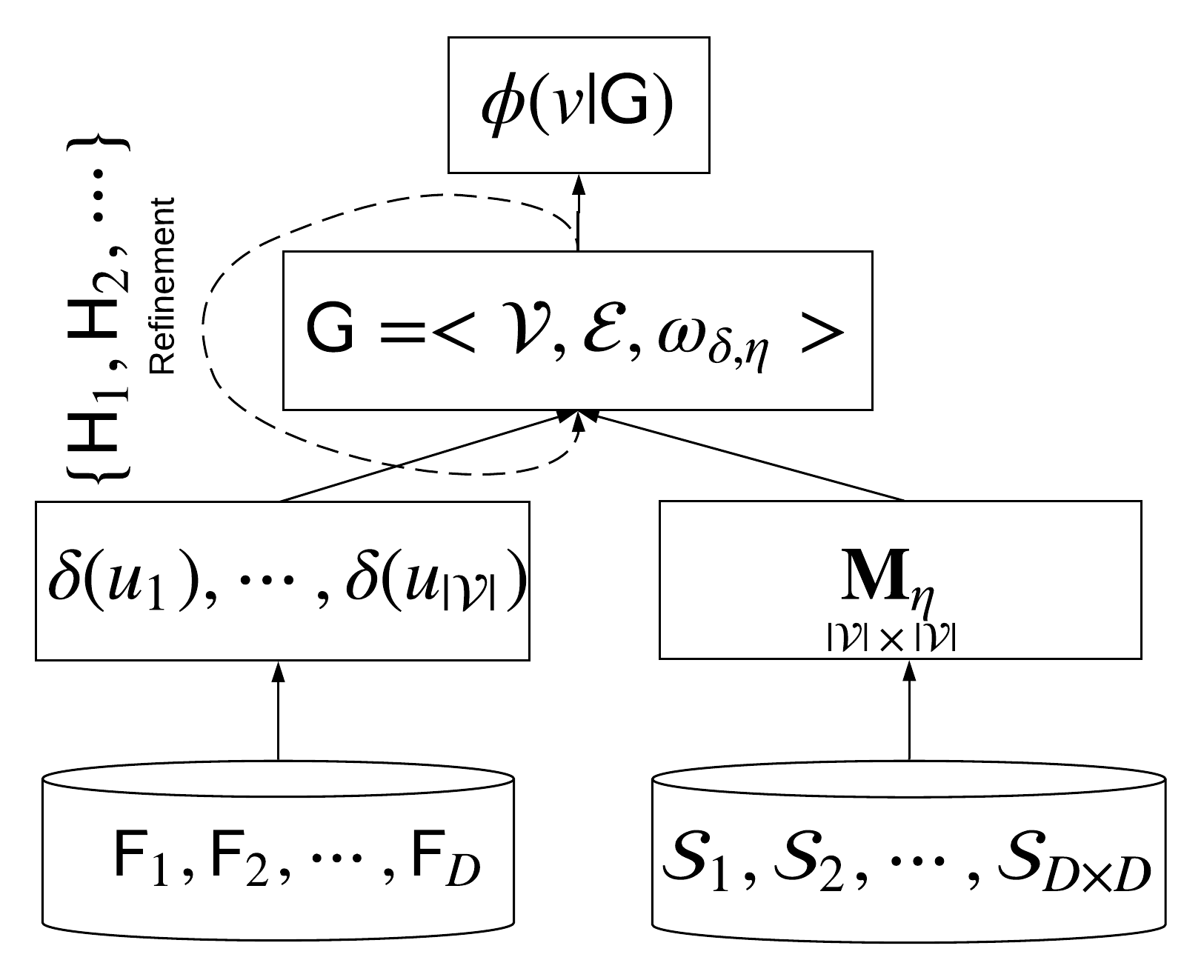}
   \caption{The Proposed CUOT Framework. $\mathsf{F}_i$ denotes an individual feature. $\mathcal{S}_{j}$ is a feature subset that contains a pair of features. $\mathcal{V}$ denotes the entire value set. $u_i \in \mathcal{V}$ is a value. $\delta$ computes an initial outlierness of feature values. $\eta$ considers the inter-feature value couplings that highlight the homophily relations between outlying values. $\mathbf{M}_{\eta}$ is a $|\mathcal{V}| \times |\mathcal{V}|$ matrix whose entries are determined by $\eta$. $\mathsf{G}$ denotes the value-value graph, $\omega$ is an edge weighting function based on $\delta$ and $\eta$, and $\phi$ is the value outlierness learning function on the value graph.}
   \label{fig:framework}
 \end{figure}

Essentially, the intra-feature outlier factor $\delta$ is used to address heterogeneous distributions, such as Bernoulli distribution taken by feature \textit{Gender} vs. categorical distributions in the other features in Table \ref{tab:toyexample}. On the other hand, inter-feature outlierness influence $\eta$ is used to link the $D$ intra-feature outlier factors and project all these relevant information into a graph. Since outlierness influence is designed to capture the homophily couplings among outlying behaviors, it is assumed that outlying behaviors are coupled with each other by co-occurrence. 
Due to their rarity, outlying behaviors exhibit strong homophily couplings as long as they are contained by a few outliers. Learning from such an outlier factor-enriched graph enables us to effectively estimate the interdependent outlierness of heterogeneous values, e.g., how the outlierness of \textit{getting divorced} and \textit{low income} affect on each other w.r.t. credit card fraud detection.

CUOT detects outliers in a way fundamentally different from existing frameworks in terms of three major aspects. First, CUOT leverages intrinsic value couplings and graph representations to capture the intra-feature value outlierness and the outlierness interdependence, resulting in a more reliable outlierness estimation in real-world data with non-IID outlying behaviors, while the existing frameworks compute the outlierness of different behaviors in an IID way. Second, CUOT learns the outlierness of \textit{values}, which is more fine-grained and flexible than the pattern-level outlierness. Third, CUOT produces value outlierness that can determine feature selection for subsequent outlier detection or directly identify outliers, whereas the existing frameworks are only aimed for direct outlier detection.

\section{CUOT's Instance I: CBRW for Estimation of Non-IID Value Outlierness} \label{sec:cbrw}

This section introduces an instantiation of CUOT, called Coupled Biased Random Walks (CBRW). CBRW works as follows. CBRW computes an initial outlierness based on the deviation of the value's frequency from the mode's frequency, and then defines a conditional probability-based outlierness influence vector. CBRW further integrates the two components in a seamless manner via a directed and attributed value graph. It finally estimates the value outlierness according to the stationary probabilities of biased random walks over the value graph. CBRW addresses the non-IID outlier factors at the value level. The mode-based normalization in the intra-feature outlier factor is designed to alleviate the heterogeneity problem, while the biased random walks on the value graph is to model the homophily couplings between outlier factors.

\subsection{Mode-based Intra-feature Outlier Factor} \label{subsec:nodeproperty}

Per the definition of outliers, the outlierness of a feature value is dependent on its rarity. CBRW employs the frequency of the \textit{mode} of a feature as a rarity comparison benchmark and examines the deviation of the value frequency to evaluate the intra-feature outlierness of a value.

Let $\mathit{supp}(v)=\left | \{ \mathbf{x}_i \in \mathcal{X}| x_{ij} = v \}\right|$ , $v \in \mathit{dom}(\mathsf{F}_{j})$, be the \textit{support} of the value $v$. Each feature $\mathsf{F}$ is associated with either a categorical distribution or a generalized Bernoulli distribution, where $\mathsf{F}$ takes on one of the possible values $v\in \mathit{dom}(\mathsf{F})$ with a frequency $\mathit{freq}(v)=\frac{\mathit{supp}(v)}{N}$. The intra-feature outlierness serves as an initial value outlierness and is specified as follows.

\begin{definition}[Mode]\label{def:mode} A \emph{mode} of a categorical distribution of a feature $\mathsf{F} \in \mathcal{F}$, denoted as $m$, is defined as a value $u_{i} \in \mathit{dom}(\mathsf{F})$ such that

\begin{equation}
   \mathit{freq}(u_{i})= max(\mathit{freq}(u_1),\mathit{freq}(u_2),\cdots,\mathit{freq}(u_O)), 
\end{equation} 
where $O$ is the number of possible values in $\mathsf{F}$. 
\end{definition}

\begin{definition}[Mode-based Initial Outlierness]\label{def:intraod} The mode-based intra-feature outlierness of a feature value $v\in \mathit{dom}(\mathsf{F})$ is defined by the frequency of the mode and the extent that the value's frequency deviates from the mode's frequency
\begin{equation}\label{eqn:mad}
\delta(v)= \mathit{base}(m) +  \mathit{dev}(v),
\end{equation}
where $base(m)=1-\mathit{freq}(m)$ denotes the outlierness of the feature mode $m$ and $dev(v)=\frac{\mathit{freq}(m)-\mathit{freq}(v)}{\mathit{freq}(m)}$ denotes the outlierness of value $v$ compared to the mode. Note that since we only consider the coupling with the mode of the features, $\delta_{m}(v)$ is hereafter simplified to $\delta(v)$ for brevity.
\end{definition}

As the location parameter (or the center) of a categorical distribution, the mode has the same semantic for different features. As shown in the following two key properties of function $\delta(\cdot)$, this specification not only guarantees the efficiency but also helps normalize the initial outlierness.

\begin{enumerate}
\item $\forall v \in \mathcal{V}$, $\delta(v) \in (0,2)$\footnote{We have ignored features with $\mathit{freq}(m)=1$, as those features contain no useful information relevant to outlier detection.}.
\item $\delta(\cdot)$ makes the intra-feature outlierness of values from features with different categorical distributions semantically comparable.
\end{enumerate}

Since $\mathit{base}(m)\in (0,1)$ and $\mathit{dev}(v)\in [0,1)$, we have $\delta(v) \in (0,2)$. For the second property, when two distributions are different in terms of their location parameters, the values drawn from these two distributions are not comparable without proper normalization. In $\delta(\cdot)$, the outlierness of the mode serves as a base, and the more the frequency of a feature value deviates from the mode frequency, the more outlying that value is. This results in a mode-based normalization, making the value outlierness comparable across features. For example, in Table \ref{tab:toyexample}, the feature values `\textit{bachelor}' and `\textit{divorced}' have the same frequency $\frac{1}{6}$, leading to the same outlierness in traditional pattern-based methods. However, the frequency distributions are different in the features `\textit{Education}' and `\textit{Marriage}', i.e., $\{\frac{1}{2},\frac{1}{3},\frac{1}{6}\}$ and $\{\frac{5}{12},\frac{5}{12},\frac{1}{6}\}$, respectively. Therefore, the two frequencies indicate different outlierness even if they are equal to each other. In our method, by applying $\delta$, the resulting initial outlierness of the value `\textit{bachelor}' is $0.58$ while that of the value `\textit{divorced}' is $0.59$.

Note that the number of values may differ largely in different features, and they may exist more than one mode in single features. $\delta$ is built on the frequency of modes, so its output outlierness is not affected by the multiple modes case since the modes in the same feature always have the same frequency. Also, the frequency of modes represents the central tendency of a given variable and thus indicates the regularity underlying the variable. This is particularly true when the sample size is sufficiently large. Therefore, deviating from the mode frequency leads to irregularity, indicating large outlierness. Additionally, mode is also a robust statistic as it is insensitive to \textit{outlying values} (i.e., infrequent values contained by outliers), making $\delta$ an effective outlierness measure for features with different number of outlying values. Other statistics like skewness or kurtosis may be adapted and used together with modes to consider the skewness or tailedness of categorical distributions in $\delta$. These high-order measures can improve $\delta$ when there are sufficiently large number of values in each feature, but they fail to work in features with two or only a few values and largely bias $\delta$. We therefore consider the mode-based $\delta$ only for the applicability in broad real-life domains where the number of values in each feature is very diverse and many of them may have only two values.

\subsection{Conditional Probability-based Outlierness Influence}\label{subsec:cp}

There is one critical condition for specifying function $\eta$ in the outlierness influence vector to capture the homophily outlying couplings: $\eta$ should be capable of contrasting the strong couplings between outlying values from the couplings between other values. Below, we discuss how the conditional probability-based $\eta$ satisfies this condition.

\begin{definition}[Conditional Probability-based Outlierness Influence Vector]\label{def:weight}
The outlierness influence vector of a value $v$ due to the other values is defined as
\begin{equation}\label{eqn:weight}
\mathbf{q}_{v}= [\eta(u_1,v), \cdots, \eta(u_{|\mathcal{V}|},v)]^\intercal=[\frac{\mathit{freq}(u_1,v)}{\mathit{freq}(v)},\cdots, \frac{\mathit{freq}(u_{|\mathcal{V}|},v)}{\mathit{freq}(v)}]^\intercal,
\end{equation}
where $\mathit{freq}(u_l,v)=\frac{\mathit{supp(u_l,v)}}{N}$ with $\mathit{supp}(u_l, v)=\left | \{ \mathbf{x}_i \in \mathcal{X}| x_{ij} = u_l \; \& \; x_{ik} = v \}\right|$, $u_l \in \mathit{dom}(\mathsf{F}_{j})$ and $v \in \mathit{dom}(\mathsf{F}_{k})$.
\end{definition}

CBRW considers the interactions of the value $v$ with all the values. Recall that $\eta(u,v)=0$ if $u$ and $v$ are from the same features, so the vector $\mathbf{q}$ captures the outlierness influence based on inter-feature value couplings. Its entry, $\eta(u,v)$, is essentially the conditional probability of $u$ given $v$, and it has three key properties.
\begin{enumerate}
\item $\eta(u,v)\in [0,1]$.
\item $\eta(u,v) \neq \eta(v,u)$ if $\mathit{freq}(u) \neq \mathit{freq}(v)$.
\item $\eta(u,v) > 0$ if $\eta(v,u) > 0$; and $\eta(u,v) = 0$ if $\eta(v,u) = 0$.
\end{enumerate}

Since $0 \leq \mathit{freq}(u,v) \leq \mathit{freq}(v) < 1$, we have $\eta(u,v)\in [0,1]$. The second and third properties follow directly from the property statements. Considering this outlierness influence helps CBRW to distinct outlying values from \textit{noisy values} (i.e., infrequent values contained by normal objects). This is because although noisy values may also have low individual frequencies, they rarely co-occur together if they are randomly distributed, leading to smaller $\eta$ than outlying values.  For example, compared to the noisy value `\textit{bachelor}', although the outlying value `\textit{low}' has lower outlierness by only considering the intra-feature outlier factor, it can have much higher final outlierness when adding inter-feature outlierness influence due to its strong couplings with the value `\textit{divorced}'. Note that \textit{normal values} (i.e., frequent values contained by normal objects), such as `\textit{married}' and `\textit{medium}', may have strong interdependency if they co-occur frequently, so they have strong outlierness influence (e.g., large $\eta$) on each other. This shows that the conditional probability-based $\eta$ sometimes may not be able to distinguish the couplings between outlying values from the couplings between frequently co-occurred normal values. We will address this issue in the second instance of CUOT in Section \ref{subsec:pmi}. However, in CBRW, large $\eta$ does not necessarily lead to high outlierness because the final outlierness is determined by both $\delta$ and $\eta$ associated with a feature value (see Section \ref{subsec:brw1} for detailed discussions). Thus, CBRW can still work well in most cases.

\subsection{Directed and Attributed Value Graph}\label{subsec:directedgraph}

It is challenging to properly integrate $\delta$ and $\mathbf{q}$ since they are of different lengths, i.e., $\delta(v)$ is a scalar while $\mathbf{q}_{v}$ is a $|\mathcal{V}|$-dimensional vector. CBRW tackles this challenge by mapping these two components onto an attributed value-value graph as follows.

\begin{definition}[Attributed Value-value Graph]\label{def:valuegraph}
The attributed value-value graph $\mathsf{G}$ is described by $\mathsf{G}=<\mathcal{V},\mathcal{E},\omega_{\delta,\eta}>$, where
\begin{itemize}
\item $\mathcal{V}$ represents the node set and each node $v\in \mathcal{V}$ represents a feature value.
\item $\mathcal{E}$ denotes a set of edges connecting the nodes, i.e., $\mathcal{E} \subseteq \mathcal{V}\times\mathcal{V}$.
\item $\delta(\cdot):\mathcal{V} \mapsto (0,1)$ is a node property mapping function using the intra-feature outlier factor in Equation (\ref{eqn:mad})\footnote{$\delta$ is normalized into the range in (0,1) to work well with $\eta$.}. 
\item $\eta(\cdot,\cdot):\mathcal{V}\times\mathcal{V}  \mapsto [0,1]$ is an initial edge weighting function using the outlierness influence vector in Equation (\ref{eqn:weight}).
\end{itemize}
\end{definition}

It is easy to see that the graph $\mathsf{G}$ is a \textit{directed} and \textit{weighted} graph without self loops according to the properties of its edge weighting function $\eta$ in Section \ref{subsec:cp}. The edge weighting function $\eta$ is different from the conventional methods that are built on similarities between the nodes. It is used because the conditional probabilities are simple and they well capture the desired homophily relationships between outlying behaviors. Note that although we do not explicitly specify the edge weighting function $\omega$, but we show in the next section that $\omega$ is equivalent to a linear combination of the $\delta$ and $\eta$ functions in a special type of random walks over the graph.

\subsection{Biased Random Walks for Learning Value Outlierness}\label{subsec:brw1}

CBRW then builds \textit{biased} random walks (BRWs) on the value graph $\mathsf{G}$ to learn the value outlierness. Let $\mathbf{A}$ be an adjacency matrix of $\mathsf{G}$, where $\mathbf{A}(u,v)$ denotes the outgoing edge weight from node $u$ to node $v$. Then we have 

\begin{equation}\label{eqn:adja}
\mathbf{A}(u,v)=\mathbf{M}_{\eta}(u,v).
\end{equation}

In building \textit{unbiased} random walks (URWs), we can obtain a \textit{walking (or transition) matrix} $\mathbf{W}$ by

\begin{equation}\label{eqn:basicform}
\mathbf{W}=\mathbf{A}\mathit{diag}(\mathbf{A})^{-1},
\end{equation}
\noindent where $\mathit{diag}(\mathbf{A})$ denotes the diagonal matrix of $\mathbf{A}$ with its $u$-th diagonal entry $d(u)=\sum_{v\in V}\mathbf{A}(u,v)$. The entry $\mathbf{W}(u,v)= \frac{\mathbf{A}(u,v)}{d(u)}$ represents the probability of the transition from node $u$ to node $v$, which satisfies $\sum_{v\in V}\mathbf{W}(u,v)=1$. 

However, URWs omit the $\delta$-based intra-feature outlierness and only consider the $\eta$-based inter-feature outlierness influence. Here, CBRW uses BRWs to introduce the intra-feature outlierness as a bias into the random walk process. This helps capture the intra-feature value couplings and the joint effects they may have with the inter-feature value couplings on the subsequent outlierness estimation. The entry of the corresponding transition matrix is defined as

\begin{equation}\label{eqn:brws1}
\mathbf{W}^{b}(u,v)=\frac{\delta(v)\mathbf{A}(u,v)}{\sum_{v\in V}\delta(v)\mathbf{A}(u,v)}.
\end{equation}

$\mathbf{W}^{b}(u,v)$ can be interpreted as that the transition from node $u$ to node $v$ has a probability proportional to $\delta(v)\mathbf{A}(u,v)$. Thus, every random move is jointly determined by the intra-feature outlierness and inter-feature outlierness influence. 

CBRW essentially simulates an outlierness propagation process over the value graph to model the homophily couplings between outlying values. The inter-feature influence vector maintains the strength of homophily couplings between outlying values during the outlierness propagation process, while the intra-feature initial outlierness enables outlying values to attract more outlierness. As a result, if $u$ and $v$ are strongly coupled and they have large outlierness, the outlierness propagation from $u$ to $v$ would be large. $v$ has large outlierness if there are many nodes having a similar relationship as $u$ to $v$. Similarly, $u$ has large outlierness if it is coupled with many outlying values. Such a cascade outlierness of each node can be effectively captured by the probability of the random walker visiting the node.

Let the vector $\boldsymbol\pi_{t}\in \mathbb{R}^{|\mathcal{V}|}$ denotes the \textit{probability distribution} of the biased random walk at time step $t$, i.e., the probability of a random walker visiting any given node at the $t$-th step. Then we have

\begin{equation}\label{eqn:updateform}
\boldsymbol\pi^{t+1} = \mathbf{W}^{b}\boldsymbol\pi^{t}.
\end{equation}

We can accordingly obtain the following theorem.

\begin{theorem}[Convergence of CBRW]\label{thm:convg}
If $\mathsf{G}$ is irreducible and aperiodic, CBRW will converge, i.e., $\boldsymbol\pi$ converges to a unique \textit{stationary probability} vector $\boldsymbol\pi^{*}$ such that $\boldsymbol\pi^{*}=\mathbf{W}^{b}\boldsymbol\pi^{*}$. 
\end{theorem}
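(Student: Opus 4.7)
The plan is to view the update $\boldsymbol\pi^{t+1}=\mathbf{W}^{b}\boldsymbol\pi^{t}$ as the evolution of a finite-state Markov chain on the vertex set $\mathcal{V}$ and reduce the claim to the classical Perron--Frobenius / fundamental convergence theorem for finite Markov chains. The proof thus amounts to checking that the three standard hypotheses -- nonnegativity/stochasticity, irreducibility, aperiodicity -- are inherited by $\mathbf{W}^{b}$ from $\mathsf{G}$, after which convergence and uniqueness follow immediately from a textbook result.

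First I would verify that $\mathbf{W}^{b}$ is a well-defined stochastic matrix. Nonnegativity is clear from Equation (\ref{eqn:brws1}) since both $\delta(v)$ and $\mathbf{A}(u,v)$ are nonnegative, and the rows sum to 1 by the choice of normalizing denominator $\sum_{v}\delta(v)\mathbf{A}(u,v)$. Well-definedness requires this denominator to be strictly positive for every $u$; this is guaranteed by (i) the bound $\delta(v)\in(0,2)$ for all $v\in\mathcal{V}$ established in Section~\ref{subsec:nodeproperty} (together with footnote~3, which excludes features with $\mathit{freq}(m)=1$), and (ii) irreducibility of $\mathsf{G}$, which forces every node $u$ to have at least one outgoing edge with $\mathbf{A}(u,v)>0$.

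Second I would observe that the biased chain has exactly the same support as $\mathsf{G}$. Because $\delta(v)>0$ strictly, we have $\mathbf{W}^{b}(u,v)>0$ if and only if $\mathbf{A}(u,v)>0$, so the directed graph of positive-probability transitions of $\mathbf{W}^{b}$ coincides with $(\mathcal{V},\mathcal{E})$. Consequently, irreducibility (every pair of nodes is joined by a directed path of positive-probability transitions) and aperiodicity (the gcd of directed cycle lengths at any state equals 1) transfer from $\mathsf{G}$ to the biased chain unchanged. With these properties in hand, the Perron--Frobenius theorem for irreducible aperiodic nonnegative matrices yields: $1$ is a simple eigenvalue of $\mathbf{W}^{b}$; all other eigenvalues have modulus strictly less than $1$; the associated eigenvector $\boldsymbol\pi^{*}$ is strictly positive and, after normalization to a probability distribution, unique; and for any initial probability vector $\boldsymbol\pi^{0}$, the iterates $\boldsymbol\pi^{t}$ converge to $\boldsymbol\pi^{*}$ satisfying $\boldsymbol\pi^{*}=\mathbf{W}^{b}\boldsymbol\pi^{*}$.

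The main obstacle, modest as it is, lies in the second step rather than in any appeal to Perron--Frobenius: one has to be sure that the $\delta$-biasing does not silently zero out any edge of $\mathsf{G}$ and thereby break its connectivity or cycle structure. The strict positivity of $\delta$ (ultimately a consequence of the exclusion of degenerate features noted in footnote~3) is precisely what rules this out, so the bias only rescales the positive entries of $\mathbf{A}$ and the Markov chain argument goes through without modification.
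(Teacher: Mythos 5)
Your proof is correct and follows essentially the same route as the paper's: both reduce the claim to the Perron--Frobenius theorem for irreducible aperiodic chains after observing that the strict positivity of $\delta$ leaves the support (hence the irreducibility and aperiodicity) of the transition structure unchanged. The only cosmetic difference is that the paper routes this observation through an intermediate lemma identifying the biased walk on $\mathbf{A}$ with an unbiased walk on the reweighted adjacency matrix $\mathbf{B}(u,v)=\delta(u)\mathbf{A}(u,v)\delta(v)$, whereas you argue directly on the support of $\mathbf{W}^{b}$.
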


The theorem states that $\boldsymbol\pi$ will converge to a stationary probability distribution $\boldsymbol\pi^{*}$ if the graph $\mathsf{G}$ is irreducible and aperiodic, i.e., $\boldsymbol\pi^{*}=\mathbf{W}^{b}\boldsymbol\pi^{*}$. This means that the stationary probabilities of the nodes are independent of the initialization of $\boldsymbol\pi$, and they are positively correlated to the incoming weights of the nodes. Motivated by this, we define the final value outlierness as follows.

\begin{definition}[CBRW-based Value Outlierness]\label{def:score}
The outlierness of node $v$ is defined by its stationary probability
\begin{equation}\label{eqn:score}
\phi(v)=\boldsymbol\pi^{*}(v),
\end{equation}
where $\boldsymbol\pi^{*}(v)$ is the entry w.r.t. the value $v$ in the stationary probability vector, $0< \boldsymbol\pi^{*}(v) <1 $ and $\sum_{v \in \mathcal{V}}\boldsymbol\pi^{*}(v) =1$.
\end{definition}

The value $v$ has large outlierness iff it demonstrates outlying behaviors within the feature and co-occurs with many other outlying values. This is because $\boldsymbol\pi^{*}(v)$ is proportional to $\mathbf{W}^{b}(u,v)$, which is determined by $\delta(v)$ and $\eta(u,v)$.

\subsection{The Algorithm of CBRW and Its Time Complexity}\label{subsec:alg}

The steps in CBRW are outlined in Algorithm \ref{alg:cbrw}. Steps 1-8 obtain the intra- and inter-feature value couplings. The matrix $\mathbf{W}^{b}$ is then generated based on Equations (\ref{eqn:mad}), (\ref{eqn:weight}), and (\ref{eqn:brws1}). 

\renewcommand{\algorithmicrequire}{\textbf{Input:}}
\renewcommand{\algorithmicensure}{\textbf{Output:}}
\begin{algorithm}
\caption{Coupled Biased Random Walk}
\label{alg:cbrw}
\begin{algorithmic}[1]
\REQUIRE $\mathcal{X}$ - data objects,  $\alpha$ - damping factor
\ENSURE $\boldsymbol\pi^{*}$ - the stationary probability distribution
\FOR{ $i = 1$ to $D$}
\STATE Compute $\mathit{freq}(v)$ for each $v\in dom(\mathsf{F}_i)$
\STATE Find the mode of $\mathsf{F}_i$ 
\STATE Compute $\delta(v)$
\FOR{ $j = i+1$ to $D$}
\STATE Compute $\mathit{freq}(u,v)$, $\forall u\in dom(\mathsf{F}_j)$
\ENDFOR
\ENDFOR
\STATE Generate the matrix $\mathbf{W}^{b}$
\STATE Initialize $\boldsymbol\pi^{*}$ as a uniform distribution
\REPEAT
\STATE $\boldsymbol\pi^{*} \leftarrow (1-\alpha)\frac{1}{|\mathcal{V}|}\mathbf{1}+\alpha\mathbf{W}^{b}\boldsymbol\pi^{*}$
\UNTIL Convergence, i.e., $|\Delta \boldsymbol\pi^{*}| \leq 0.001$ or reach the maximum iteration $I_{max}=100$ 
\RETURN $\boldsymbol\pi^{*}$
\end{algorithmic}
\end{algorithm}

Following \citep{page1999pagerank}, Step 12 introduces the damping factor $\alpha$ into Equation (\ref{eqn:updateform}) to guarantee the convergence of the random walks
\begin{equation}\label{eqn:updateform2}
\boldsymbol\pi^{t+1} = (1-\alpha)\frac{1}{|\mathcal{V}|}\mathbf{1} + \alpha\mathbf{W}^{b}\boldsymbol\pi^{t}.
\end{equation}

This is because different data sets may contain very different value couplings; therefore, the assumption that the graph $\mathsf{G}$ is irreducible and aperiodic may not always hold in practice. The damping factor in Equation (\ref{eqn:updateform2}) is commonly used to remedy this problem, which is justified as follows.

\begin{corollary}[Teleporting Random Walks Guaranteeing Convergence]\label{thm:convg_trw}
By setting $\mathbf{W}^{b}=(1-\alpha)\frac{1}{|\mathcal{V}|}\mathbf{1}+\alpha\mathbf{W}^{b}$, where $\alpha \in [0,1)$, $\mathbf{W}^{b}\boldsymbol\pi$ will always converge to a unique probability vector $\boldsymbol\pi^{*}$, i.e., $\boldsymbol\pi^{*}=\mathbf{W}^{b}\boldsymbol\pi^{*}$.
\end{corollary}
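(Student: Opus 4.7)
The strategy is to reduce this corollary directly to Theorem \ref{thm:convg} by showing that the addition of a strictly positive uniform teleportation term forces the modified transition matrix to induce an irreducible and aperiodic Markov chain on the value graph, regardless of the original connectivity structure of $\mathsf{G}$. Interpreting the symbol $\mathbf{1}$ in the statement as the all-ones matrix of size $|\mathcal{V}|\times|\mathcal{V}|$, write $\widetilde{\mathbf{W}} := (1-\alpha)\tfrac{1}{|\mathcal{V}|}\mathbf{1} + \alpha\mathbf{W}^{b}$ for the teleporting transition matrix; the goal is to verify the two hypotheses of Theorem \ref{thm:convg} for $\widetilde{\mathbf{W}}$ and then invoke it.

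First I would check that $\widetilde{\mathbf{W}}$ is still a valid stochastic matrix. The original $\mathbf{W}^{b}$ from Equation (\ref{eqn:brws1}) has its transition weights normalized to sum to one, and the uniform matrix $\tfrac{1}{|\mathcal{V}|}\mathbf{1}$ is stochastic by construction. Because $1-\alpha$ and $\alpha$ are non-negative weights that sum to $1$, the convex combination $\widetilde{\mathbf{W}}$ is stochastic as well, so the iteration $\boldsymbol\pi^{t+1} = \widetilde{\mathbf{W}}\boldsymbol\pi^{t}$ preserves total probability mass and indeed defines a legitimate Markov chain on $\mathcal{V}$.

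The key observation comes next: since $\alpha \in [0,1)$ we have $1-\alpha > 0$, so for every pair of values $u,v$,
\[
\widetilde{\mathbf{W}}(u,v) \;\geq\; \frac{1-\alpha}{|\mathcal{V}|} \;>\; 0.
\]
From this single uniform lower bound two structural properties follow simultaneously: (a) from every node $u$ there is a positive-probability one-step transition to every node $v$, so the chain is strongly connected and therefore \emph{irreducible}; and (b) each diagonal entry $\widetilde{\mathbf{W}}(v,v)$ is strictly positive, i.e.\ every state carries a self-loop, so the greatest common divisor of return times at every state is $1$, making the chain \emph{aperiodic}.

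With both hypotheses of Theorem \ref{thm:convg} now verified for $\widetilde{\mathbf{W}}$, the theorem applies to yield a unique stationary probability vector $\boldsymbol\pi^{*}$ with $\boldsymbol\pi^{*} = \widetilde{\mathbf{W}}\boldsymbol\pi^{*}$, which is precisely the conclusion of the corollary. There is no real obstacle in this argument; the only conceptually delicate point is recognizing that however small the teleportation weight $1-\alpha$ is (so long as it is strictly positive), it uniformly lower-bounds every entry of the transition matrix and therefore automatically repairs both potential obstructions to convergence, mirroring the classical PageRank convergence guarantee of \citep{page1999pagerank}.
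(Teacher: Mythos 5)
Your proposal is correct and follows essentially the same route as the paper's own proof: the paper likewise observes that adding $(1-\alpha)\frac{1}{|\mathcal{V}|}\mathbf{1}$ makes the transition matrix a strictly positive stochastic matrix, which forces irreducibility and aperiodicity, and then concludes convergence via Theorem \ref{thm:convg} (i.e., Perron--Frobenius). You simply spell out the details the paper leaves as ``obvious'' --- the entrywise lower bound, the preservation of stochasticity under the convex combination, and why positivity yields both irreducibility and aperiodicity --- which is a faithful elaboration rather than a different argument.
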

\begin{proof}
It is obvious that $\mathbf{W}^{b}$ becomes a real positive square matrix by the addition of $(1-\alpha)\frac{1}{|\mathcal{V}|}\mathbf{1}$. This guarantees that $\mathbf{W}^{b}$ is irreducible and aperiodic. We therefore will always have $\boldsymbol\pi^{*}=\mathbf{W}^{b}\boldsymbol\pi^{*}$.
\end{proof}

In our experiments, we set $\alpha=0.95$ directly rather than learning the parameter. There are two main reasons to do this. (i) The parameter $\alpha$ has an explicit meaning, so users can easily determine their own setting based on the application contexts. A discussion on this issue is presented in Appendix \ref{subsec:alpha}. (ii) Our empirical results show that CBRW performs very stably with a wide range of values for $\alpha$, i.e., $\alpha \in [0.85,1)$. Therefore, employing advanced procedures to learn the parameter may not have an obvious benefit in terms of detection performance.

CBRW requires $O(ND^2)$ to obtain the value couplings information in Steps 1-8. The generation of $\mathbf{W}^{b}$ requires at most $O(|\mathcal{E}|)$ in Step 9. The random walks in Steps 11-13 are linear to the maximum iteration step and the number of edges in the value graph, resulting in $O(|\mathcal{E}|I_{max})$. Therefore, the overall time complexity is $O(ND^2+|\mathcal{E}|I_{max})$. $I_{max}$ is a constant. $|\mathcal{E}|$ is approximately equal to $|\mathcal{V}|^{2}$ when $\mathsf{G}$ is a highly dense graph, leading to $O(ND^2+|\mathcal{V}|^{2})$. We have $|\mathcal{E}| \approx D^{2}$ when $\mathsf{G}$ is a highly sparse graph. This case results in $O(ND^2)$.

\section{CUOT's Instance II: SDRW for Noise-tolerant Estimation of Non-IID Value Outlierness}\label{sec:sdrw}

This section introduces another instantiation of CUOT, called multiple-granularity Subgraph Densities augmented Random Walks (SDRW). SDRW is motivated by CBRW, but it is a significantly enhanced instance compared to CBRW. It not only retains the CBRW's ability to handle non-IID data, but it also considers noisy features to cope with more challenging data. 

Specifically, SDRW uses the same mode-normalized initial outlierness as CBRW, but it replaces the conditional probability-based outlierness influence with the one based on the concept of lift \citep{brin1997lift}. Although it seems to be a minor change, the resulting improvement is significant. It effectively transforms the value graph into an undirected graph, and subsequently we can derive a parameter-free closed-form solution for learning value outlierness. The closed-form solution also reduces the computational cost of SDRW compared to CBRW. Additionally, the lift-based outlierness influence enables us to better distinguish coupled outlying values from strongly coupled normal values. To enhance its tolerance to noisy features, SDRW uses a multiple-granularity dense subgraph mining to learn a more reliable bias into the biased random walks. A summary of the differences between CBRW and SDRW is provided in Table \ref{tab:cbrw_vs_dsrw}.

\begin{table}[htbp]
  \centering
  \ttabbox{\caption{Conceptual Comparison of CBRW and SDRW}\label{tab:cbrw_vs_dsrw}}
  {
  \scalebox{0.95}{
    \begin{tabular}{|c|c|c|}
    \hline
       & \textbf{CBRW} & \textbf{SDRW} \\
    \hline
    Intra-feature Outlier Factor & \multicolumn{2}{c|}{Mode-based Normalization} \\\hline
    Inter-feature Outlier Factor & Conditional Probability & Lift \\
    Value Graph & Directed & Undirected \\
    Value Outlierness Learning & BRWs & Noise-tolerant BRWs \\
    Closed-form Solution & No & Yes \\
    Parameters & $\alpha$ & None \\
    Time Complexity & $O(ND^2)+O(|\mathcal{E}|I_{\mathit{max}})$ & $O(ND^2)+O(|\mathcal{E}|)$\\\hline
    \end{tabular}%
    }
    }
\end{table}

To differentiate between the specifications for CBRW and SDRW, a superscript `$\prime$' is added to the notations in SDRW if the same notation is used in CBRW.

\subsection{Lift-based Outlierness Influence}\label{subsec:pmi}

Lift, which is a non-logarithmic pointwise mutual information, is a widely-used measure to define the correlation between two values. Lift replaces the conditional probabilities in the outlierness influence vector as follows. 

\begin{definition}[Lift-based Outlierness Influence Vector]\label{def:pmi}
The lift-based outlierness influence vector of a value $v$ due to all the other values is defined as
\begin{equation}\label{eqn:pmi}
\begin{split}
\mathbf{q}_{v}^{\prime} &= [\eta^{\prime}(u_1,v), \cdots, \eta^{\prime}(u_{|\mathcal{V}|},v)]^\intercal \\
& =[\frac{\mathit{freq}(u_1,v)}{\mathit{freq}(u_1)\mathit{freq}(v)},\cdots, \frac{\mathit{freq}(u_{|\mathcal{V}|},v)}{\mathit{freq}(u_{|\mathcal{V}|})\mathit{freq}(v)}]^\intercal,
\end{split}
\end{equation}
\end{definition}

The resulting inter-feature outlier factor has the following two key properties. 
\begin{enumerate}
    \item $\eta^{\prime}(u,v)\geq 0$.
    \item $\eta^{\prime}(u,v)=\eta^{\prime}(v,u)$.
\end{enumerate}
The values $u$ and $v$ are positively correlated if $\eta^{\prime}(u,v) > 1$ and are independent or negatively correlated otherwise. In our approach, larger lift is more desired. This is because we focus on capturing the homophily couplings between values, i.e., how their outlierness is \textit{positively} influenced on each other. More importantly, although they may exist interdependence between either frequent values or rare values, the lift of highly correlated rare values can be far larger than that of correlated frequent values because of the difference in the product of the marginals. For example, in a data size of 100, we have two rare values that have five co-occurrences, with each having 10 individual appearances, and we also have two frequent values that have 30 co-occurrences, with each having 60 individual appearance; in this case, although their conditional probabilities are equal, the lift of the rare values is four, while the lift of the frequent values is less than one. This strengthens the propagation of outlierness between outlying values in the subsequent random walk process. Therefore, compared to conditional probabilities, lift captures more accurate homophily couplings. Particularly, the difference obtained by $\eta^{\prime}(u,v)-\eta^{\prime}(w,z)$ is much larger than $\eta(u,v)-\eta(w,z)$ in CBRW, when both values $u$ and $v$ are outlying values and at least one of the values $w$ and $z$ is not an outlying value. 

\subsection{Refining the Value Graph with Subgraph Densities}\label{subsec:undirectedgraph}

SDRW then constructs an attributed value graph $\mathsf{G}^{\prime}=<\mathcal{V},\mathcal{E},\omega_{\delta^{\prime},\eta^{\prime}}>$ in the same way as CBRW built the graph $\mathsf{G}$. Since SDRW and CBRW use the same mode-based initial outlierness, we have $\delta^{\prime}=\delta$. Here the key difference between SDRW and CBRW is that $\mathsf{G}^{\prime}$ is an undirected graph as $\eta^{\prime}(u,v)=\eta^{\prime}(v,u)$, while $\mathsf{G}$ is a directed graph.

Let $\mathbf{A}^{\prime}$ be the adjacency matrix of $\mathsf{G}^{\prime}$ with its entry $\mathbf{A}^{\prime}(u,v)=\eta^{\prime}(u,v)$. According to Lemma \ref{thm:lem}, the attributed value graph can be equivalently transformed to a plain graph with an adjacency matrix $\mathbf{C}$, in which its entry is
\begin{equation}\label{eqn:valuegraph2}
    \mathbf{C}(u,v)=\delta^{\prime}(u)\eta^{\prime}(u,v)\delta^{\prime}(v), \; \forall u,v \in \mathcal{V}.
\end{equation}

One major problem with $\mathbf{C}(u,v)$ (or $\mathbf{B}(u,v)$ in CBRW) is that $\delta^{\prime}$ (or $\delta$) may mislead the subsequent value outlierness learning when $u$ or $v$ is a noisy value. This is because noisy values may have a lower frequency than outlying values. Consequently, noisy values have larger intra-feature outlierness $\delta$ than outlying values. When there are many such noisy values, this can downgrade the quality of the outlierness learning. One simple solution is to remove the term $\delta^{\prime}$, but that would also remove important intra-feature value coupling information, making the solution less effective when outliers demonstrate obvious outlying behaviors in individual features (See the empirical results in Section \ref{subsec:component}).

Instead, SDRW learns a noise-tolerant term to replace $\delta^{\prime}$ by aggregating the density of a collection of multiple-granularity dense subgraphs associated with a specific value. Our intuition is as follows. Due to the homophily couplings between outlying values, the neighbors of outlying values in the value graph are much more likely to be outlying values than noisy values. Since the edge weights convey the value outlierness, the outlying values are located in denser subgraphs than noisy values. We therefore define the following subgraph density-based outlier factor:

\begin{definition}[Subgraph Density-based Outlier Factor]
Let $\mathsf{H}_{k}$ be the densest $k$ subgraph, i.e., the densest subgraph of exactly $k$ nodes in graph $\mathsf{G}^{\prime}$, and let $\mathcal{G}=\{\mathsf{H}_{2},\mathsf{H}_{3}, \cdots, \mathsf{H}_{|\mathcal{V}|-1}\}$ be the complete set of the densest $k$ subgraphs. The subgraph density-based outlier factor of a value $v$ is defined as the average density of all the densest $k$ subgraphs that contain $v$, i.e.,
\begin{equation}\label{eqn:ad}
    \gamma(v) = \frac{1}{|\mathcal{G}_{v}|} \sum_{\mathsf{H}^{v} \in \mathcal{G}_{v}} \mathit{den}(\mathsf{H}^{v}),
\end{equation}
where $\mathcal{G}_{v}$ is the set of the densest $k$ subgraphs that contain $v$ and the subgraph density is computed by 
\begin{equation}\label{eqn:subgraphdensity}
    \mathit{den}(\mathsf{H}^{v})=\frac{\sum_{u \in \mathcal{V}_{v}}\sum_{v \in \mathcal{V}_{v}}\delta^{\prime}(u)\eta^{\prime}(u,v)\delta^{\prime}(v)}{2|\mathcal{V}_{v}|},
\end{equation}
where $\mathcal{V}_{v}$ denotes the set of nodes contained in $\mathsf{H}^{v}$.
\end{definition}

$\gamma$ is built on the homophily couplings and is designed to capture the possible cascade relations of outlying values. This helps increase the outlierness of the outlying values that are surrounded by only a few direct outlying nodes, but their outlying neighbor nodes (or the neighbors of them, and so on) are coupled with many outlying values. There exist $2^{|\mathcal{V}|}-1$ subgraphs for $\mathsf{G}^{\prime}$ in total, and we have $|\mathcal{V}| \choose k$ subgraphs at a specific resolution, i.e., subgraphs with $k$ nodes only. Using all these subgraphs is computationally intractable. We therefore use the single densest $k$ subgraph as it contains the most important outlierness information at each resolution. However, finding single densest $k$ subgraph has been proven to be an NP-hard problem \citep{khuller2009subgraph}. We resort to a greedy method in Algorithm \ref{alg:densesubgraph} to produce a set of dense subgraphs, $\mathcal{G}^{+}$, to approximate $\mathcal{G}$. Note that $\mathsf{H}_{1}$ and $\mathsf{H}_{|\mathcal{V}|}$ are excluded from $\mathcal{G}$, since they provide no distinguishing information for computing the $\gamma$ of each value. Also, we set $\gamma(v)=0$ if $|\mathcal{G}_v| = 0$.

\renewcommand{\algorithmicrequire}{\textbf{Input:}}
\renewcommand{\algorithmicensure}{\textbf{Output:}}
\begin{algorithm}
\caption{Dense Subgraph Discovery}
\label{alg:densesubgraph}
\begin{algorithmic}[1]
\REQUIRE $\mathcal{X}$ - data objects
\ENSURE $\mathcal{G}^{+}$ - a set of dense subgraphs
\STATE Generate $\mathbf{C}$ using Equation (\ref{eqn:valuegraph2})
\STATE Compute the weighted degree of each node $v\in \mathcal{V}$
\STATE Initialize $\mathcal{G}^{+}$ as an empty set
\REPEAT
\STATE Let $v\in \mathcal{V}$ be the node having the minimal weighted degree in $\mathsf{G}^{\prime}$
\STATE $\mathsf{G}^{\prime} \leftarrow \mathsf{G}^{\prime} \setminus v $
\STATE $\mathcal{G}^{+} \leftarrow  \mathcal{G}^{+} \cup \{\mathsf{G}^{\prime}\} $
\UNTIL Only one node left in $\mathsf{G}^{\prime}$
\RETURN $\mathcal{G}^{+}$
\end{algorithmic}
\end{algorithm}

 Although Algorithm \ref{alg:densesubgraph} cannot find the exact set of the densest $k$ subgraphs, using $\mathcal{G}^{+}$ to compute $\gamma$ guarantees that values with more outlying neighbors (i.e., with a  larger weighted degree) obtain a larger $\gamma$. Moreover, it has linear time complexity w.r.t. $|\mathcal{V}|$, which enables SDRW to compute $\gamma$ very efficiently. Other desirable properties of this algorithm include: (i) the densest subgraph in the subgraphs it produces has a $\frac{1}{2}$-approximation to the optimal densest subgraph without size constraints; and (ii) it is able to produce the densest subgraph with at least $k$ nodes (a relaxed problem to the problem of finding the densest $k$ subgraph) having $\frac{1}{3}$-approximation to the optimal solution \citep{andersen2009subgraph,khuller2009subgraph}. These two properties make the use of $\mathcal{G}^{+}$ obtain a good approximation to the exact $\gamma$.
 
 We further replace $\delta$ with $\gamma$ in Equation (\ref{eqn:valuegraph2}) and obtain
 \begin{equation}\label{eqn:valuegraph3}
    \mathbf{B}^{\prime}(u,v)=\gamma(u)\eta^{\prime}(u,v)\gamma(v),\; \forall u,v \in \mathcal{V},
\end{equation}

\noindent where $\gamma$ is an enhanced $\delta$ for better tolerance to noisy values. Unlike $\eta$ that captures low-order pairwise value couplings, $\gamma^{\prime}$ captures high-order arbitrary-length homophily couplings.

\subsection{Noise-tolerant Biased Random Walks for Learning Value Outlierness}\label{subsec:brw2}

SDRW then performs random walks with the adjacency matrix $\mathbf{B}^{\prime}$. The transition matrix is as follows:

\begin{equation}
    \mathbf{T}^{\prime}(u,v) = \frac{\mathbf{B}^{\prime}(u,v)}{\sum_{v \in \mathcal{V}}\mathbf{B}^{\prime}(u,v)}=\frac{\gamma(u)\mathbf{A}^{\prime}(u,v)\gamma(v)}{\sum_{v \in \mathcal{V}}\gamma(u)\mathbf{A}^{\prime}(u,v)\gamma(v)}.
\end{equation}

This is equivalent to biased random walks with the following transition matrix
\begin{equation}\label{eqn:brw2}
    \mathbf{W}^{b\prime}(u,v) =\frac{\gamma(v)\mathbf{A}^{\prime}(u,v)}{\sum_{v \in \mathcal{V}}\gamma(v)\mathbf{A}^{\prime}(u,v)},
\end{equation}

\noindent in which the terms $\gamma$ and $\mathbf{A}^{\prime}(u,v)$ replace $\delta$ and $\mathbf{A}(u,v)$ in Equation (\ref{eqn:brws1}), respectively. $\gamma$ improves the tolerance to noisy values over $\delta$, while $\eta^{\prime}(u,v)$ improves the homophily coupling modeling over $\eta(u,v)$.

We can accordingly define the value outlierness as follows.

\begin{definition}[SDRW-based Value Outlierness]
The outlierness of node $v$ is defined as  
\begin{equation}\label{eqn:score2}
\phi^{\prime}(v)=\boldsymbol\pi^{*\prime}(v),
\end{equation}
where $\boldsymbol\pi^{*\prime}=\mathbf{W}^{b\prime}\boldsymbol\pi^{*\prime}$ denotes the stationary probabilities of biased random walks with the transition matrix $\mathbf{W}^{b\prime}$.
\end{definition}

This value outlierness estimation function $\phi^{\prime}(v)$ can be computed using its closed-form as shown below.

\begin{theorem}[Closed-form Outlierness Estimation]\label{thm:closedform}
Let $\mathsf{G}^{\prime}$ be the graph with its adjacency matrix $\mathbf{B}^{\prime}$ such that $\mathbf{B}^{\prime}(u,v)=\gamma(u)\eta^{\prime}(u,v)\gamma(v),\; \forall u,v \in \mathcal{V}$. Then we have 
\begin{equation}\label{eqn:closedform}
    \boldsymbol\pi^{*\prime}(v) =\frac{d^{\prime}(v)}{\mathit{vol}(\mathsf{G}^{\prime})}, \;\; \forall v \in \mathcal{V},
\end{equation} 
where $d^{\prime}(v) = \sum_{u \in \mathcal{V}}\mathbf{B}^{\prime}(u,v) = \sum_{u \in \mathcal{V}}\gamma(v)\eta^{\prime}(u,v)\gamma(u)$ denotes the weighted degree of node $v$ and $\mathit{vol}(\mathsf{G}^{\prime})=\sum_{v\in \mathcal{V}}d^{\prime}(v)$ is the volume of $\mathsf{G}^{\prime}$.
\end{theorem}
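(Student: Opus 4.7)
The plan is to observe that the symmetry property of the lift-based influence, $\eta^{\prime}(u,v)=\eta^{\prime}(v,u)$ (Property~2 in Section~\ref{subsec:pmi}), forces $\mathbf{B}^{\prime}$ to be a symmetric matrix, so $\mathsf{G}^{\prime}$ is effectively an undirected weighted graph. The theorem is then an instance of the classical fact that the stationary distribution of a simple random walk on an undirected weighted graph is proportional to the weighted degrees. My proof would simply cast the biased random walk of SDRW into that canonical form and verify stationarity directly.

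The concrete steps I would take are as follows. First, I would simplify $\mathbf{W}^{b\prime}$ in Equation~(\ref{eqn:brw2}) by multiplying both numerator and denominator by $\gamma(u)$, using $\mathbf{A}^{\prime}(u,v)=\eta^{\prime}(u,v)$ to obtain
\[
    \mathbf{W}^{b\prime}(u,v)=\frac{\gamma(u)\gamma(v)\eta^{\prime}(u,v)}{\sum_{w\in\mathcal{V}}\gamma(u)\gamma(w)\eta^{\prime}(u,w)}=\frac{\mathbf{B}^{\prime}(u,v)}{d^{\prime}(u)}.
\]
This recasts the biased walk as the standard random walk on the weighted graph whose adjacency matrix is $\mathbf{B}^{\prime}$. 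Second, I would note that the factorization $\mathbf{B}^{\prime}(u,v)=\gamma(u)\eta^{\prime}(u,v)\gamma(v)$ together with $\eta^{\prime}(u,v)=\eta^{\prime}(v,u)$ yields $\mathbf{B}^{\prime}(u,v)=\mathbf{B}^{\prime}(v,u)$, so $\sum_{u}\mathbf{B}^{\prime}(u,v)=\sum_{u}\mathbf{B}^{\prime}(v,u)=d^{\prime}(v)$. Third, taking $\pi^{*\prime}(v)=d^{\prime}(v)/\mathit{vol}(\mathsf{G}^{\prime})$ as a candidate, I would verify the stationarity equation
\[
    \sum_{u\in\mathcal{V}} \pi^{*\prime}(u)\,\mathbf{W}^{b\prime}(u,v)=\frac{1}{\mathit{vol}(\mathsf{G}^{\prime})}\sum_{u\in\mathcal{V}}\mathbf{B}^{\prime}(u,v)=\frac{d^{\prime}(v)}{\mathit{vol}(\mathsf{G}^{\prime})}=\pi^{*\prime}(v),
\]
and observe that $\sum_{v}\pi^{*\prime}(v)=\mathit{vol}(\mathsf{G}^{\prime})/\mathit{vol}(\mathsf{G}^{\prime})=1$, so $\pi^{*\prime}$ is a genuine probability distribution. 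Equivalently, one may verify the detailed balance identity $\pi^{*\prime}(u)\mathbf{W}^{b\prime}(u,v)=\pi^{*\prime}(v)\mathbf{W}^{b\prime}(v,u)$, which is immediate from the symmetry of $\mathbf{B}^{\prime}$, and sum over $u$ to obtain stationarity.

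The main obstacle is conceptual rather than computational: recognizing that the key ingredient is the symmetry of $\mathbf{B}^{\prime}$ inherited from the symmetry of the lift $\eta^{\prime}$. Once this symmetry is surfaced and $\mathbf{W}^{b\prime}$ is rewritten in the canonical form $\mathbf{B}^{\prime}(u,v)/d^{\prime}(u)$, the closed form follows in a few lines. A minor subtlety is that Equation~(\ref{eqn:brw2}) presents $\mathbf{W}^{b\prime}$ with an asymmetric-looking numerator $\gamma(v)\mathbf{A}^{\prime}(u,v)$ (only the destination is reweighted), which initially obscures the undirected nature of the walk; the multiplication-by-$\gamma(u)$ trick dissolves this asymmetry. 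Uniqueness of the stationary distribution would, if needed, follow from assuming irreducibility and aperiodicity of $\mathsf{G}^{\prime}$ in the same spirit as Theorem~\ref{thm:convg}, but is not required for the stated identity.
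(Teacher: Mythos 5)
Your proposal is correct and follows essentially the same route as the paper: both rewrite the biased transition matrix as $\mathbf{W}^{b\prime}(u,v)=\mathbf{B}^{\prime}(u,v)/d^{\prime}(u)$, substitute the candidate $\boldsymbol\pi^{*\prime}(v)=d^{\prime}(v)/\mathit{vol}(\mathsf{G}^{\prime})$ into the stationarity equation, cancel the degrees, and invoke the symmetry $\mathbf{B}^{\prime}(u,v)=\mathbf{B}^{\prime}(v,u)$ inherited from $\eta^{\prime}$ to close the computation. Your added remarks on detailed balance and on normalization are correct but not needed beyond what the paper's verification already establishes.
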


We can derive the following closed-form of $\phi^{\prime}(v)$ after its rollout.

\begin{equation}\label{eqn:closedform_sdrw}
  \phi^{\prime}(v)= \frac{\sum_{u \in \mathcal{V}}\gamma(v)\eta^{\prime}(u,v)\gamma(u)}{\sum_{v \in \mathcal{V}}\sum_{u \in \mathcal{V}}\gamma(v)\eta^{\prime}(u,v)\gamma(u)},
\end{equation}
\noindent where the nominator is the weighted degree of node $v$ and the denominator is the volume (i.e., total weighted degree) of the graph $\mathsf{G}^{\prime}$.

Similar to $\phi$, we have $0< \phi^{\prime} (v) <1 $ and $\sum_{v \in \mathcal{V}}\phi^{\prime}(v) =1$. As indicated in Equation (\ref{eqn:closedform}), $\phi^{\prime}(v)$ is positively proportional to the weighted degree of node $v$. Therefore, outlying values are expected to obtain much larger outlierness than normal values. Also, the number of outlying values is often very small, in an ideal case, the outlierness of all values follows a long-tail distribution with the outlying values having dominant outlierness.

Note that the above form is not necessarily the unique convergence form of the random walks. It becomes a unique convergence if, and only if, $\mathsf{G}^{\prime}$ is irreducible and aperiodic \citep{meyer2000matrix}. However, this closed-form well captures the homophily outlying couplings. Hence, it is used to compute the outlierness of all values.

\subsection{The Algorithm of SDRW and Its Time Complexity}

Algorithm \ref{alg:dsrw} presents the procedures of SDRW. Step 1 computes the dense subgraph-based outlier factor $\gamma$, followed by the generation of $\mathbf{B}^{\prime}$ in Step 2. Steps 3-5 further estimate the outlierness of each value using the closed-form of the stationary probability distribution of biased random walks.

\renewcommand{\algorithmicrequire}{\textbf{Input:}}
\renewcommand{\algorithmicensure}{\textbf{Output:}}
\begin{algorithm}
\caption{Subgraph Density-augmented Random Walks}
\label{alg:dsrw}
\begin{algorithmic}[1]
\REQUIRE $\mathcal{X}$ - data objects
\ENSURE $\boldsymbol\pi^{*\prime}$ - the stationary probability distribution
\STATE Compute $\gamma$ using Equation (\ref{eqn:ad}) with $\mathcal{G}^{+}$ returned by Algorithm \ref{alg:densesubgraph}
\STATE Obtain $\mathbf{B}^{\prime}$ using Equation (\ref{eqn:valuegraph3})
\FOR{ $v \in \mathcal{V}$}
\STATE $\boldsymbol\pi^{*\prime}(v) \leftarrow \frac{\sum_{u \in \mathcal{V}}\gamma(v)\eta^{\prime}(u,v)\gamma(u)}{\sum_{v \in \mathcal{V}}\sum_{u \in \mathcal{V}}\gamma(v)\eta^{\prime}(u,v)\gamma(u)}$
\ENDFOR
\RETURN $\boldsymbol\pi^{*\prime}$
\end{algorithmic}
\end{algorithm}
Both the iterative removal of nodes in Algorithm \ref{alg:densesubgraph} and the calculation of $\gamma$ in Step 2 have  a time complexity of $O(|\mathcal{E}|)$. 
Similar to CBRW, SDRW requires $O(ND^2)$ to obtain $\mathbf{B}^{\prime}$ using Equation (\ref{eqn:valuegraph3}). The subsequent estimation of value outlierness has a time complexity of $O(|\mathcal{E}|)$. Therefore, SDRW has an overall time complexity of $O(ND^2+|\mathcal{E}|)$.

\section{Theoretical Comparisons of IID and Non-IID Outlier Factors}\label{sec:thm}

Modeling the intrinsic non-IID outlying behaviors enables CBRW and SDRW to build detection models that are more faithful to real-world data, since the IID assumption rarely holds in practice. This section analyzes how CBRW and SDRW model such behavior characteristics. As shown below, properly capturing such characteristics also lead to a better ability to tackle other complex problems than IID methods, such as handling data with low outlier separability or noisy features.


\subsection{On Heterogeneity}

We first analyze how the intra-feature outlier factor $\delta$ (or $\delta^{\prime}$) in CBRW and SDRW capture the heterogeneity between values, and then compare that to IID methods in handling data with low outlier separability.

\subsubsection{Capturing Heterogeneous Categorical Distributions}
Traditional methods focus on the identification of outlying/normal patterns. They ignore the heterogeneous categorical distributions taken by different feature subspaces and assign the same outlierness to the values/patterns of the same frequency. Two typical ways the IID methods use to compute the outlierness are: $1-\mathit{freq}(\cdot)$ or $\frac{1}{\mathit{freq}(\cdot)}$. However, the categorical distributions (e.g., the location of the distribution) taken by different features/subspaces may differ significantly from each other, and as a result, the frequency drawn from one distribution has a different meaning for another. The following proposition is obtained based on this observation.

\begin{proposition}[Semantic Heterogeneity]
Given any two modes $m_i$ and $m_j$ of features $\mathsf{F}_i$ and $\mathsf{F}_j$ respectively, and their frequencies $\mathit{freq}(m_i)$ and $\mathit{freq}(m_j)$. If $\mathit{freq}(m_i) \neq \mathit{freq}(m_j)$, then $\mathit{freq}(u)$ is not directly comparable to $\mathit{freq}(v)$ in terms of their outlierness, $\forall u\in dom(\mathsf{F}_i),v\in dom(\mathsf{F}_j)$.

\end{proposition}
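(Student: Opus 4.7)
The plan is to reduce the proposition to a straightforward counter-example once the deliberately informal phrase ``directly comparable in terms of outlierness'' is pinned down. I would take as the reference outlierness measure the mode-based $\delta$ of Definition~\ref{def:intraod}, since that measure is explicitly designed so that the frequency of the mode plays the role of the location parameter of the underlying categorical distribution. Under this interpretation, two raw frequencies $\mathit{freq}(u)$ and $\mathit{freq}(v)$ drawn from different features are ``comparable'' only if equal frequencies force equal induced outlierness values.

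The main step is then a short algebraic manipulation. Fix $u \in \mathit{dom}(\mathsf{F}_i)$ and $v \in \mathit{dom}(\mathsf{F}_j)$ with $\mathit{freq}(u) = \mathit{freq}(v) = p$ and write $a = \mathit{freq}(m_i)$ and $b = \mathit{freq}(m_j)$. A direct substitution into Equation~(\ref{eqn:mad}) gives
\begin{equation*}
\delta(u) - \delta(v) \;=\; (b-a) + p\!\left(\tfrac{1}{b} - \tfrac{1}{a}\right) \;=\; \frac{(b-a)(ab-p)}{ab}.
\end{equation*}
Under the hypothesis $a \neq b$, this expression vanishes only on the isolated locus $p = ab$, so for generic choices of $u$ and $v$ the induced outlierness values disagree despite the equal raw frequencies. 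To make the argument concrete I would exhibit an explicit witness from the toy data of Table~\ref{tab:toyexample}: ``bachelor'' in \emph{Education} (mode frequency $\tfrac{1}{2}$) and ``divorced'' in \emph{Marriage} (mode frequency $\tfrac{5}{12}$) share frequency $\tfrac{1}{6}$ but receive distinct $\delta$-outlierness values $\approx 0.58$ and $\approx 0.59$, as already computed in Section~\ref{subsec:nodeproperty}.

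To cover the broader claim about frequency-only methods, I would observe that any strictly monotone outlierness measure built purely from $\mathit{freq}(\cdot)$ (such as $1-\mathit{freq}(\cdot)$ or $1/\mathit{freq}(\cdot)$ used by traditional IID approaches) is injective in its single argument, and therefore is forced to assign the same outlierness to any two equally frequent values, regardless of the location of the underlying distributions. The mode-based analysis above shows this assignment is incorrect whenever $\mathit{freq}(m_i) \neq \mathit{freq}(m_j)$, establishing the non-comparability claim universally over such pairs.

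The principal obstacle is not algebraic but conceptual: everything downstream of the interpretation step is a one-line calculation, so the real work lies in justifying why the mode frequency is the appropriate benchmark for the ``center'' of a categorical distribution. That justification has already been carried out informally in Section~\ref{subsec:nodeproperty}, and I would simply appeal to it rather than reprove it.
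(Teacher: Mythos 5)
Your proposal is correct and follows the same underlying idea as the paper, which justifies the proposition only with the informal remark that $\mathit{freq}(m_i) \neq \mathit{freq}(m_j)$ means the two features have different rarity-evaluation benchmarks, so that equal raw frequencies carry different outlierness semantics. Your explicit computation $\delta(u)-\delta(v) = \frac{(b-a)(ab-p)}{ab}$ and the \emph{bachelor}/\emph{divorced} witness simply make rigorous what the paper leaves as a one-sentence observation, so there is no substantive difference in approach.
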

In the proposition, $\mathit{freq}(m_i) \neq \mathit{freq}(m_j)$ indicates that $\mathsf{F}_i$ and $\mathsf{F}_j$ have different rarity evaluation benchmarks. Thus, $\mathit{freq}(u)$ and $\mathit{freq}(v)$ are not directly comparable. The $\delta$ function in Equation (\ref{eqn:mad}) takes the location parameter of categorical distribution into account and works as a mode-oriented normalization. This way facilitates a more meaningful outlierness comparison between values from different features, compared to the IID methods.


In addition to the normalization, the next section shows that the $\delta$ function also has better ability than the IID methods in handling data with low outlier separability.

\subsubsection{Handling Data with Low Outlier Separability}\label{subsubsec:sep}

Data with low outlier separability is generally referred to as data without strongly relevant features. To handle such data, one potential solution is to enlarge the gaps between the outlierness of outlying values and normal values. Let $\delta$ denotes the intra-feature value outlierness estimation function used in CBRW and SDRW, and $\phi_{\mathit{freq}}$ denotes a baseline value outlierness estimation function used in IID methods, which is defined as $\phi_{\mathit{freq}}(\cdot) = 1 - \mathit{freq}(\cdot)$; let $u^{\prime}$ and $v^{\prime}$ be outlying and normal values, respectively, then the following theorem helps justify that CBRW and SDRW have better capability than IID methods in handling low outlier-separable data .

\begin{theorem}[Outlierness Contrast]\label{thm:gapdelta}
Let $m$ be the mode of the feature $\mathsf{F}$, and an outlying value $u^{\prime}\in \mathit{dom}(\mathsf{F})$ and a normal value $v^{\prime} \in \mathit{dom}(\mathsf{F})$. Then 
\begin{equation}\label{eqn:oc}
    \delta(u^{\prime}) - \delta(v^{\prime}) > \phi_{\mathit{freq}}(u^{\prime}) - \phi_{\mathit{freq}}(v^{\prime}).
\end{equation}
\end{theorem}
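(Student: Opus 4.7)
The plan is to reduce the claimed inequality to a transparent one-line comparison by unrolling the two outlierness functions directly from their definitions. First I would write
\[
\delta(u^{\prime}) - \delta(v^{\prime}) = \bigl(\mathit{base}(m) + \mathit{dev}(u^{\prime})\bigr) - \bigl(\mathit{base}(m) + \mathit{dev}(v^{\prime})\bigr) = \mathit{dev}(u^{\prime}) - \mathit{dev}(v^{\prime}),
\]
so that the mode-baseline term cancels, and then substitute $\mathit{dev}(v) = \frac{\mathit{freq}(m) - \mathit{freq}(v)}{\mathit{freq}(m)}$ from Definition~\ref{def:intraod} to obtain the compact form $\delta(u^{\prime}) - \delta(v^{\prime}) = \frac{\mathit{freq}(v^{\prime}) - \mathit{freq}(u^{\prime})}{\mathit{freq}(m)}$. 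On the other side, $\phi_{\mathit{freq}}(u^{\prime}) - \phi_{\mathit{freq}}(v^{\prime}) = \mathit{freq}(v^{\prime}) - \mathit{freq}(u^{\prime})$ falls out immediately from $\phi_{\mathit{freq}}(\cdot) = 1 - \mathit{freq}(\cdot)$.

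Next I would observe that the inequality (\ref{eqn:oc}) is therefore equivalent to
\[
\frac{\mathit{freq}(v^{\prime}) - \mathit{freq}(u^{\prime})}{\mathit{freq}(m)} > \mathit{freq}(v^{\prime}) - \mathit{freq}(u^{\prime}),
\]
which, assuming the common factor $\mathit{freq}(v^{\prime}) - \mathit{freq}(u^{\prime})$ is strictly positive, reduces further to $\tfrac{1}{\mathit{freq}(m)} > 1$, i.e.\ $\mathit{freq}(m) < 1$. The latter is guaranteed by the running assumption in Section~\ref{subsec:nodeproperty} that features with $\mathit{freq}(m) = 1$ are discarded as carrying no outlier-relevant information, so $\mathit{freq}(m) \in (0,1)$ and the division is both well-defined and strictly expansive.

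The only genuinely non-routine step is pinning down why $\mathit{freq}(v^{\prime}) - \mathit{freq}(u^{\prime}) > 0$; the theorem statement only calls $u^{\prime}$ ``outlying'' and $v^{\prime}$ ``normal'' without giving a formal inequality between their frequencies. I would handle this by invoking the operational definitions used throughout the paper: an \emph{outlying value} is by construction an infrequent value carried almost exclusively by outliers, while a \emph{normal value} is a frequent value contained by the bulk of the normal objects, so $\mathit{freq}(u^{\prime}) < \mathit{freq}(v^{\prime})$. Making this explicit as a one-sentence remark before the algebraic chain is the only delicate part; once it is in place, the rest is a direct calculation, and the conclusion is that the mode-based normalization multiplies every frequency gap by the factor $\tfrac{1}{\mathit{freq}(m)} > 1$, so it strictly enlarges the outlierness contrast between any outlying value and any normal value drawn from the same feature, which is precisely what is needed to cope with low-separability data.
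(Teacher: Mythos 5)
Your proof is correct and follows essentially the same route as the paper's: a direct algebraic unrolling of $\delta$ and $\phi_{\mathit{freq}}$ that reduces the claim to the two facts $\mathit{freq}(u^{\prime})<\mathit{freq}(v^{\prime})$ and $\mathit{freq}(m)<1$ (the paper phrases the latter as $N>\mathit{supp}(m)$ after rewriting everything in supports). Your grouping by $\delta(u^{\prime})-\delta(v^{\prime})$ versus $\phi_{\mathit{freq}}(u^{\prime})-\phi_{\mathit{freq}}(v^{\prime})$ is in fact slightly cleaner than the paper's grouping by $\delta-\phi_{\mathit{freq}}$ per value, and it recovers the same gap $\frac{(N-\mathit{supp}(m))\beta}{\mathit{supp}(m)N}$ that the paper states after the theorem; your explicit flagging of the implicit assumption $\mathit{freq}(u^{\prime})<\mathit{freq}(v^{\prime})$ is appropriate, as the paper also invokes it without formalizing it in the theorem statement.
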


After some transformations in Equation (\ref{eqn:oc}), We can obtain that the difference between the left and the right equations is $\frac{(N-\mathit{supp}(m))\beta}{\mathit{supp}(m)N}$ where $\beta$ is the difference between the supports of the outlying and normal values. This indicates that $\delta$ is able to obtain much higher outlierness contrast between outlying values and normal values when: (i) the support of the feature mode $m$ is smaller; and/or (ii) the difference between the supports of the outlying and normal values is larger.

\subsection{On Homophily Couplings}

We first analyze how the combination of the outlierness influence factor $\eta$ (or $\eta^{\prime}$) and the random walks captures the homophily couplings between outlying values. We then discuss their ability to handle noisy features.

\subsubsection{Modeling Homophily Outlying Behaviors}
Random walks are one of the most popular and efficient methods for modeling homophily couplings \citep{koutra2011homophily}. Basically, as CBRW and SDRW conduct random walks on the value graph, in which large edge weights indicate large outlierness of the corresponding nodes of the edge, the final outlierness of a node is determined by the outlierness associated with its direct neighbor nodes, and the outlierness of these neighbor nodes is governed by the neighbors of these nodes, and so on. This process models an iterative effect of outlierness propagation.

\begin{definition}[Direct Neighbor]
The direct neighbors of a node $u$ are defined as 
\begin{equation}
    \mathcal{N}(u)=\{\forall v\in \mathcal{V} | \text{dist}(v,u)=1\},
\end{equation}
where $\text{dist}(v,u)$ returns the shortest path from node $v$ to node $u$.
\end{definition}

\begin{proposition}[Homophily Coupling Modeling]\label{thm:conedform}
Let $\mathcal{N}(v)$ be the direct neighbors of a node $v \in \mathcal{V}$. Then the outlierness of value $v$ is linearly proportional to the outlierness of its direct neighbors and its coupling strength with these neighbors, i.e., in CBRW, we have
\begin{equation}\label{eqn:cbrw_homophily}
    \phi(v)\propto {\sum_{u \in \mathcal{N}(v)}\phi(u)\delta(u)\eta(u,v)\delta(v)},
\end{equation} 
while in SDRW, we have
\begin{equation}\label{eqn:sdrw_homophily}
    \phi^{\prime}(v)\propto {\sum_{u \in \mathcal{N}(v)}\phi^{\prime}(u)\gamma(u)\eta^{\prime}(u,v)\gamma(v)}.
\end{equation} 
\end{proposition}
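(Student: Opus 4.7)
The plan is to derive both displayed identities from the defining stationarity of the biased random walks underlying each instance, isolating in every summand the four claimed factors $\phi(u)$ (or $\phi^{\prime}(u)$), $\delta(u)$ (or $\gamma(u)$), $\eta(u,v)$ (or $\eta^{\prime}(u,v)$), and $\delta(v)$ (or $\gamma(v)$), and absorbing whatever is left into the proportionality constant. Throughout, the sum over $u \in \mathcal{V}$ reduces to $u \in \mathcal{N}(v)$ because the inter-feature outlier factors vanish off the neighborhood.

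I would first handle SDRW, which is the cleaner case. By Theorem \ref{thm:closedform} we have $\phi^{\prime}(v) = d^{\prime}(v)/\mathit{vol}(\mathsf{G}^{\prime})$ with $d^{\prime}(v) = \sum_{u}\gamma(v)\eta^{\prime}(u,v)\gamma(u)$. Since $\eta^{\prime}(u,v) = 0$ for $u \notin \mathcal{N}(v)$, the degree reduces to $d^{\prime}(v) = \sum_{u \in \mathcal{N}(v)} \gamma(v)\eta^{\prime}(u,v)\gamma(u)$, so $\phi^{\prime}(v)$ is already proportional to that sum with constant $1/\mathit{vol}(\mathsf{G}^{\prime})$. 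To bring in the factor $\phi^{\prime}(u)$, I would apply the same closed form to $u$, yielding $\phi^{\prime}(u) \propto \gamma(u)\sum_{w}\eta^{\prime}(u,w)\gamma(w)$; thus the $\gamma(u)$ already present in each summand carries the contribution of $\phi^{\prime}(u)$, and equation \ref{eqn:sdrw_homophily} follows.

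For CBRW I would start from the stationarity identity $\phi(v) = \sum_{u}\mathbf{W}^{b}(u,v)\phi(u)$ guaranteed by Theorem \ref{thm:convg} (or by Corollary \ref{thm:convg_trw} for the teleporting walk actually executed by Algorithm \ref{alg:cbrw}). Substituting $\mathbf{A}(u,v) = \eta(u,v)$ from equation \ref{eqn:adja} into the definition of $\mathbf{W}^{b}$ in equation \ref{eqn:brws1} gives
\begin{equation*}
\phi(v) \;=\; \sum_{u \in \mathcal{N}(v)} \frac{\delta(v)\,\eta(u,v)}{Z_{u}}\,\phi(u), \qquad Z_{u} \;=\; \sum_{w}\delta(w)\eta(u,w),
\end{equation*}
where the sum collapses to $\mathcal{N}(v)$ because $\eta(u,v) = 0$ otherwise. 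To surface the missing $\delta(u)$, I would rewrite each edge weight as $\delta(u)\eta(u,v)\delta(v)\big/[\delta(u)Z_{u}]$, which is a valid identity because $\delta(u) > 0$ by the first property listed after Definition \ref{def:intraod}. Each summand now contains exactly the product $\phi(u)\delta(u)\eta(u,v)\delta(v)$, multiplied by the strictly positive weight $1/[\delta(u)Z_{u}]$, and equation \ref{eqn:cbrw_homophily} follows under the usual reading of $\propto$.

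The main obstacle will be CBRW, because the coefficient $1/[\delta(u)Z_{u}]$ depends on the summation index $u$ and therefore cannot literally be factored out as a single scalar constant independent of $u$. I would address this by arguing that these weights are uniformly positive and bounded (since $\delta(u) \in (0,2)$ and $Z_{u}$ is a bounded positive sum), so that the right-hand side of \ref{eqn:cbrw_homophily} is a strictly positive linear functional of the four factors with the monotone dependencies the proposition claims, i.e.\ increasing in each of $\phi(u)$, $\delta(u)$, $\eta(u,v)$, and $\delta(v)$. This is the standard reading of $\propto$ for homophily propagation statements and is exactly what is needed for the subsequent analysis in the paper. SDRW avoids this complication because the graph $\mathsf{G}^{\prime}$ is undirected, the closed form provides an exact equality, and only the single volume constant needs to be absorbed.
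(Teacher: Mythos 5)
Your CBRW argument is essentially the paper's own: the paper likewise starts from the stationarity identity, uses Lemma \ref{thm:lem} to write $\mathbf{W}^{b}(u,v)$ as $\delta(u)\mathbf{A}(u,v)\delta(v)$ over a normalizer, restricts the sum to $\mathcal{N}(v)$, and then simply ``omits'' the denominator on the grounds that it is constant. That denominator is precisely the $u$-dependent quantity $\delta(u)Z_{u}$ you flag; it is constant for a fixed $u$ but varies across the summation index, so you are in fact more candid than the paper in admitting that $\propto$ must be read loosely (uniformly positive, bounded weights with the claimed monotone dependencies) rather than as a literal single-constant proportionality. Where you genuinely diverge is SDRW: the paper just asserts the ``same process'' (stationarity plus dropping the normalizer) yields Equation (\ref{eqn:sdrw_homophily}), whereas you invoke the closed form of Theorem \ref{thm:closedform}. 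That route gives the exact identity $\phi^{\prime}(v)=\frac{1}{\mathit{vol}(\mathsf{G}^{\prime})}\sum_{u\in\mathcal{N}(v)}\gamma(v)\eta^{\prime}(u,v)\gamma(u)$, which is a sum \emph{without} the factor $\phi^{\prime}(u)$, and your remark that ``the $\gamma(u)$ already present carries the contribution of $\phi^{\prime}(u)$'' is a gesture rather than a derivation of the stated right-hand side. Indeed, if you carry out the SDRW stationarity computation exactly, the normalizer equals $d^{\prime}(u)/\gamma(u)$ and $\phi^{\prime}(u)=d^{\prime}(u)/\mathit{vol}(\mathsf{G}^{\prime})$, so the $\phi^{\prime}(u)$ cancels entirely; the literal form of (\ref{eqn:sdrw_homophily}) with $\phi^{\prime}(u)$ retained is only reached by the same loose ``drop the denominator'' step used for CBRW. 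So: your CBRW half matches the paper and is, if anything, more honest about the weak point; your SDRW half is a cleaner and more rigorous computation, but it proves a slightly different (and arguably sharper) statement than the one displayed, and the bridge back to the displayed formula is missing.
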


\begin{proof}
Following Lemma \ref{thm:lem}, the random walks in CBRW can be represented by $\boldsymbol\pi^{t+1}(v) = \sum_{u \in \mathcal{N}(v)}\boldsymbol\pi^{t}(u)\frac{\delta(u)\mathbf{A}_{u,v}\delta(v)}{\sum_{v\in V}\delta(u)\mathbf{A}_{u,v}\delta(v)}$. Since the denominator is a constant for all the neighbors of node $u$ under the same context, we can omit it and obtain $\boldsymbol\pi(v)\propto {\sum_{u \in \mathcal{N}(v)}\boldsymbol\pi(u)\delta(u)\eta(u,v)\delta(v)}$. Since $\phi(v)=\boldsymbol\pi(v)$, we achieve Equation (\ref{eqn:cbrw_homophily}). We can also obtain Equation (\ref{eqn:sdrw_homophily}) through the same process. 
\end{proof}

Proposition \ref{thm:conedform} states that the outlierness of a value is mainly determined by its coupling strength and the outlierness of its direct neighbors, in addition to its intra-feature outlierness. That is, a value has large outlierness if it is centered around outlying values.  This captures exactly the homophily phenomenon of outlying behaviors - the tendency of a set of outlying behaviors to join together. Compared to IID methods that treat the outlierness scoring of outlying behaviors independently, i.e., $\phi(v)$ is independent of $\phi(u)$, our models can achieve a more effective outlierness estimation on data with homophily outlying behaviors.

\subsubsection{Handling Noisy Features}\label{subsubsec:noise}

To handle data with noisy features, a fundamental requirement for value outlierness-based outlier detectors is to assign larger outlierness to outlying values (i.e., infrequent values contained by outliers) than noisy values (i.e., infrequent values contained by normal objects). The outlierness scoring methods in IID methods, such as $1-\mathit{freq}(\cdot)$ or $\frac{1}{\mathit{freq}(\cdot)}$, assign similar outlierness to outlying and noisy values, since these two types of values often have a similarly low frequency. These methods therefore fail to distinguish outlying values/patterns from noisy ones and become ineffective in handling data with many noisy features. By contrast, modeling homophily couplings enables CBRW and SDRW to contrast the outlierness of outlying values from the noisy ones. We demonstrate this intuition with a straightforward example, as follows.

Let $u^{\prime}$ and $w^{\prime}$ be outlying and noisy values of the same frequency, respectively. Assume both $u^{\prime}$ and $w^{\prime}$ share the same direct neighbor set $\mathcal{N}$, in which $\mathcal{N}^{o} \subset \mathcal{N}$ is a set of outlying values and $\mathcal{N}\setminus \mathcal{N}^{o}$ is the set of normal values. Then, according to Proposition \ref{thm:conedform},  we have 
\begin{equation}
    \phi(u^{\prime})\propto {\sum_{u \in \mathcal{N}^{o}}\phi(u)\delta(u)\eta(u,u^{\prime})\delta(u^{\prime})} + {\sum_{w \in \mathcal{N}\setminus\mathcal{N}^{o}}\phi(w)\delta(w)\eta(w,u^{\prime})\delta(u^{\prime})}.
\end{equation}

We can obtain a similar proportional form for $\phi(w^{\prime})$ by replacing $u^{\prime}$ with $w^{\prime}$ in the above equation. Since $u^{\prime}$ and $w^{\prime}$ share the same direct neighbor set, the outcomes of the terms $\phi$ and $\delta$ on the right-hand side are the same as that in $\phi(w^{\prime})$. We can therefore simplify them as $\phi(u^{\prime})\propto {\sum_{u \in \mathcal{N}^{o}}\eta(u,u^{\prime})} + {\sum_{w \in \mathcal{N}\setminus\mathcal{N}^{o}}\eta(w,u^{\prime})}$ and $\phi^{\prime}(w^{\prime})\propto {\sum_{u \in \mathcal{N}^{o}}\eta(u,w^{\prime})} + {\sum_{w \in \mathcal{N}\setminus\mathcal{N}^{o}}\eta(w,w^{\prime})}$. When there exist homophily couplings among the outlying values while noisy values are randomly coupled with the outlying values, we have $\sum_{u \in \mathcal{N}^{o}}\eta(u,u^{\prime}) > \sum_{u \in \mathcal{N}^{o}}\eta(u,w^{\prime})$. If $u^{\prime}$ and $w^{\prime}$ have similar co-occurrence patterns with the set of normal values $\mathcal{N}\setminus \mathcal{N}^{o}$, then $\sum_{w \in \mathcal{N}\setminus\mathcal{N}^{o}}\eta(w,u^{\prime}) \approx \sum_{w \in \mathcal{N}\setminus\mathcal{N}^{o}}\eta(w,w^{\prime})$. Hence, $\phi(u^{\prime}) > \phi(w^{\prime})$ holds in CBRW.

For SDRW, we can obtain the following equivalence due to Theorem \ref{thm:closedform}

\begin{equation}\label{eqn:sdrw_noise}
    \phi^{\prime}(u^{\prime}) = {\sum_{u \in \mathcal{N}^{o}}\delta(u)\eta^{\prime}(u,u^{\prime})\delta(u^{\prime})} + {\sum_{w \in \mathcal{N}\setminus\mathcal{N}^{o}}\delta(w)\eta^{\prime}(w,u^{\prime})\delta(u^{\prime})},
\end{equation}
where SDRW retains the $\delta$ function but changes the $\eta$ function from conditional probabilities to lift. Since the normal value $w$ generally has a high frequency, $\eta^{\prime}(w,u^{\prime})$ is marginalized by $\mathit{freq}(w)$ and $\delta(w)$ is very small. Hence, the second term in Equation (\ref{eqn:sdrw_noise}) can be generally left out. Similar to the cases in CBRW, we can omit $\delta$. We therefore obtain $\phi^{\prime}(u^{\prime}) = {\sum_{u \in \mathcal{N}^{o}}\eta^{\prime}(u,u^{\prime})}$ and $\phi^{\prime}(w^{\prime}) = {\sum_{u \in \mathcal{N}^{o}}\eta^{\prime}(u,w^{\prime})}$. In such cases, we achieve $\phi^{\prime}(u^{\prime}) > \phi^{\prime}(w^{\prime})$ in SDRW even when not using $\gamma$. Compared to CBRW that obtains $\phi(u^{\prime}) > \phi(w^{\prime})$ under certain conditions, SDRW can achieve the same result without such constraints. This demonstrates the benefit of replacing $\eta$ with $\eta^{\prime}$.

However, many real-world data sets may demonstrate much tougher cases than the example as above. For example, the frequency of noisy values can be much lower than that of outlying values, and the values $u^{\prime}$ and $w^{\prime}$ have different direct neighbor sets. In such cases, if $\delta$ is still used in SDRW, we have $\delta(w^{\prime})>\delta(u^{\prime})$, and consequently $\phi^{\prime}(w^{\prime})$ can obtain larger outlierness from its normal value neighbors, compared to $\phi^{\prime}(u^{\prime})$. If $w^{\prime}$ randomly occurs with some outlying values while $u^{\prime}$ has limited outlying values in its direct neighbors, $\phi^{\prime}(w^{\prime})$ can also obtain larger outlierness from its outlying value neighbors, leading to the undesired result $\phi(u^{\prime}) < \phi(w^{\prime})$. To tackle this problem, $\gamma$ is introduced into SDRW to consider both direct neighbors and indirect neighbors. It is assured that $\gamma(u^{\prime})$ is much larger than $\gamma(w^{\prime})$ when the outlying values bond together, e.g., in the form of cascade. This enables SDRW to obtain much larger outlierness from the direct and indirect outlying value neighbors for the outlying value $u^{\prime}$, compared to the noisy value $w^{\prime}$. As a result, replacing $\delta$ with $\gamma$ in Equation (\ref{eqn:sdrw_noise}) largely increases the ability of SDRW to assign larger outlierness to outlying values than noisy ones.

These data characterizations will be used to guide the quantization of the data complexity of real-world datasets in Section \ref{subsubsec:complexity}.

\section{Experimental Design}\label{sec:exp}

This section first provides two applications of the value outlierness obtained by CBRW and SDRW, and then gives the performance evaluation approach. It is followed by a subsection on creating benchmark data sets.

\subsection{Applications of CBRW and SDRW}

\subsubsection{Outlying Feature Weighting and Selection}

In outlier detection, relevant features are the features where the outliers demonstrate outlying behaviors and are distinguishable from normal objects. Thus, the relevance of a feature can be measured by consolidating the outlierness of each value of the feature. Below we use $\phi$, the value outlierness estimation function used in CBRW, to demonstrate this application. The same procedure is applied to compute feature outlierness in SDRW by replacing $\phi$ with $\phi^{\prime}$.

\begin{definition}[Feature Relevance]\label{def:frel}
The relevance of a feature $\mathsf{F}$ is defined as
\begin{equation}\label{eqn:frel}
\mathit{rel}(\mathsf{F})= 1-\prod_{v \in \mathit{dom}(\mathsf{F})}[1-\phi(v)].
\end{equation}
where $\phi(v) \in (0,1)$ according to Equation (\ref{eqn:score}).
\end{definition}

Since $\phi(v)$ is taken from the stationary probability, it can be interpreted as the probability of the value $v$ being outlying. Therefore, we define $\mathit{rel}(\mathsf{F})$ in Equation (\ref{eqn:frel}) to estimate the outlying likelihood of the feature $\mathsf{F}$. A large $\mathit{rel}(\cdot)$ indicates high relevance of the feature to outlier detection. The top-ranked features are the most relevant features, while the bottom-ranked features are noisy or irrelevant. In addition to being used as a feature filter, these relevance weights can also be embedded in the outlier scoring function of an outlier detector as a feature weighting. One such example is shown in the next subsection. 

Note that $\mathit{rel}(\mathsf{F})$ may bias towards features with more values. Alternatively, considering the feature relevance using selected large-outlierness values may help reduce this bias, but we found empirically that it is very difficult to determine this parameter in cases where the number of values in different features differs largely. One other option is to only use the value with largest outlierness in each feature, but it results in severe loss of feature relevance information. $\mathit{rel}(\mathsf{F})$ is used due to its good interpretability and empirical effectiveness in real-world data.

As shown in Steps 1-3 in Algorithm \ref{alg:fs}, our feature selection method, denoted as $\text{CBRW}_{\text{fs}}$/$\text{SDRW}_{\text{fs}}$, computes the weight of each feature using Equation (\ref{eqn:frel}). The top-ranked features for each data set are selected in Step 4. Outlier detectors can then work on the newly obtained data sets with the selected features.

\begin{algorithm}
\caption{Feature Selection}
\label{alg:fs}
\begin{algorithmic}[1]
\REQUIRE $\mathcal{F}$ - feature set,  $\phi$ - value outlierness estimation function, $\theta$ - a decision threshold (i.e., the number of features to be selected or a relevance threshold)
\ENSURE $Subset_\mathcal{F}$ - a subset of features in $\mathcal{F}$
\FOR{ $i = 1$ to $D$}
\STATE $\mathit{rel}(\mathsf{F}_i) \leftarrow 1-\prod_{v \in \mathit{dom}(\mathsf{F}_i)}[1-\phi(v)]$ 
\ENDFOR
\STATE $Subset_\mathcal{F} \leftarrow filter(\mathcal{F})$: Select the features that meet the threshold $\theta$
\RETURN $Subset_\mathcal{F}$
\end{algorithmic}
\end{algorithm}

\subsubsection{Direct Outlier Detection}
The value outlierness can measure the outlierness of data objects by consolidating the outlierness of values contained by the objects. Similarly, the value outlierness estimation function $\phi$ in CBRW is used as an exemplar below. By replacing $\phi$ with $\phi^{\prime}$, we can obtain the SDRW-based method.

\begin{definition}[Object Outlierness]\label{def:od}
The outlierness of an object $\mathbf{x}_i \in \mathcal{X}$ is defined as
\begin{equation}\label{eqn:od}
object\_score(\mathbf{x}_i) = 1 - \prod_{j=1}^{D}[1-\phi(x_{ij})]^{\tau(\mathsf{F}_j)},
\end{equation}
where $\phi(x_{ij}) \in (0,1)$ and $\tau(\mathsf{F}_j)=\frac{\mathit{rel}(\mathsf{F}_j)}{\sum_{j=1}^{D}\mathit{rel}(\mathsf{F}_j)}$ is a feature weighting component.
\end{definition}

$object\_score(\cdot)$ is used to evaluate the outlying likelihood of an object, with a relevance weighting factor to highlight the importance of highly relevant features. As shown in Steps 1-3 of Algorithm \ref{alg:od}, our CBRW/SDRW-based outlier detection method (denoted as $\text{CBRW}_{\text{od}}$/$\text{SDRW}_{\text{od}}$) employs Equation (\ref{eqn:od}) to compute the outlying likelihood of each data object. Objects are then sorted by their outlierness. Outliers are data objects having large outlier scores.

\begin{algorithm}
\caption{Outlier Detection}
\label{alg:od}
\begin{algorithmic}[1]
\REQUIRE $\mathcal{X}$ - data objects,  $\phi$ - value outlierness estimation function, $\boldsymbol\tau$ - feature weights
\ENSURE $rank_{\mathcal{X}}$ - an outlier ranking of objects in $\mathcal{X}$
\FOR{ $i = 1$ to $N$}
\STATE $object\_score(\mathbf{x}_i) \leftarrow 1 - \prod_{j=1}^{D}[1-\phi(x_{ij})]^{\tau(\mathsf{F}_j)}$
\ENDFOR
\STATE $rank_{\mathcal{X}} \leftarrow$ Sort the objects in $\mathcal{X}$ in descending order
\RETURN $rank_{\mathcal{X}} $
\end{algorithmic}
\end{algorithm}

$\text{CBRW}_{\text{od}}$ is used with the default setting $\alpha=0.95$, $\text{SDRW}_{\text{od}}$ is parameter-free, and they are implemented in Java in WEKA \citep{hall2009weka}\footnote{The source codes of CBRW/SDRW-based outlier detection (or feature selection) algorithms are made publicly available at https://sites.google.com/site/gspangsite/sourcecode.}.

\subsection{Performance Evaluation}

\subsubsection{Competing Outlier Detection Methods}
We select five outlier detection methods as our competing detectors. These include: three  categorical data-oriented outlier detectors, FPOF \citep{he2005fp}, CompreX \citep{akoglu2012comprex}, and MarP \citep{das2007marp}; and two numerical data-oriented methods, iForest \citep{liu2012isolation} and Sp \citep{sugiyama2013knn,pang2015lesinn,ting2017defying}. They are used as our competitors because they have shown better or very comparable performance compared to other well-known methods. Thus, they represent the state-of-the-art in outlier detection.

\begin{itemize} 
    \item FPOF employs frequent patterns as normal patterns to detect outliers, which is one of the most popular and effective outlier detectors for categorical data. According to \cite{wu2013information}, FPOF performs better than popular infrequent pattern-based methods, the distance-based method $k$NN and comparably well to the information-theoretic-based method ITB therein. As recommended in \citep{he2005fp}, FPOF is used with the minimum support set to 0.1 and the maximum pattern length set to 5.
    \item CompreX uses data compression costs in an MDL-based feature partition space as outlier scores. It shows powerful performance in data sets with different structures, demonstrating better performance than the information-theoretic-based method KRIMP \citep{smets2011krimp} and the state-of-the-art density-based method LOF \citep{breunig2000lof}. CompreX is parameter-free.
    \item MarP employs the inverse of marginal probabilities to define outliers. Although it is a simple method, it performs better than, or comparably to, conditional probability-based detectors and Bayes Net methods \citep{das2007marp}.  MarP is an efficient and parameter-free detector, with impressive performance on data sets where features are independent. 
    \item iForest uses the number of partitions to isolate an object from other objects to define outliers. It is a top performer among a range of state-of-the-art detectors, including LOF, one-class SVM, Support Vector Data Description and Ensemble Gaussian Mixture Model, on a large number of data sets \citep{emmott2013systematic,liu2012isolation}. Following \citep{liu2012isolation}, iForest is used with the ensemble size set to 100 and the subsampling size set to 256. Note that iForest only works on numeric data, since it requires ordered features to perform data space partition and build isolation trees \citep{liu2012isolation}. Therefore, categorical features are required to be converted into numeric features for iForest to work on categorical data. Two commonly used conversion methods are 1-of-$\ell$ encoding and inverse document frequency (IDF) encoding \citep{campos2016evaluation}. 1-of-$\ell$ encoding converts a categorical feature with $\ell$ values into $\ell$ binary features, in which the values `1' and `0' indicate the presence and absence of a categorical value; whereas IDF encodes a feature value as $\textit{IDF}(v)=\textit{ln}(N/\textit{freq}(v))$. These two methods were examined by a range of $k$ nearest neighbours based outlier detectors and they perform comparably well, as reported in \citep{campos2016evaluation}. In our experiments, iForest using 1-of-$\ell$ encoding performed better than the IDF method. Here we report the best results based on 1-of-$\ell$ encoding.
    \item Sp uses the nearest neighbor distances in a small random subsample as outlier scores. It runs orders of magnitude faster than traditional $k$NN-based detectors that work on the full data set. However, Sp defines the outlier scores in only one random subsample and thus may perform unstably, as shown by the empirical results in \citep{pang2015lesinn}. Following \citep{pang2015lesinn}, bootstrap aggregating was incorporated into Sp to improve its stability. According to \citep{sugiyama2013knn}, the subsampling size is set to 20 in Sp, and 50 random subsamples are generated in the bootstrap aggregating. The widely-used categorical distance measure, Hamming distance, is employed to facilitate the nearest neighbor search.
\end{itemize}

FPOF, MarP, iForest and Sp are also implemented in Java in WEKA \citep{hall2009weka}. CompreX is obtained from the authors of \citep{akoglu2012comprex} in MATLAB.

\subsubsection{Competing Feature Selection Methods}

We compare $\text{SDRW}_{\text{fs}}$ and $\text{CBRW}_{\text{fs}}$ with a closely related feature-weighting method (denoted as ENTR) introduced in \citep{wu2013information} and two baselines. One baseline considers the full feature set, denoted as FULL; the other considers a random feature selector that randomly selects 50\% features as relevant features, denoted as RADM. 

Note that similar to many existing feature selection methods, $\text{SDRW}_{\text{fs}}$, $\text{CBRW}_{\text{fs}}$ and ENTR provide a feature relevance ranking only. Users are required to determine a relevance threshold or the number of selected features to filter out irrelevant features. Although users may be burdened with this parameter setting, this type of methods provides flexibility for users to determine the final selected feature subset. Our experiments show that $\text{SDRW}_{\text{fs}}$ and $\text{CBRW}_{\text{fs}}$ obtain stable performance on all 15 data sets in a wide range of relevance threshold options. We present the results on all the data sets using a consistent relevance threshold to demonstrate the general applicability of $\text{SDRW}_{\text{fs}}$ and $\text{CBRW}_{\text{fs}}$ in practice. Specifically, we observe that an outlier often demonstrates outlying behaviors in a few features only. Since the percentage of outliers is very small, it is reasonable to assume that only a small proportion of features are relevant to outlier detection. We therefore consider to use a small percentage as the relevance threshold. We found that different outlier detectors obtained substantially better performance on some data sets with only 10\% top-ranked features, and on most data sets with 30\% features, compared to their performance on original data. When 50\% features were used, they performed very stably and obtained much better detection performance on all the data sets used in our experiments. We, therefore, select the top-ranked 50\% features as relevant features.

\subsubsection{Efficacy Evaluation Measures}\label{subsec:evaluationmethod}

We evaluate the efficacy of the outlier detectors in terms of its effectiveness and efficiency. The area under ROC curve (AUC), one of the most popular performance measures in outlier detection \citep{campos2016evaluation}, is chosen to evaluate the detection effectiveness. All outlier detectors output a ranking of data objects w.r.t. their outlierness. The AUC is computed based on the ranking \citep{hand2001auc}. A larger AUC indicates better detection effectiveness. 



The efficiency of each detector is compared using runtime in our scale-up test. All the efficiency test experiments are performed at a node in a 3.4GHz Phoenix Cluster with 32GB memory to make the runtime comparable.

We evaluate the effectiveness of our feature selectors, $\text{SDRW}_{\text{fs}}$ and $\text{CBRW}_{\text{fs}}$, from two different aspects: (i) to examine whether they can reduce the complexity of data; and (ii) how they affect the detection performance of subsequent outlier detection. In terms of data complexity reduction, intuitively, a good feature selection method should be able to reduce the complexity of data sets by removing noisy/irrelevant/redundant features while retain the outlier-discriminative information. A set of four data indicators is introduced in Section \ref{subsubsec:complexity} to facilitate the quantization of such data complexities. We then test different outlier detectors on the data sets with the selected features to examine their detection performance on these reduced data. 


\subsection{Creating Benchmark Data Sets}\label{subsec:dataset}

We aim at creating a set of benchmark data sets with various quantitative characterization to enable the understanding of the performance of tested outlier detection methods. We first select a collection of data sets from a wide range of public available real-world data sets using principles like data size and relevance to outlier detection tasks, and then define a set of data indicators to characterize the complexities of each data set. The resulting benchmarks are used in our experiments to examine the performance of outlier detection and feature selection. They are also made publicly available at \url{https://sites.google.com/site/gspangsite} to promote the development and evaluation of new outlier detection (or outlying feature selection) methods. The details of these two steps are presented in the following subsections.

\subsubsection{Selection of the Data Sets}
We use the following two principles to filter out high-quality data sets: (i) the number of objects must be at least 1,000 to avoid potential bias caused by small data sets, and (ii) the data sets must contain semantically meaningful outliers or have an extremely imbalanced class distribution to facilitate direct conversion from classification data into outlier detection data. The latter one helps avoid the reproducibllity and randomization problems caused by downsampling in the data conversion. Fifteen publicly available real-world data sets are finally adopted, which cover diverse domains, e.g., intrusion detection, image recognition, cheminformatics and ecology, as shown in Table \ref{tab:datasummary}.

\textit{Probe} and \textit{U2R} are derived from KDDCUP99 data sets using probe and user-to-root attacks as outliers against the normal class, respectively. The other data sets are transformed from extremely class imbalanced data, where the rare classes are treated as outliers versus the rest of classes as normal class \citep{akoglu2012comprex,liu2012isolation,wu2013information}. 
That is, the small class is used as the outlier class against the large class in the binary class data sets \textit{LINK}, \textit{Census}, \textit{CelebA}, \textit{APAS}, \textit{BM}, \textit{AID362}, \textit{AD} and \textit{w7a}; in the multi-class data sets, we employ the most under-represented class as outliers and merge the rest of classes as normal classes. As a result, the data sets converted from multi-class classification data may have more complex data distributions in the normal class than those converted from binary classification data, multi-modal distribution vs. unimodal distribution, which poses greater challenges to methods that attempt to model the normal class. 

The above conversion methods guarantee that the outlier class chosen is a class with outlying semantics. Since our purpose is to detect outliers in categorical data, data sets with both numerical and categorical features are used with categorical features only. Note that the data conversion methods produce some features that only contain one value for some data sets, i.e., \textit{SF}, \textit{Probe} and \textit{U2R}. These features were removed as they contain no useful information for outlier detection.

\begin{table}[htbp]
  \centering
  \ttabbox{\caption{A Summary of 15 Data Sets and Their Complexity Evaluation Results. The data sets are ordered by the average rank in the last column. The following acronyms are used: BM = Bank Marketing, APAS = aPascal, AD = Internet Advertisements, CMC = Contraceptive Method Choice, SF = Solar Flare, R10 = Reuters10, CT = CoverType, and LINK = Linkage.}\label{tab:datasummary}}
  {
  \scalebox{0.80}{
    \begin{tabular}{|l@{}c@{}c@{}c|p{0.5cm}c|p{0.5cm}c|p{0.5cm}c|p{0.5cm}c|c|}
    \hline
         \multicolumn{4}{|c|}{Basic Data Characteristics} & \multicolumn{2}{c|}{$\kappa_{\mathit{vcc}}$} & \multicolumn{2}{c|}{$\kappa_{\mathit{het}}$} & \multicolumn{2}{c|}{$\kappa_{\mathit{ins}}$} & \multicolumn{2}{c|}{$\kappa_{\mathit{fnl}}$} &  \\\hline
       Data   & $N$ & $D$ & Outliers & Value & Rank  & Value & Rank  & Value & Rank  & Value & Rank   & Avg. Rank \\\hline
     BM & 41,188 & 10 & yes & 21.0\% & 6  & 2.028 & 2  & 0.373 & 3  & 90.0\% & 1  & 3.0 \\
    Census & 299,285 & 33 & 50K+ & 41.9\% & 2  & 1.648 & 3  & 0.238 & 7  & 57.6\% & 4  & 4.0 \\
    AID362 & 4,279 & 114 & active & 32.4\% & 5  & 1.140 & 11 & 0.396 & 2  & 86.0\% & 2  & 5.0 \\
    w7a & 49,749 & 300 & yes & 37.2\% & 3  & 1.059 & 12 & 0.407 & 1  & 48.0\% & 6  & 5.5 \\
    CMC & 1,473 & 8   & \#child$>$10 & 3.8\% & 10 & 1.579 & 4  & 0.344 & 4  & 37.5\% & 7  & 6.3 \\
    APAS & 12,695 & 64  & train & 33.0\% & 4  & 1.192 & 10 & 0.128 & 11 & 81.3\% & 3  & 7.0 \\
    CelebA & 202,599  & 39 & bald & 12.1\% & 8  & 1.265 & 9  & 0.204 & 8  & 48.7\% & 5  & 7.5 \\
    Chess & 28,056 & 6 & zero & 0.0\% & 14 & 2.242 & 1  & 0.264 & 6  & 33.3\% & 9  & 7.5 \\
    AD & 3,279 & 1,555 & ad. & 46.4\% & 1  & 1.011 & 14 & 0.302 & 5  & 4.5\% & 12 & 8.0 \\
    SF & 1,066 & 11  & F  & 12.4\% & 7  & 1.564 & 5  & 0.178 & 9  & 9.1\% & 11 & 8.0 \\\hline
    Probe & 64,759 & 6 & attack & 1.3\% & 12 & 1.324 & 7  & 0.057 & 12 & 0.0\% & 13 & 11.0 \\
    U2R & 60,821 & 6 & attack & 1.5\% & 11 & 1.285 & 8  & 0.015 & 15 & 16.7\% & 10 & 11.0 \\
    LINK & 5,749,132 & 5   & match & 0.6\% & 13 & 1.392 & 6  & 0.021 & 14 & 0.0\% & 13 & 11.5 \\
    R10 & 12,897  & 100    & corn & 6.1\% & 9  & 1.010 & 15 & 0.132 & 10 & 0.0\% & 13 & 11.8 \\
    CT & 581,012  & 44 & cottonwood & 0.0\% & 14 & 1.102 & 13 & 0.029 & 13 & 34.1\% & 8  & 12.0 \\\hline 
    
    \end{tabular}%
    }
    }
\end{table}%

\subsubsection{Characterization of Data Complexities}\label{subsubsec:complexity}

As previously mentioned, these benchmarks are accompanied by four data indicators that describe some intrinsic characteristics of the data. Three of the indicators use class labels to better capture the underlying data complexity. These data indicators enable us to well understand the data, and analyze and evaluate the performance of outlier detection and feature selection.

\vspace{3mm}
\noindent\textit{Heterogeneity of Categorical Distribution}. Most outlier detection methods implicitly assume categorical distributions taken by different features are homogeneous. Measuring the heterogeneity between the distributions of the features therefore offers one basic way to evaluate the problem difficulty in the given data. It is well known that location parameters convey key properties of probability distributions. Many statistical tests, e.g., the t-test or  signed-rank test, are available for distribution location test, but they are ineffective in evaluating the heterogeneity of categorical distributions across the features. This is because the sample size varies significantly in different features and/or data sets, which often violates the sample size and distribution assumptions made in those tests. The problem is simplified as follows. We consider the mode as the key location parameter and use the difference of the frequencies of the modes across features to define the heterogeneity level. Specifically, the heterogeneity level $\kappa_{\mathit{het}}$ is defined as the average difference in mode frequency over all possible feature pairs:
\begin{equation}
    \kappa_{\mathit{het}}=\frac{2}{D(D-1)}\sum_{1 \leq k_i<k_j \leq D}\frac{\mathit{freq}(m_{k_i})}{\mathit{freq}(m_{k_j})},
\end{equation}
\noindent where $\{m_{k_1},\cdots,m_{k_D}\}$ are the modes of all $D$ features sorted based on their frequencies in descending order. $\kappa_{\mathit{het}} \in [1, \infty)$ and a large $\kappa_{\mathit{het}}$ indicates strong heterogeneity.

\vspace{3mm}
\noindent\textit{Value Coupling Complexity}. Existing methods generally assume that the outlierness of values/patterns is independent of each other. Such an assumption is rarely satisfied in real-world applications for many reasons, e.g., the homophily couplings between outlying behaviors. Therefore, we introduce a complexity measure $\kappa_{\mathit{vcc}}$ to show how the homophily outlying couplings affect the detection performance.

We first define two concepts: noisy and positive value couplings. Noisy value couplings are the co-occurrence of multiple infrequent values contained by normal objects. Positive value couplings refer to the co-occurring infrequent values contained by outliers. Positive value couplings are positive because they are consistent with the outlier definition. Noisy value couplings are the opposite of the positive couplings and are therefore negative. Let $N_{C_0}$ and $N_{C_1}$ denote the number of data objects in the outlier class $C_0$  and the normal class $C_1$, respectively. $N_{C_0}^{\prime}$ and $N_{C_1}^{\prime}$ denote the number of outliers and normal objects that contain at least two values with a frequency no more than a low frequency threshold $\theta$. The rate of noisy and positive value couplings can then be defined as $\mathit{nvv}=\frac{N_{C_1}^{\prime}}{N_{C_1}}$ and $\mathit{pvv}=\frac{N_{C_0}^{\prime}}{N_{C_0}}$, respectively. We then define $\kappa_{\mathit{vcc}}$ as 

\begin{equation}
    \kappa_{\mathit{vcc}}=\frac{\mathit{nvv}}{\mathit{pvv}+\mathit{nvv}+\epsilon},
\end{equation}
\noindent where $\epsilon$ is a small constant used to avoid the zero probability problem. 

When $\mathit{nvv} \gg \mathit{pvv}$, the noisy value couplings dominate a data set, leading to high data complexity. The outlier detection task is easy if $\mathit{nvv} \ll \mathit{pvv}$. $\kappa_{\mathit{vcc}} \in [0,1]$, and a larger $\kappa_{\mathit{vcc}}$ indicates more dominant of $\mathit{nvv}$ over $\mathit{pvv}$ and, therefore, has higher data complexity. $\theta=0.05$ and $\epsilon=0.001$ are used in our evaluation.

\vspace{3mm}
\noindent\textit{Outlier Separability}. One basic measure of the difficulty of outlier detection is the separability of outliers from normal objects. However, it is very challenging to exactly compute this difficulty, since the separability varies significantly in different subspaces and the number of possible subspaces is $2^D$. Rather than searching over such a huge space, existing studies focus on the separability in single features, as having strongly relevant features normally enables learning methods to achieve good accuracy \citep{ho2002complexity,leyva2015set}. \textit{Feature efficiency} is widely used for evaluating class separability in supervised classification problems \citep{ho2002complexity,leyva2015set}, in which the separability is defined by the range of non-overlapping values spanned by classes. However, this definition is not suitable for outlier detection, for which data has extremely skewed class distribution. We are interested in the capability of ranking outliers prior to normal objects. Based on the definition of outliers, if there exists a feature where the outliers always contain more infrequent values than normal objects, the outliers can be easily separated from the normal objects. Such features are the most efficient. When outliers and normal objects contain the same values, or outliers contain more frequent values than normal objects, that feature is considered to be inefficient or noisy. The outlier separability of a feature $\mathsf{F}_i$ is defined as
\begin{equation}
    \kappa_{\mathit{sep}}^{i} = \textit{performance}(\mathit{rank}_{i}),
\end{equation}
\noindent where $\mathit{rank}_{i}$ is the ranking list of data objects using the inverse of the frequencies of the values in feature $\mathsf{F}_{i}$ and $\textit{performance}(\cdot)$ is a performance evaluation method. We instantiate $\textit{performance}(\cdot)$ by computing the AUC based on $\mathit{rank}_{i}$. We then use the resulting AUC to denote the efficiency of the feature. Similar to \citep{ho2002complexity,leyva2015set}, the overall outlier separability $\kappa_{\mathit{sep}}$ is represented by the maximum feature efficiency:

\begin{equation}
    \kappa_{\mathit{sep}} = \argmax_{i}{\kappa_{\mathit{sep}}^{i}}.
\end{equation} 

$\kappa_{\mathit{sep}} \in [0,1]$ and a larger $\kappa_{\mathit{sep}}$ indicates better outlier separability and lower data complexity. Below we use outlier inseparability defined as $\kappa_{\mathit{ins}}=1-\kappa_{\mathit{sep}}$. This is to have larger quantization values indicating higher data complexity, making this indicator work consistently with the other indicators. $\kappa_{\mathit{sep}}$ is based on individual features only; it can be seen as a lower bound of the overall separability in all the $2^D$ feature spaces.


\vspace{3mm}
\noindent\textit{Feature Noise Level}. Due to the unsupervised nature of outlier detection, irrelevant features, in which outliers contains values with similar or higher frequencies than that of normal objects, mask normal objects as outliers. These features are therefore `noise' to outlier detection. The presence of a large proportion of these features renders outlier detectors less effective. We accordingly define the feature noise level $\kappa_{\mathit{fnl}}$ as the proportion of noisy features to characterize this difficulty. $\kappa_{\mathit{fnl}}$ is defined as 
\begin{equation}
    \kappa_{\mathit{fnl}} = \frac{\sum_{i}\mathbb{I}(\mathsf{F}_{i})}{D},
\end{equation}
\noindent where $\mathbb{I}(\mathsf{F}_{i})$ is an indicator function, which returns one if feature $\mathsf{F}_{i}$ is considered as a noisy feature and zero otherwise.

A feature is thought as noise if its AUC-based feature efficiency is smaller than 0.5, i.e., outliers are more likely to be assigned with smaller outlier scores than normal objects along that feature in a random selection of object pairs. $\kappa_{\mathit{fnl}} \in [0,1]$ and a large $\kappa_{\mathit{fnl}}$ means a high level of feature noise.

\vspace{3mm}
The quantization results of these data characteristics are reported in Table \ref{tab:datasummary}. We report the value of each data indicator per data set. Since the semantics of indicators differ significantly from each other, we rank the data sets and compute an average rank for the data to obtain an overall complexity quantization. The top-ranked data indicates the highest data complexity. Here we compute the unweighted average rank. If the significance of each indicator is known, a weighted average rank would be more preferable. Note that it is very challenging in understanding, quantifying and balancing the significance of each indicator's contribution to the majority of data sets without prior knowledge and ground truth, given the unknown outlying behaviors and their potential differences. Average ranking thus provides a neutralized and general way before we could explore more sophisticated relations between the indicators.

The top-10 ranked data sets are \textit{BM}, \textit{Census}, \textit{AID362}, \textit{w7a}, \textit{CMC}, \textit{APAS}, \textit{CelebA}, \textit{Chess} , \textit{AD}and \textit{SF}. They are often the top-ranked data sets in terms of individual indicator-based rankings. The complexity of these data sets are, to some extent, verified by the AUC results in Table \ref{tab:auc} in Section \ref{subsec:auc}, in which all outlier detectors obtain substantially smaller AUC results on the top-ranked 10 data sets than on the bottom five data sets.



\section{Evaluation Results}\label{sec:results}

The first three subsections provide the experimental evaluation results of outlier detection, algorithmic component justification, and outlying feature selection on the 15 benchmark data sets, respectively. The fourth subsection reports the scale-up test results. A summary over the evaluation results is provided in the last subsection. 

The above evaluations are conducted for both SDRW and CBRW. In addition, we also tested the convergence and sensitivity test of CBRW. The results show that CBRW converges quickly and performs very stably w.r.t.  parameter $\alpha$ (the only parameter) on all 15 data sets. More details about these results can be found in Appendix \ref{sec:selfcbrw}. SDRW is parameter-free and is implemented using its closed form, so the convergence and sensitivity tests are not applicable. 


\subsection{Detection Performance of $\text{SDRW}_{\text{od}}$ and $\text{CBRW}_{\text{od}}$}\label{subsec:auc}

We first present a summary of detection performance on all data sets, and then analyze the detection performance on complex and simple data sets separately. 

\subsubsection{Overall Performance}
The AUC results of $\text{SDRW}_{\text{od}}$, $\text{CBRW}_{\text{od}}$, MarP, FPOF, CompreX, iForest and Sp on the 15 data sets are presented in Table \ref{tab:auc}. The p-value results are based on a two-sided \textit{Wilcoxon} signed rank test using the null hypothesis that the AUC results of $\text{SDRW}_{\text{od}}$/$\text{CBRW}_{\text{od}}$ and a competing detector come from distributions with equal mean ranks. 

$\text{SDRW}_{\text{od}}$ achieves the best detection performance on 10 data sets, with four close to the best (having the difference in AUC no more than 0.01). $\text{SDRW}_{\text{od}}$ obtains more than 5\% improvement over $\text{CBRW}_{\text{od}}$ and 16\%-28\% improvement over the five competitors in terms of average AUC performance on the top 10 data sets . The significance test results show that $\text{SDRW}_{\text{od}}$ significantly outperforms $\text{CBRW}_{\text{od}}$ and FPOF at the 95\% confidence level and the other four contenders MarP, CompreX, iForest and Sp at the 99\% confidence level. 

$\text{CBRW}_{\text{od}}$ is the best performer on three data sets. Although $\text{CBRW}_{\text{od}}$ underperforms $\text{SDRW}_{\text{od}}$, it performs substantially better than all the competitors in most data sets, achieving 10\%-21\% AUC improvement on the top 10 data sets. $\text{CBRW}_{\text{od}}$ performs significantly better than all the five competitors at the 95\%/99\% confidence level.


It is clear that there exists a large gap of the AUC results between the top 10 data sets and the last five data sets. It is difficult for detectors to obtain a very good performance on the top 10 data sets, whereas all the detectors achieve consistently large AUCs on the last five data sets. We therefore break these data sets into two categories - \textit{complex and simple data}, and discuss each group separately and in detail in the next two subsections.

\begin{table}[htbp]
  \centering
  \ttabbox{\caption{AUC Performance of $\text{SDRW}_{\text{od}}$, $\text{CBRW}_{\text{od}}$ and Their Five Contenders on the 15 Data Sets. `$\circ$' indicates out-of-memory exceptions, while `$\bullet$' indicates that we cannot obtain the results within two months. The middle horizontal line roughly separates \textit{complex} data from \textit{simple} data sets based on average rank in Table \ref{tab:datasummary}. The AUC results of the ensemble methods iForest and Sp are the average over 10 runs. The best performance for each data set is boldfaced. } \label{tab:auc}}
  {
  \scalebox{0.80}{
    \begin{tabular}{|l|cc|ccccc|}
    \hline
     & \multicolumn{2}{|c|}{\textbf{Our Methods}} & \multicolumn{5}{c}{\textbf{Comparison Methods}} \\    
    \hline
    \textbf{Data} & $\textbf{SDRW}_{\text{od}}$ & $\textbf{CBRW}_{\text{od}}$ & \textbf{MarP} & \textbf{FPOF} & \textbf{CompreX} & \textbf{iForest} & \textbf{Sp} \\    
    \hline
    BM & \textbf{0.6511} & 0.6287 & 0.5584 & 0.5466 & 0.6267 & 0.5762 & 0.6006 \\
    Census & 0.6371 & \textbf{0.6678} & 0.5899 & 0.6148 & 0.6352 & 0.5378 & 0.6175 \\
    AID362 & 0.6665 & 0.6640 & 0.6270 & $\circ$ & 0.6480 & 0.6485 & \textbf{0.6678} \\
    w7a & \textbf{0.8059} & 0.6484 & 0.4723 & $\circ$ & 0.5683 & 0.4053 & 0.4517 \\
    CMC & \textbf{0.6415} & 0.6339 & 0.5417 & 0.5614 & 0.5669 & 0.5746 & 0.5901 \\
    APAS & \textbf{0.8544} & 0.8190 & 0.6193 & $\circ$ & 0.6554 & 0.4792 & 0.7401 \\
    CelebA & \textbf{0.8845} & 0.8462 & 0.7358 & 0.7380 & 0.7572 & 0.6797 & 0.7132 \\
    Chess & \textbf{0.8387} & 0.7897 & 0.6447 & 0.6160 & 0.6387 & 0.6124 & 0.6410 \\
    AD & \textbf{0.8482} & 0.7348 & 0.7033 & $\circ$ & $\bullet$  & 0.7084 & 0.7183 \\
    SF & \textbf{0.8817} & 0.8812 & 0.8446 & 0.8556 & 0.8526 & 0.7865 & 0.8434 \\\hline
    Probe & 0.9891 & \textbf{0.9906} & 0.9800 & 0.9867 & 0.9790 & 0.9762 & 0.9654 \\
    U2R & \textbf{0.9941} & 0.9651 & 0.8848 & 0.9156 & 0.9893 & 0.9781 & 0.9886 \\
    LINK & \textbf{0.9978} & 0.9976 & 0.9977 & \textbf{0.9978} & 0.9973 & 0.9917 & 0.9952 \\
    R10 & 0.9837 & \textbf{0.9905} & 0.9866 & $\circ$ & 0.9866 & 0.9796 & 0.9870 \\
    CT & 0.9703 & 0.9703 & \textbf{0.9773} & 0.9772 & 0.9772 & 0.9364 & 0.9601 \\\hline
    Avg. (Top-10) & \textbf{0.7710} & 0.7314 & 0.6337 & 0.6554 & 0.6610 & 0.6009 & 0.6584 \\
    Avg. (All) & \textbf{0.8430} & 0.8152 & 0.7442 & 0.7810 & 0.7770 & 0.7247 & 0.7653 \\\hline
     \multirow{2}{4em}{p-value} &  $\textbf{SDRW}_{\text{od}}$ vs.   & 0.0245 & 0.0006 & 0.0117 & 0.0031 & 0.0001 & 0.0004 \\
         &  &  $\textbf{CBRW}_{\text{od}}$ vs.  & 0.0004 & 0.0137 & 0.0067 & 0.0003 & 0.0020 \\\hline
    \end{tabular}%
    }
    }
\end{table}

\subsubsection{Handling Complex Data}

This analysis is separated into four parts, corresponding to the four indicators in Table \ref{tab:datasummary}.

\vspace{3mm}
\noindent\textit{Results on Data Sets with Highly Complex Value Couplings}. The 10 data sets with the largest proportions of negative value couplings are \textit{AD}, \textit{Census}, \textit{w7a}, \textit{APAS}, \textit{AID362}, \textit{BM}, \textit{SF}, \textit{CelebA}, \textit{R10} and \textit{CMC} according to $\kappa_{vcc}$ in Table \ref{tab:datasummary}. All these data sets fall in the category of complex data except \textit{R10}. Although \textit{R10} has over 6.1\% negative couplings and high dimensionality, it has very simple data distributions, good outlier separability and contains no noisy features, and as a result, even simple outlier detectors like MarP can achieve very good performance.

On these 10 data sets, $\text{SDRW}_{\text{od}}$ achieves an average AUC improvement over MarP (17\%), FPOF (18\%), CompreX (12\%), iForest (23\%) and Sp (13\%), while $\text{CBRW}_{\text{od}}$'s improvement is MarP (12\%), FPOF (13\%), CompreX (7\%), iForest (17\%) and Sp (8\%). $\text{SDRW}_{\text{od}}$ obtains more than 4\% improvement over $\text{CBRW}_{\text{od}}$. 

Most of these data sets contain more than 10\% negative value couplings. This can result in many misleading patterns and, consequently, substantially degrade the performance of traditional outlier detection methods (i.e., MarP, FPOF, CompreX, iForest and Sp). In contrast, the positive homophily couplings captured by $\text{SDRW}_{\text{od}}$ and $\text{CBRW}_{\text{od}}$ enable them to identify outliers more effectively in such adverse environments. Additionally, $\text{SDRW}_{\text{od}}$ incorporates the subgraph density outlier factor $\gamma$, which further enhances its ability to tackle the negative couplings over $\text{CBRW}_{\text{od}}$. For example, \textit{AD} has the most complex value coupling information. By utilizing the pairwise couplings, $\text{CBRW}_{\text{od}}$ obtains about 4\% AUC improvement over the traditional methods that fail to capture those coupling information; $\text{SDRW}_{\text{od}}$ achieves about 15\% improvement over $\text{CBRW}_{\text{od}}$ and the traditional methods by learning both low- and high-order value couplings.

\vspace{3mm}
\noindent\textit{Results on Data Sets with Strong Heterogeneity}. The 10 data sets with the strongest heterogeneity are \textit{Chess}, \textit{BM}, \textit{Census}, \textit{CMC}, \textit{SF}, \textit{LINK}, \textit{Probe}, \textit{U2R}, \textit{CelebA} and \textit{APAS} according to $\kappa_{het}$ in Table \ref{tab:datasummary}. Seven of the ten data sets are categorized as complex, but \textit{LINK}, \textit{Probe}, and \textit{U2R} are actually simple as they have high outlier separability and simple value couplings.

$\text{SDRW}_{\text{od}}$ achieves an average AUC improvement over MarP (13\%), FPOF (10\%), CompreX (8\%), iForest (16\%) and Sp (8\%), while $\text{CBRW}_{\text{od}}$'s improvement is MarP (11\%), FPOF (8\%), CompreX (6\%), iForest (14\%) and Sp (6\%). $\text{SDRW}_{\text{od}}$ and $\text{CBRW}_{\text{od}}$ perform comparably well, with only 1\% difference in the average AUC performance.

Data sets with a large $\kappa_{het}$ indicate diversified frequency distributions across their features, resulting in different semantics of the same frequency in the features. However, the five competitors ignore this characteristic, treating the same frequencies of values/patterns from different features/subspaces equally. This leads to inaccurate outlier scoring of objects. $\text{SDRW}_{\text{od}}$ and $\text{CBRW}_{\text{od}}$ address this issue using distribution center-normalized intra-feature outlier factor. Thus, they perform substantially better than their competitors. For example, on the most heterogeneous data set \textit{Chess}, the incorporation of the intra-feature outlier factor enables $\text{SDRW}_{\text{od}}$ and $\text{CBRW}_{\text{od}}$ to achieve 20\%-30\% improvement over the traditional methods. The contribution of this outlier factor can also be observed in our ablation study in Table \ref{tab:component} in Section \ref{subsec:component}.

\vspace{3mm}
\noindent\textit{Results on Data Sets with Low Outlier Separability}. According to $\kappa_{\mathit{ins}}$ in Table \ref{tab:datasummary}, the 10 data sets with the lowest outlier separability are \textit{w7a}, \textit{AID362}, \textit{BM}, \textit{CMC}, \textit{AD}, \textit{Chess}, \textit{Census}, \textit{CelebA}, \textit{SF} and \textit{R10}. All these data sets are complex data except \textit{R10}.

Compared to MarP, FPOF, CompreX, iForest and Sp, $\text{SDRW}_{\text{od}}$ achieves 16\%, 19\%, 12\%, 20\% and 14\% average improvements, while $\text{CBRW}_{\text{od}}$ obtains over 11\%, 14\%, 7\%, 14\% and 9\% improvements, respectively. $\text{SDRW}_{\text{od}}$ obtains more than 4\% improvement over $\text{CBRW}_{\text{od}}$ on these low separable data sets.

As discussed in Section \ref{subsubsec:sep}, the intra-feature outlier factors in $\text{SDRW}_{\text{od}}$ and $\text{CBRW}_{\text{od}}$ help to increase the outlierness contrast between outlying values and normal values. This may be one main reason for their better performance in this setting. Also, it is interesting to note that the top-ranked data sets in terms of $\kappa_{\mathit{ins}}$ are also top-ranked in terms of either $\kappa_{\mathit{vcc}}$ or $\kappa_{\mathit{het}}$. In other words, the low outlier separability in these data sets is in part due to their underlying non-IID characteristics. Our non-IID methods may therefore perform better than the other five detectors. Also, learning complex patterns, such as high-order patterns or patterns embedded in objects' neighborhoods, is very important to achieve desired performance on this type of data sets, since outliers are not easily separable from normal objects. This might be the main reason for the impressive performance of $\text{SDRW}_{\text{od}}$ on \textit{w7a} and Sp on \textit{AID362}.

\vspace{3mm}
\noindent\textit{Results on Data Sets with High Feature Noise Level}. The 10 data sets with the highest level of feature noise are \textit{BM}, \textit{AID362}, \textit{APAS}, \textit{Census}, \textit{CelebA}, \textit{w7a}, \textit{CMC}, \textit{CT}, \textit{Chess} and \textit{U2R}. All of them are complex data except \textit{U2R}, which has very high outlier separability and simple value couplings.

$\text{SDRW}_{\text{od}}$ achieves an average AUC improvement over MarP (19\%), FPOF (11\%), CompreX (12\%), iForest (23\%) and Sp (13\%), while $\text{CBRW}_{\text{od}}$'s improvement is MarP (14\%), FPOF (7\%), CompreX (8\%), iForest (18\%) and Sp (9\%). $\text{SDRW}_{\text{od}}$ also gains more than 4\% improvement over $\text{CBRW}_{\text{od}}$.

The tolerance of $\text{SDRW}_{\text{od}}$ and $\text{CBRW}_{\text{od}}$ to noisy features is also exemplified by their consistently better performance than the other methods on the highly noisy data sets except \textit{AID362}. One major reason for this noise tolerance is due to their homophily coupling modeling, as discussed in Section \ref{subsubsec:noise}. Sp, which may benefit from the neighborhood information, performs slightly better than $\text{SDRW}_{\text{od}}$ and $\text{CBRW}_{\text{od}}$ on \textit{AID362}. The use of the intra-feature outlier factor $\delta$ in $\text{CBRW}_{\text{od}}$ is less effective on data sets, where outlying values are difficult to distinguish from noisy values. $\text{SDRW}_{\text{od}}$ replaces $\delta$ with the factor $\gamma$ and is, therefore, more tolerant to such data sets. This is justified by the further 4\% improvement obtained by $\text{SDRW}_{\text{od}}$ over $\text{CBRW}_{\text{od}}$. In general, it is far more difficult to achieve desired performance on highly noisy data sets (e.g., \textit{BM} and \textit{AID362}) than other types of complex data discussed above due to the misleading of the noisy features.

\subsubsection{Handling Simple Data}
All seven detectors perform very well on the five simple data sets in Table \ref{tab:auc}. This is particularly true for \textit{R10}, \textit{Probe} and \textit{LINK}, on which all the detectors obtain the AUC of (or nearly) one. The simplicity of these data sets is also exemplified by the performance of the most simple detector MarP which gains nearly perfect AUC performance on these data sets. For such data sets, complex methods such as our methods may unnecessarily complicate the tasks and perform less effectively, such as the results on \textit{CT}. Although some of these data sets (e.g., \textit{R10}) are ranked slightly higher than some complex data sets w.r.t. one or two of the data indicators, they rank toward the bottom in most cases, resulting in an overall low data complexity.

\subsection{Justification of Algorithmic Components of $\text{SDRW}_{\text{od}}$ and $\text{CBRW}_{\text{od}}$}\label{subsec:component}

$\text{SDRW}_{\text{od}}$ and $\text{CBRW}_{\text{od}}$ consist of three major components of the CUOT framework: an intra-feature initial value outlierness, an inter-feature outlierness influence, and a graph mining method that integrates these two components to learn value outlierness. This section presents empirical results to justify the contribution of each component to the value outlierness learning. 

Specifically, we first derive a baseline method, called BASE, that assumes all features are completely independent and only uses the intra-feature outlier factor to obtain value outlierness. We then build on two additional baselines: $\text{SDRWia}_{\text{od}}$/$\text{CBRWia}_{\text{od}}$ weakens the outlierness influence factor by setting $\eta(u,v)=1$ iff $u$ and $v$ co-occur; $\text{SDRWie}_{\text{od}}$/$\text{CBRWie}_{\text{od}}$ uses the original $\eta$ in SDRW/CBRW while ignores the intra-feature outlier factor by setting all $\delta(\cdot)$ to one. 

The AUC results for $\text{SDRW}_{\text{od}}$, $\text{CBRW}_{\text{od}}$ and their variants are shown in Table \ref{tab:component}. The following four observations can be made from these results. 
\begin{enumerate}
    \item BASE substantially underperforms the other six methods on nearly all the data sets. This indicates that assuming the independence of feature-wise outlier factors is often not desirable in practice.
    \item $\text{SDRWia}_{\text{od}}$/$\text{CBRWia}_{\text{od}}$ performs comparably to $\text{SDRWie}_{\text{od}}$/$\text{CBRWie}_{\text{od}}$ in terms of overall performance. This is because they capture only partial value couplings of the data and it works well only when their assumptions fit the specific data sets well. For example, using intra-feature outlier factor with weak outlierness influences is sufficient to obtain very good performance on \textit{Census}, \textit{Chess}, \textit{LINK}, and \textit{R10}, and the performance on these data sets can be substantially downgraded when using strong inter-feature outlierness influences; likewise, inter-feature outlierness influences are much more important on data sets like \textit{BM}, \textit{w7a}, \textit{CelebA}, and \textit{U2R}. As shown in Table \ref{tab:datasummary}, each of these data sets has very different intrinsic characteristics. Significantly weakening intra-feature or inter-feature factors may therefore result in a considerable loss of the detection accuracy in the misfitted data sets.
    \item Although $\text{SDRW}_{\text{od}}$/$\text{CBRW}_{\text{od}}$ performs less effectively than its variants on a few data sets, it obtains averagely better performance and performs more stably. This indicates that the way $\text{SDRW}_{\text{od}}$ and $\text{CBRW}_{\text{od}}$ integrate the two components is generally reasonable, but a better method of integration is needed to improve the performance on the data sets like \textit{Census}, \textit{APAS}, and \textit{CelebA}.
    \item Although $\text{SDRWia}_{\text{od}}$ ($\text{SDRWie}_{\text{od}}$) and $\text{CBRWia}_{\text{od}}$ ($\text{CBRWie}_{\text{od}}$) have comparable overall performance, $\text{SDRW}_{\text{od}}$ demonstrates substantially large improvement over $\text{CBRW}_{\text{od}}$. This indicates that the way $\text{SDRW}_{\text{od}}$ integrates the two components is more reliable than $\text{CBRW}_{\text{od}}$.
\end{enumerate}

\begin{table}[htbp]
  \centering
  \ttabbox{\caption{AUC Performance of $\text{SDRW}_{\text{od}}$, $\text{CBRW}_{\text{od}}$ and Their Variants Created by Removing One or Two Components. The best performance within CBRW/SDRW is boldfaced.} \label{tab:component}}
  {
  \scalebox{0.80}{
    \begin{tabular}{|l|c|ccc|ccc|}
    \hline
    \textbf{Data} & \textbf{BASE} & $\textbf{CBRWia}_{\text{od}}$ & $\textbf{CBRWie}_{\text{od}}$ & $\textbf{CBRW}_{\text{od}}$ & $\textbf{SDRWia}_{\text{od}}$ & $\textbf{SDRWie}_{\text{od}}$ & $\textbf{SDRW}_{\text{od}}$ \\
    \hline
    BM & 0.5778 & 0.5999 & \textbf{0.6566} & 0.6287 & 0.5988 & \textbf{0.6698} & 0.6511 \\
    Census & 0.6033 & \textbf{0.6832} & 0.6579 & 0.6678 & \textbf{0.7259} & 0.6231 & 0.6371 \\
    AID362 & 0.6152 & 0.6034 & 0.6324 & \textbf{0.6640} & 0.6572 & 0.6307 & \textbf{0.6665} \\
    w7a & 0.4744 & 0.4477 & \textbf{0.7363} & 0.6484 & 0.6106 & 0.8002 & \textbf{0.8059} \\
    CMC & 0.5623 & 0.6179 & 0.6323 & \textbf{0.6339} & 0.6075 & 0.6373 & \textbf{0.6415} \\
    APAS & 0.6208 & \textbf{0.8739} & 0.8624 & 0.8190 & 0.6660 & \textbf{0.8604} & 0.8544 \\
    CelebA & 0.7352 & 0.7135 & \textbf{0.9108} & 0.8462 & 0.7367 & \textbf{0.8998} & 0.8845 \\
    Chess & 0.6854 & 0.7766 & 0.4058 & \textbf{0.7897} & 0.7692 & 0.2322 & \textbf{0.8387} \\
    AD & 0.7033 & 0.7250 & \textbf{0.8270} & 0.7348 & 0.6600 & 0.8426 & \textbf{0.8482} \\
    SF & 0.8469 & \textbf{0.8867} & 0.8833 & 0.8812 & 0.8650 & 0.8809 & \textbf{0.8817} \\
    Probe & 0.9795 & 0.9434 & \textbf{0.9907} & 0.9906 & 0.9807 & 0.9854 & \textbf{0.9891} \\
    U2R & 0.8848 & 0.8817 & 0.9640 & \textbf{0.9651} & 0.8793 & \textbf{0.9949} & 0.9941 \\
    LINK & 0.9977 & 0.9976 & 0.9976 & 0.9976 & 0.9976 & 0.9976 & \textbf{0.9978} \\
    R10 & 0.9866 & 0.9823 & 0.9903 & \textbf{0.9905} & \textbf{0.9874} & 0.9837 & 0.9837 \\
    CT & 0.9770 & 0.9388 & \textbf{0.9703} & \textbf{0.9703} & 0.9607 & 0.9581 & \textbf{0.9703} \\\hline
    Avg. & 0.7500 & 0.7781 & 0.8078 & 0.8152 & 0.7802 & 0.7998 & 0.8430 \\\hline
    \end{tabular}%
    }
  }
\end{table}%

\subsection{Outlying Feature Selection Performance of $\text{SDRW}_{\text{fs}}$ and $\text{CBRW}_{\text{fs}}$}

This section presents the results of data complexity reduction by feature selection, followed by the AUC performance of two outlier detectors on the reduced data.

\subsubsection{Data Complexity Reduction}

Table \ref{tab:fscomplexity} shows the results of the data complexity evaluation for each data indicator on the data sets with selected feature subsets as well as full feature sets. 

$\text{SDRW}_{\text{fs}}$ and $\text{CBRW}_{\text{fs}}$ considerably reduce the data complexity in most data indicators on all data sets. Specifically, $\text{SDRW}_{\text{fs}}$ reduces the complexities of $\kappa_{\mathit{vcc}}$, $\kappa_{\mathit{het}}$ and $\kappa_{\mathit{fnl}}$ by 25\%, 7\% and 19\% respectively; and $\text{CBRW}_{\text{fs}}$ achieves respective 25\%, 8\% and 10\% simplification in the indicators $\kappa_{\mathit{vcc}}$, $\kappa_{\mathit{het}}$ and $\kappa_{\mathit{fnl}}$. ENTR obtains markedly large simplification in $\kappa_{\mathit{fnl}}$ and $\kappa_{\mathit{het}}$, whereas it substantially increases the outlier inseparability according to $\kappa_{\mathit{ins}}$. This is because ENTR evaluates the relevance of features without considering their interactions. Thus, noisy features and highly relevant features may be filtered out together. In other words, ENTR reduces the data complexity in terms of $\kappa_{\mathit{fnl}}$ at the expense of increasing the data complexity in terms of $\kappa_{\mathit{ins}}$. Also, ENTR is an entropy-based feature weighting method, which retains features with similar frequency distributions. As a result, ENTR can simplify the data far more than $\text{SDRW}_{\text{fs}}$ and $\text{CBRW}_{\text{fs}}$ in terms of $\kappa_{\mathit{het}}$. However, since it builds upon the feature independence assumption, it can remove features that are very relevant when combining with other features. By contrast, both $\text{SDRW}_{\text{fs}}$ and $\text{CBRW}_{\text{fs}}$ consider the low-level intra- and inter-feature value couplings, which are sensitive to negative value couplings, value frequency distributions and noisy features, resulting in an outlier-separability-secured reduction of data complexity. Additionally, the feature selection can result in some loss of information relevant to outlier detection, hence all methods have negative gains in $\boldsymbol\kappa_{\mathit{ins}}$; compared to ENTR, $\text{SDRW}_{\text{fs}}$ and $\text{CBRW}_{\text{fs}}$ are substantially more outlier-separability-secured.

\begin{table}[htbp]
  \centering
  \ttabbox{\caption{Complexity Quantification of Data Sets with Feature Subsets Selected by $\text{SDRW}_{\text{fs}}$, $\text{CBRW}_{\text{fs}}$, ENTR and FULL. The last row shows the percentage of the average complexity reduction compared to the baseline FULL. We use SD = $\text{SDRW}_{\text{fs}}$, CB = $\text{CBRW}_{\text{fs}}$, EN = ENTR, and FU = FULL to concisely present the results.}\label{tab:fscomplexity}}
  {
  \scalebox{0.80}{
    \begin{tabular}{|l@{}|p{0.40cm}p{0.40cm}p{0.50cm}p{0.48cm}|p{0.40cm}p{0.40cm}p{0.45cm}p{0.48cm}|p{0.40cm}p{0.40cm}p{0.45cm}p{0.48cm}|p{0.40cm}p{0.40cm}p{0.45cm}c|}
    \hline
    &  \multicolumn{4}{|c|}{$\boldsymbol\kappa_{\mathit{vcc}}$} & \multicolumn{4}{c|}{$\boldsymbol\kappa_{\mathit{het}}$} & \multicolumn{4}{c|}{$\boldsymbol\kappa_{\mathit{ins}}$} & \multicolumn{4}{c|}{$\boldsymbol\kappa_{\mathit{fnl}}$}\\
    \hline
    \textbf{Data} & \textbf{SD} & \textbf{CB} & \textbf{EN} & \textbf{FU} & \textbf{SD} & \textbf{CB} & \textbf{EN} & \textbf{FU} & \textbf{SD} & \textbf{CB} & \textbf{EN} & \textbf{FU} & \textbf{SD} & \textbf{CB} & \textbf{EN} & \textbf{FU} \\\hline
    BM & 0.28 & \textbf{0.19} & 0.50 & 0.21 & 1.91 & 1.70 & \textbf{1.30} & 2.03 & \textbf{0.37} & \textbf{0.37} & 0.52 & \textbf{0.37} & 0.80 & \textbf{0.80} & 1.00 & 0.90 \\
    Census & 0.41 & \textbf{0.40} & 0.57 & 0.42 & 1.85 & 1.83 & \textbf{1.15} & 1.65 & \textbf{0.24} & \textbf{0.24} & 0.34 & \textbf{0.24} & 0.71 & 0.65 & 0.76 & \textbf{0.58} \\
    AID362 & \textbf{0.28} & 0.28 & 0.34 & 0.32 & 1.01 & 1.04 & \textbf{1.01} & 1.14 & \textbf{0.40} & \textbf{0.40} & 0.48 & \textbf{0.40} & 0.93 & 0.93 & 0.96 & \textbf{0.86} \\
    w7a & 0.13 & 0.20 & \textbf{0.10} & 0.37 & 1.01 & 1.01 & \textbf{1.00} & 1.06 & \textbf{0.41} & \textbf{0.41} & 0.44 & \textbf{0.41} & \textbf{0.01} & 0.23 & 0.03 & 0.48 \\
    CMC & 0.04 & 0.04 & \textbf{0.00} & 0.04 & 1.30 & 1.30 & \textbf{1.27} & 1.58 & \textbf{0.34} & \textbf{0.34} & 0.37 & \textbf{0.34} & \textbf{0.00} & \textbf{0.00} & 0.50 & 0.38 \\
    APAS & 0.25 & \textbf{0.22} & 0.33 & 0.33 & 1.06 & 1.06 & \textbf{1.02} & 1.19 & \textbf{0.13} & \textbf{0.13} & 0.28 & \textbf{0.13} & 0.69 & \textbf{0.66} & 0.88 & 0.81 \\
    CelebA & \textbf{0.08} & \textbf{0.08} & 0.12 & 0.12 & 1.20 & 1.16 & \textbf{1.05} & 1.26 & \textbf{0.20} & \textbf{0.20} & 0.32 & \textbf{0.20} & \textbf{0.15} & 0.20 & 0.40 & 0.49 \\
    Chess & 0.00 & 0.00 & 0.00 & 0.00 & \textbf{1.22} & \textbf{1.22} & 2.05 & 2.24 & \textbf{0.26} & 0.26 & 0.26 & 0.26 & 0.67 & 0.67 & \textbf{0.00} & 0.33 \\
    AD & \textbf{0.26} & 0.26 & 0.37 & 0.46 & 1.01 & 1.00 & \textbf{1.00} & 1.01 & \textbf{0.30} & 0.34 & 0.47 & \textbf{0.30} & 0.01 & 0.01 & \textbf{0.00} & 0.05 \\
    SF & \textbf{0.11} & 0.15 & 0.15 & 0.12 & 1.72 & 1.72 & \textbf{1.08} & 1.56 & \textbf{0.18} & \textbf{0.18} & 0.30 & \textbf{0.18} & \textbf{0.00} & \textbf{0.00} & 0.17 & 0.09 \\
    Probe & 0.00 & 0.01 & \textbf{0.00} & 0.01 & 1.42 & 1.36 & \textbf{1.04} & 1.32 & \textbf{0.06} & \textbf{0.06} & 0.07 & \textbf{0.06} & 0.00 & 0.00 & 0.00 & 0.00 \\
    U2R & 0.00 & 0.01 & \textbf{0.00} & 0.02 & 1.37 & 1.35 & \textbf{1.00} & 1.29 & \textbf{0.02} & \textbf{0.02} & 0.15 & \textbf{0.02} & \textbf{0.00} & 0.33 & \textbf{0.00} & 0.17 \\
    LINK & \textbf{0.00} & \textbf{0.00} & 0.01 & 0.01 & 1.19 & 1.19 & \textbf{1.18} & 1.39 & 0.02 & 0.02 & 0.02 & 0.02 & 0.00 & 0.00 & 0.00 & 0.00 \\
    R10 & 0.03 & 0.01 & \textbf{0.00} & 0.06 & 1.00 & 1.00 & \textbf{1.00} & 1.01 & 0.34 & \textbf{0.13} & 0.44 & \textbf{0.13} & 0.00 & 0.00 & 0.00 & 0.00 \\
    CT & 0.00 & 0.00 & 0.00 & 0.00 & 1.17 & 1.17 & \textbf{1.00} & 1.10 & \textbf{0.03} & \textbf{0.03} & 0.32 & \textbf{0.03} & 0.45 & 0.45 & \textbf{0.00} & 0.34 \\\hline
    Avg. & 0.12 & \textbf{0.12} & 0.17 & 0.17 & 1.30 & 1.27 & \textbf{1.14} & 1.39 & 0.22 & 0.21 & 0.32 & \textbf{0.21} & \textbf{0.29} & 0.33 & 0.31 & 0.36 \\\hline
    $\triangledown$ (\%) & \centering 25 & \centering 25 & \centering -0.2 &  \centering -  & \centering 7 & \centering 8 & \centering 18 &  \centering -  & \centering -7 & \centering -1 & \centering -55 & \centering  -  & \centering 19 & \centering 10   & \centering 14 &   - \\ 
    \hline
    \end{tabular}%
    }
    }
\end{table}%

\subsubsection{Effect on the Performance of Different Subsequent Outlier Detectors}

The effectiveness of $\text{SDRW}_{\text{fs}}$ and $\text{CBRW}_{\text{fs}}$ is further verified by the AUC performance of different subsequent outlier detectors using their resultant feature subsets. Two different outlier detectors, MarP and iForest, are used here, because they are commonly-used baselines.

The AUC performance of MarP and iForest working on the data sets with feature subsets is shown in Table \ref{tab:fsauc}. Both of $\text{SDRW}_{\text{fs}}$- or $\text{CBRW}_{\text{fs}}$-empowered MarP and iForest obtains substantial improvements than ENTR (12\%), RADM (17\%) and FULL (7\%), regardless of the difference working mechanisms of MarP and iForest. In particular, the $\text{SDRW}_{\text{fs}}$- and $\text{CBRW}_{\text{fs}}$-empowered MarP and iForest significantly outperform their counterparts empowered by ENTR and RADM at the 99\% confidence level. Although they use 50\% less features, they significantly outperforms MarP and iForest working on data with full feature sets at the 95\% confidence level. The superiority of $\text{SDRW}_{\text{fs}}$/$\text{CBRW}_{\text{fs}}$ is understandable, since it considerably reduces the levels of negative value couplings, heterogeneity and feature noise while retaining the outlier separability (i.e., the most relevant features), as shown in Table \ref{tab:fscomplexity}. Note that the aggressive removal of 50\% features may result in information loss in some cleaned data sets such as \textit{SF} and \textit{CT}, which leads to less effective performance than that working on the full feature space (i.e., FULL).

\begin{table}[htbp]
  \centering
  \ttabbox{\caption{AUC Performance of MarP and iForest Using $\text{SDRW}_{\text{fs}}$, $\text{CBRW}_{\text{fs}}$, ENTR, RADM, and FULL. The results of RADM and/or iForest are the average over 10 runs.}\label{tab:fsauc}}
  {
  \scalebox{0.80}{
    \begin{tabular}{|l@{}|ccp{0.8cm}p{0.8cm}p{0.9cm}|ccp{0.8cm}p{0.8cm}c|}
    \hline
       & \multicolumn{5}{|c|}{\textbf{MarP}} & \multicolumn{5}{c|}{\textbf{iForest}} \\
    \hline
    Data & $\text{SDRW}_{\text{fs}}$ & $\text{CBRW}_{\text{fs}}$ & ENTR & RADM & FULL & $\text{SDRW}_{\text{fs}}$ & $\text{CBRW}_{\text{fs}}$ & ENTR & RADM & FULL \\\hline
    BM & 0.5627 & \textbf{0.5926} & 0.4886 & 0.5181 & 0.5584 & 0.5618 & \textbf{0.5836} & 0.5297 & 0.5544 & 0.5762 \\
    Census & 0.6052 & \textbf{0.6258} & 0.4525 & 0.5490 & 0.5899 & 0.5801 & \textbf{0.6106} & 0.4403 & 0.5201 & 0.5378 \\
    AID362 & 0.6612 & \textbf{0.6620} & 0.5909 & 0.6074 & 0.6270 & \textbf{0.6641} & 0.6525 & 0.6155 & 0.6267 & 0.6485 \\
    w7a & 0.8413 & 0.7654 & \textbf{0.8633} & 0.4594 & 0.4748 & 0.8084 & 0.7432 & \textbf{0.8251} & 0.3946 & 0.4053 \\
    CMC & 0.6474 & \textbf{0.6474} & 0.5082 & 0.5062 & 0.5417 & \textbf{0.6609} & 0.6607 & 0.5288 & 0.5164 & 0.5746 \\
    APAS & 0.8454 & \textbf{0.8569} & 0.6346 & 0.5995 & 0.6193 & 0.8385 & \textbf{0.8426} & 0.6372 & 0.5543 & 0.4792 \\
    CelebA & \textbf{0.8652} & 0.8597 & 0.7785 & 0.7102 & 0.7358 & 0.8388 & \textbf{0.8438} & 0.7799 & 0.6764 & 0.6797 \\
    Chess & \textbf{0.7574} & \textbf{0.7574} & 0.6378 & 0.6076 & 0.6447 & \textbf{0.6859} & 0.6138 & 0.6241 & 0.5829 & 0.6124 \\
    AD & \textbf{0.8256} & 0.7624 & 0.6603 & 0.6888 & 0.7033 & \textbf{0.8206} & 0.7620 & 0.6592 & 0.6775 & 0.7084 \\
    SF & 0.8343 & 0.8157 & 0.6666 & 0.8181 & \textbf{0.8446} & 0.7838 & 0.7667 & 0.6856 & 0.7660 & \textbf{0.7865} \\
    Probe & \textbf{0.9837} & 0.9805 & 0.9307 & 0.8951 & 0.9800 & \textbf{0.9842} & 0.9751 & 0.8797 & 0.8990 & 0.9762 \\
    U2R & \textbf{0.9937} & 0.8846 & 0.8582 & 0.7911 & 0.8848 & \textbf{0.9879} & 0.9776 & 0.7854 & 0.8168 & 0.9781 \\
    LINK & \textbf{0.9985} & \textbf{0.9985} & 0.9938 & 0.9723 & 0.9977 & \textbf{0.9986} & 0.9984 & 0.9797 & 0.9636 & 0.9917 \\
    R10 & 0.8705 & \textbf{0.9893} & 0.7648 & 0.9627 & 0.9866 & 0.8705 & \textbf{0.9926} & 0.7566 & 0.9541 & 0.9796 \\
    CT & 0.8570 & 0.8570 & 0.8581 & 0.6154 & \textbf{0.9773} & 0.9122 & 0.9072 & 0.8816 & 0.6374 & \textbf{0.9364} \\\hline
    Avg. & \textbf{0.8099} & 0.8037 & 0.7125 & 0.6867 & 0.7444 & \textbf{0.7998} & 0.7954 & 0.7072 & 0.6760 & 0.7247 \\\hline
     \multirow{2}{4em}{p-value}  & SDRW vs. & 0.6772 & 0.0009 & 0.0016 & 0.0340 & \centering - & 0.2890 & 0.0005 & 0.0013 & 0.0262 \\
     &  \multicolumn{2}{l}{CBRW vs.}    & 0.0023 & 0.0005 & 0.0113 & \centering -   &  \centering -  & 0.0023 & 0.0004 & 0.0131 \\\hline
    \end{tabular}%
    }
    }
\end{table}%

MarP and iForest using ENTR perform much worse than that working on the full feature set on almost all the data sets. This is because ENTR wrongly removes highly relevant features and degrades the outlier separability of the data sets. As discussed above, this aggravates the detection performance of subsequent outlier detectors. It is interesting to note that MarP and iForest using $\text{SDRW}_{\text{fs}}$ and ENTR perform much better than all their counterparts on \textit{w7a}. This improvement is mainly because $\text{SDRW}_{\text{fs}}$ and ENTR remove more than 95\% of the noisy features with little or no loss to the outlier separability in this data, as shown in Table \ref{tab:fscomplexity}. The success of ENTR on \textit{w7a} indicates that many noisy features in \textit{w7a} have less skewed frequency distributions than outlying features. As a result, simply examining the frequency distributions of individual features is probably the best way to clean up those noisy features.

\subsection{Scalability Test}
Both SDRW/CBRW-based outlier detection and feature selection are linear consolidation of the value outlierness. Hence, they have similar scalability. Here, we show the scalability of $\text{SDRW}_{\text{od}}$ and $\text{CBRW}_{\text{od}}$. 

The scalability of $\text{SDRW}_{\text{od}}$ and $\text{CBRW}_{\text{od}}$ w.r.t. data size is evaluated using four subsets of the largest data set \textit{LINK}. The smallest subset contains 64,000 objects, and subsequent subsets are increased by a factor of four, until the largest subset which contains 4,096,000 objects. 

The scaleup test results w.r.t. data size are presented in the left panel in Fig. \ref{fig:scaleuptest}. As expected, all the seven detectors have runtime linear w.r.t. data size. $\text{SDRW}_{\text{od}}$ runs faster than iForest and Sp by a factor of more than 20 and 30, respectively. $\text{SDRW}_{\text{od}}$ runs slightly faster than $\text{CBRW}_{\text{od}}$, since $\text{SDRW}_{\text{od}}$ requires no iteration to obtain the value outlierness. Nevertheless, $\text{CBRW}_{\text{od}}$ runs faster than iForest and Sp. Both $\text{SDRW}_{\text{od}}$ and $\text{CBRW}_{\text{od}}$ are slightly slower than MarP but comparably fast to FPOF.

\begin{figure}[h!]
  \centering
    \includegraphics[width=0.6\textwidth]{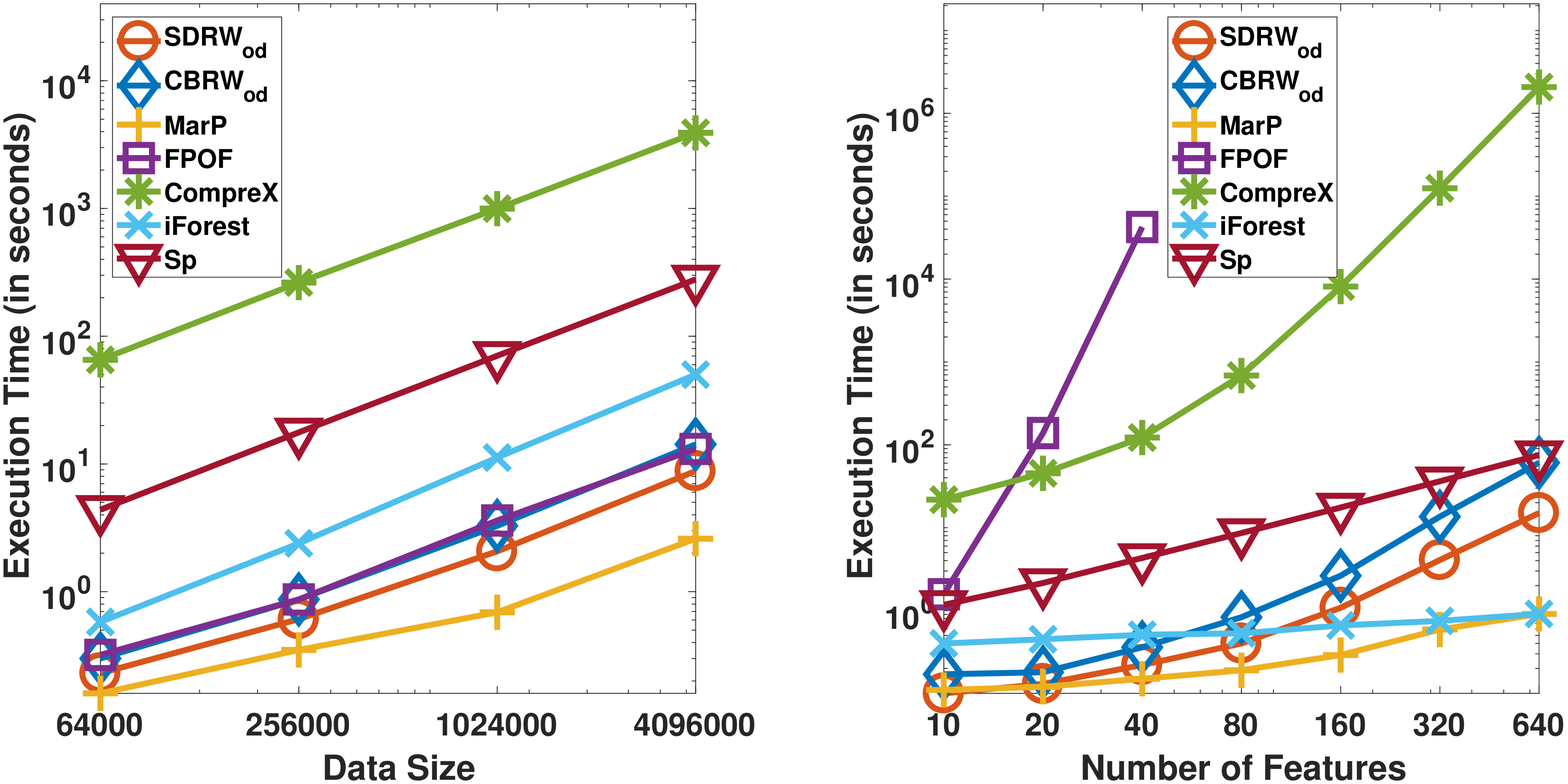}
  \caption{Scale-up Test Results of the Seven Detectors w.r.t. Data Size and Dimensionality. Logarithmic scales are used in both axes. 
  Note that FPOF runs out-of-memory when the number of features reaches 80.}
  \label{fig:scaleuptest}
\end{figure}

The scale-up test w.r.t. the number of features is conducted using seven synthetic data sets. The data sets have the same number of objects, i.e., 10,000 objects. The data set with the smallest number of features contains 10 features, and subsequent data sets are increased by a factor of two, until the data set with the largest number of features contains 640 features.

The results reported in the right panel in Fig. \ref{fig:scaleuptest} show that, as expected, $\text{SDRW}_{\text{od}}$ and $\text{CBRW}_{\text{od}}$ have runtime quadratic w.r.t. the number of features, which runs more than five orders of magnitude faster than FPOF. $\text{SDRW}_{\text{od}}$ and $\text{CBRW}_{\text{od}}$ run much faster than CompreX by a factor of more than 600 and 250 in terms of runtime ratio\footnote{Since CompreX was implemented in a different programming language to the other methods, the runtime between CompreX and other methods is incomparable. Instead, we compare them in terms of runtime ratio, i.e., the runtime on a larger/higher-dimensional data set divided by that on a smaller/lower-dimensional data set, for a fairer comparison. Since the data size and the increasing factor of dimensionality are fixed, the runtime ratio is comparable across the methods in different programming languages.}, respectively. Compared to Sp, $\text{SDRW}_{\text{od}}$ and $\text{CBRW}_{\text{od}}$ run faster on data sets with lower dimensions, but they may become slower on data sets with higher dimensions. This is because the runtime of $\text{SDRW}_{\text{od}}$ and $\text{CBRW}_{\text{od}}$ increase at a much faster rate than Sp. Since $\text{SDRW}_{\text{od}}$ and $\text{CBRW}_{\text{od}}$ model much more complex data characteristics than MarP and iForest, it runs substantially slower than these two competitors, but with significantly better accuracy in terms of AUC, as shown in Table \ref{tab:auc}.

\subsection{Evaluation Summary}

We summarize the empirical results as follows.

\begin{itemize}

\item $\text{SDRW}_{\text{od}}$ and $\text{CBRW}_{\text{od}}$ significantly outperform the five competitors FPOF, MarP, CompreX, iForest and Sp at the 95\% or 99\% confidence level. This is mainly because $\text{SDRW}_{\text{od}}$ and $\text{CBRW}_{\text{od}}$ well capture the non-IID outlierness in real-world data sets, enabling them to  work well not only on non-IID data but also on data with low outlier separability and/or noisy features. Compared to the five competitors, on average, $\text{SDRW}_{\text{od}}$ obtains 16\%-28\% AUC improvement, and $\text{CBRW}_{\text{od}}$ obtains 10\%-21\% improvement on the 10 most complex data sets. All seven outlier detectors perform comparably well on the five simple data sets.

\item $\text{SDRW}_{\text{od}}$ performs significantly better than $\text{CBRW}_{\text{od}}$, achieving more than 3\% and 5\% AUC improvement over $\text{CBRW}_{\text{od}}$ in all 15 data sets and the top-ranked 10 complex data sets, respectively. Also, $\text{SDRW}_{\text{od}}$ is the best performer among the seven variants of $\text{SDRW}_{\text{od}}$ and $\text{CBRW}_{\text{od}}$. This justifies the refinement of inter-feature outlierness influences and the addition of a subgraph density-based outlier factor in SDRW.

\item In terms of outlying feature selection, $\text{SDRW}_{\text{fs}}$ and $\text{CBRW}_{\text{fs}}$ show the best results on data sets with selected feature subsets, which have substantially lower data complexities according to three of the complexity indicators, complex value couplings, heterogeneity, and feature noise level, while retaining the outlier separability of the data sets. As a result, $\text{SDRW}_{\text{fs}}$ and $\text{CBRW}_{\text{fs}}$ enable two different outlier detectors, MarP and iForest, to obtain significantly better AUC performance than their competing methods.

\item As expected, SDRW and CBRW have time complexity linear w.r.t. data size and quadratic w..r.t. the number of features. Particularly, $\text{SDRW}_{\text{od}}$ and $\text{CBRW}_{\text{od}}$ run orders of magnitude faster than the pattern-based methods FPOF and CompreX, and they are comparably fast to the very efficient ensemble methods iForest and Sp.

\end{itemize}

\section{Related Work}\label{sec:relatedwork}

We first review the literature of four closely related areas, including IID and non-IID outlier detection, feature selection, and data characterization for outlier detection, and then provide a summary at the end of this section.

\subsection{IID Outlier Detection Methods}
Most of existing methods for outlier detection assume data is IID, e.g., ignoring the couplings and heterogeneities within and between outlier factors. The methods for both numerical and categorical data are reviewed below.

\subsubsection{Methods for Numeric Data}

Most of existing outlier detection methods focus on numeric data, including distance-based methods, density-based methods, and clustering-based methods \citep{aggarwal2017outlieranalysis}. They assume that objects in the regions of low density are outliers. To estimate the density of a region, they rely on expensive distance computations with a time complexity of $O(N^{2})$, where $N$ denotes the number of objects. This computational time may be reduced to $O(N\; \text{log}\; N)$ if the objects are pre-indexed by an indexing scheme like $R^{*}$-tree or $k$-d tree. It has not been well explored about how well these indexing techniques handle categorical data. In recent years, several ensemble-based methods \citep{guha2016robust,liu2012isolation,pang2015lesinn,sugiyama2013knn,ting2017defying,ting2013mass,zimek2013ensembles} perform $k$-nearest-neighbor searching or data space partition on small subsamples to compute outlier scores. These methods have a (nearly) linear time complexity w.r.t. data size and the number of features, but they can be ineffective when handling categorical data. This is mainly because categorical features do not contain the order information. Therefore, it is difficult to define a good similarity measure for different data sets \citep{aggarwal2017outlieranalysis,boriah2008similarity}, and the data partition methods in \citep{liu2012isolation,ting2013mass} are inapplicable. More importantly, these methods are based on the assumption that the outlier factor is identical and independent for all data entities. Unlike these methods, SDRW and CBRW are built on the non-IID outlier factors and do not involve distance computations.

\subsubsection{Methods for Categorical Data}
By contrast, significantly less research has been conducted on categorical data. Among existing methods, most are pattern-based to address its discrete nature in categorical values. They typically search for infrequent/frequent patterns (or rules) using approaches such as frequent pattern mining \citep{he2005fp,otey2006fast,smets2011krimp,tang2015contextualoutliers}, information-theoretic measures \citep{akoglu2012comprex,wu2013information}, probability tests \citep{das2007marp,wong2003bayesian}, and logic rules \citep{angiulli2008outlier,angiulli2010outlier}, and build pattern-based detection models. Those objects with infrequent patterns are considered outlying. These methods ignore the interdependence of patterns-based outlier factors and calculate the outlier scores of the individual patterns. As a result, they may fail to capture the genuine outlying degree of the patterns and overlook important outliers, especially in complex data. For example, in data sets with many noisy features, these methods identify a large proportion of misleading patterns. Since they treat potentially wrong and correct patterns independently, all patterns are scored in an identical way, which can mislead the outlier scoring process and incorrectly report many normal objects as outliers. In addition, the resultant patterns are derived from different feature combinations, where features may have very different frequency distributions. Accordingly, the semantic and importance of the pattern frequency may also differ significantly for different patterns. As such, existing pattern frequency-based methods cannot appropriately capture the outlierness. 

In practice, the pattern discovery-based detection \citep{das2007marp,he2005fp,otey2006fast,smets2011krimp} has time and space complexities that are exponential to the number of features. Though a heuristic search was used in \citep{akoglu2012comprex} to reduce the complexity from exponential to quadratic, the search is still computationally intensive in high dimensional data. Some other work accelerates the pattern discovery phase, e.g., by searching condensed representations of patterns \citep{koufakou2011non} or pattern sampling \citep{giacometti2016anytime}. However, they may overlook important outliers and/or perform unstably. 

In addition, it is non-trivial to tune the parameters (e.g., the minimum support and pattern length) involved in the pattern search, as the characteristics of patterns differ greatly between data sets \citep{akoglu2012comprex,giacometti2016anytime}. For example, in frequent pattern-based methods, a small minimum support generates a substantially large set of normal patterns, which may lead to false alarms and more expensive computations; while large minimum support results in an insufficient number of patterns, leading to a high false negative error. 

Our proposed CBRW and SDRW are not pattern-based. Instead, they model the non-IID outlying behaviors at the feature value level. Also, SDRW is parameter-free, and CBRW only involves one parameter, the damping factor, which guarantees algorithmic convergence and can be easily set to obtain the desired efficacy. There are some concurrent work \citep{pang2017selective,pang2017homophily,jian2018cure} on modeling the feature value couplings for outlier detection, but they mainly focus on pairwise value couplings. \cite{jian2018cure} uses the similarity between feature values and clustered value sets to capture richer value couplings, but they ignore the homophily couplings.

\subsection{Non-IID Outlier Detection Methods}

Many recent methods have been developed for outlier detection on data with explicit non-independent properties, e.g., graph or temporal data \citep{akoglu2015graphsurvey,gupta2014temporaloutlier}. However, little research is available on leveraging the implicit coupling/interdependent information to improve the aforementioned IID outlier detection methods, though it shows to be effective in various other domains, e.g., behavior analysis \citep{cao2012coupled}, image/text categorization \citep{zhou2009multi,ganiz2011noniidtc}, statistical learning \citep{FanXC16}, representation learning \citep{jian2017embedding}, and recommender systems and deep learning \citep{ijcai_ZhangCZLS18}. Particularly, homophily coupling is a commonly observed relation in many real-world applications \citep{fowler2008dynamic,koutra2011homophily,tang2013homophily}. The homophily couplings between outlying behaviors have been recently verified in fraud detection \citep{mcglohon2009homophily} and malware detection \citep{chau2011polonium}. These two studies assume the misstated user accounts and malicious files have homophily couplings, respectively. Our work is very different from these studies in two major aspects below. (i) They incorporate domain knowledge (e.g., some labeled data objects) into their homophily learning models through semi-supervised learning, whereas our methods do not require labeled data. (ii) They focus on domain-specific problems using graph data, e.g., account-account graph or file-file graph, whereas we investigate the homophily at the value level in generic categorical data, which has broader applications. Other relevant research lines may include contextual outlier detection and sequential ensemble methods. Contextual outlier detection aims at identifying outliers in some targeted dimensions conditioned on the behaviors demonstrated in other dimensions, in which the targeted dimensions are behavioral features of interest while the conditional dimensions are referred to as contextual features \citep{liang2016contextual,zheng2017contextual}. Although both contextual outlier detection and our work leverage conditional relations in their outlierness, they are very different in that the former learns contextual outlierness while the latter learns global outlierness with the focus on the propagation of the outlierness across all the features. Sequential ensembles \citep{rayana2016sequential,aggarwal2017outlieranalysis,pang2018sparse} are also relevant in that they build a set of sequentially dependent detection models to reduce potential bias and variance. Different from that, we focus on the interdependence between feature values with a single model.

On the other hand, learning different types of heterogeneities, such as heterogeneities between features, subspaces, views, objects, class labels, graphs, or learning methods \citep{cao2014computer,he2017heterogeneity,sun2012hin}, has been explored in a range of learning tasks. However, only a few studies \citep{rayana2016less,schubert2012outlierscore} have been reported on outlier detection. Moreover, these studies mainly focus on how to construct heterogeneous ensembles of different outlierness scoring methods (or one method with different parameter settings) by leveraging the techniques of ensemble learning. In contrast, our work explores the heterogeneous probability distributions in the original data to design more faithful outlierness scoring functions. In practice, a common heterogeneous data is the one containing both categorical and numeric features \citep{otey2006fast,koufakou2010mixed}. Our approach may extend to this setting via discretization of numeric features before applying our algorithms. However, data discretization for outlier detection remains an open problem because of the significant challenges brought by the unsupervised nature of outlier detection and the scattered and different distributions taken by different outliers. We therefore put this in our future work and focus on categorical data here.

\subsection{Feature Selection for Outlier Detection}
Proper feature selection can remove irrelevant and noisy features to improve the accuracy and efficiency of outlier detection on complex data. However, existing feature selection research focuses on regression, classification, and clustering \citep{li2016feature}. Very limited feature selection methods have been designed for outlier detection, e.g., selecting features for imbalanced data classification or supervised outlier detection \citep{azmandian2012icdm,maldonado2014imbalanced}. However, they are inapplicable for the context having no class label information or being costly to obtain class labels. Unfortunately, many real-world outlier detection applications fall in this scenario.

Even less work is available on unsupervised feature selection for outlier detection. This is due to the challenges associated with evaluating feature relevance in data that does not have class labels or have an extremely imbalanced distribution. Two pieces of related work are \citep{he2010coselection} and \citep{wu2013information}. In \citep{he2010coselection}, a partial augmented Lagrangian method is introduced to co-select objects and features that are relevant to rare class detection. While the feature selection and rare class detection are shown effective in unsupervised settings, as pointed out by the authors, they assume that the objects in rare classes are strongly self-similar. This assumption does not apply to the nature of outlier detection, where many outliers may be isolated objects and distributed far away from each other in the data space. The unsupervised feature weighting for outlier detection on categorical data in \citep{wu2013information} is more related to our work. This method employs an entropy-based measure to weight features and highlight strongly relevant features for subsequent outlier detection. However, it evaluates individual features without considering any feature interactions, and is thus very sensitive to noisy features. SDRW/CBRW-based feature selection considers intra-feature and inter-feature value interactions to evaluate and differentiate relevant and irrelevant features.

\subsection{Data Characterization for Outlier Detection}

Data characterization or data complexity quantification aims to understand and quantify the underlying data characteristics and complexities \citep{dst_Cao15} that are critical for perfect fitness between data and models. This has shown to be critical for the design and the evaluation of learning methods \citep{campos2016evaluation,cao2016nsp,emmott2013systematic,ho2002complexity,leyva2015set,smith2014instance}. A wide range of data indicators have been introduced to quantify data complexity at the feature and/or object levels for classification tasks or sequence analysis \citep{cao2016nsp,ho2002complexity,leyva2015set,smith2014instance}, while little work has been done for outlier detection.

Two relevant studies are \citep{campos2016evaluation,emmott2013systematic}. In \citep{campos2016evaluation}, a variety of k-nearest-neighbour-based outlier detection methods is employed to evaluate the complexity of many publicly accessible data sets via their detection performance. Two data indicators, \textit{difficulty} and \textit{diversity}, are defined based on agreements and conflicts in the performance of the detectors. This is fundamentally different from our work in that we quantify the data complexity from specific data aspects by designing various data indicators to capture different underlying data characteristics. A more related work is \citep{emmott2013systematic}, which introduces three indicators, \textit{point difficulty}, \textit{clusteredness}, and \textit{relative frequency}, to create benchmark data sets with different characteristics by varying these three indicators. These indicators are designed at the object level and are mostly proximity-based. In contrast, we introduce the four indicators that span the value to object levels to capture more affluent data characteristics. Another key difference is that we focus on creating benchmark categorical data, whereas \citep{campos2016evaluation,emmott2013systematic} pertains to numeric data.\\

\section{Conclusions and Future Work} \label{sec:con}

\subsection{Conclusions}

This paper introduces a novel outlier detection framework (CUOT). Compared to traditional pattern-based frameworks that focus on the identification of outlying/normal patterns (e.g., inter-feature outlier factor) and assume the independence and homogeneity of these patterns to compute pattern outlierness, CUOT synthesizes both intra-feature outlier factors and their complex interactions to learn value outlierness. As a result, CUOT has the ability to handle the complex non-IID outlier detection problems. We further instantiate CUOT to two methods, called CBRW and SDRW, to identify outliers in data with non-IID outlying behaviors, data with low outlier separability, and/or data with many noisy features. CBRW and SDRW define the mode-based normalization in the intra-feature outlier factors to handle the heterogeneity and low outlier separability issues. They also leverage graph-based methods (and dense subgraph discovery techniques) to model the homophily couplings between outlying behaviors and build more noise-tolerant outlier detectors. 

To provide a thorough empirical justification, we introduce a set of four data indicators, which describe some intrinsic data characteristics, to provide important insights into the outlier detection and/or feature selection performance. The four indicators quantify the data complexity of 15 publicly available data sets in terms of value heterogeneity, homophily couplings, outlier separability and feature noise level. We further make these data sets publicly accessible to create benchmark data sets with quantitative complexities to promote the development and evaluation of outlier detection methods in complex scenarios.

Extensive experiments show that (i) the four data indicators help to separate complex and simple data; (ii) our CBRW and SDRW-based outlier detector perform significantly better than five well-known detectors - MarP, FPOF, CompreX, iForest and Sp, at the 95\% confidence level; (iii) our CBRW and SDRW-based outlying feature selection method considerably reduce the complexities of different data sets while retain their outlier separability, and they significantly outperform the competing feature selection methods; and (iv) CBRW and SDRW have runtime linear w.r.t. data size and quadratic w.r.t. the number of features. All these characteristics make SDRW and CBRW a good candidate for outlier detection in large-scale data with complex underlying characteristics.

Compared to CBRW, SDRW is significantly better in terms of overall AUC performance, computational time, parameter tuning effort and tolerance to noisy features. Therefore, SDRW is generally recommended when the complexity of a given problem is unknown, or when the problem has high requirements on computational cost, users' inputs, or robustness to feature noise. However, SDRW seems less effective than CBRW to identify outliers in cleaner data sets, e.g., data with only a few noisy features and some strongly relevant features like \textit{Census} and \textit{R10}, since SDRW reduces the effect of $\delta$ by replacing $\delta$ with $\mathit{ad}$. Therefore, CBRW is recommended for outlier detection in clean data sets.

\subsection{Limitations and Possible Extensions}

\subsubsection{Issues in the Framework and Its Instantiations}
The CUOT framework and its instantiations use only one intra- and inter-feature value coupling function (i.e., $\delta$ and $\eta$) to define the outlier factors and their interdependence. This may limit its modeling capacity. Additional different coupling functions may be defined to capture other aspects of intra-feature outlier factors and their interactions. Also, we focus on pairwise inter-feature couplings, which may omit longer outlying patterns. On the other hand, simply using the patterns obtained by pattern mining approaches fail to work effectively, in particular for data with noisy features. Considering the couplings between high-order patterns may help address this issue. Therefore, instantiating our framework CUOT by incorporating arbitrary-length patterns and their complex couplings may further improve the performance in data sets with long-length patterns, but the time complexity would be a critical issue. Also, in addition to random walks, other homophily modeling techniques such as belief propagation \citep{koutra2011homophily} may be explored to better capture the inter-feature outlierness interdependence.

Additionally, we focus on estimating the outlierness of values in this work. CUOT may be extended to project the categorical values into a numeric low-dimensional outlier-resilient embedding space, in which each categorical value is represented by a low-dimensional vector, such as the work reported in \citep{jian2017embedding}, such that off-the-shelf numeric data-based learning methods can be applied to extract more useful knowledge from categorical data while being outlier-resilient. Actually, CUOT projects the categorical values onto a one-dimensional space where each categorical value is represented by a numeric outlier score, but it only attempts to capture the exceptional characteristics of the values. When CUOT captures more intrinsic data characteristics, the embeddings of values would have better representation power and facilitate stronger outlier-resilient learning performance. A large number of deep neural network-based outlier detection methods have been introduced in recent years \citep{pang2020deep}; they may be used to support the representation learning of the complex categorical data.

Note that, although our framework incorporates some feature-level information by looking at value relations within intra- or inter-features, it does not make full use of the feature information, such as feature correlation, which may be further leveraged to improve the performance. However, it is challenging to efficiently and effectively incorporate those information into our feature value-based framework. One possible way is to build a feature-feature graph and learn the outlierness from the interactions between the value graph and the feature graph.

\subsubsection{Applications}

We have only studied two applications of CBRW/SDRW in direct outlier detection and outlying feature selection. However, as CBRW/SDRW works on the very low-level value outlierness estimation, they hold great potential for other applications. For example, for the data embedding as discussed above, CBRW/SDRW can transform categorical data into numeric data by replacing each categorical value with a numeric value representing its outlierness. This would enable existing numeric data-based learning methods to work on the transformed data. Another possible application is to use the couplings between values and the value outlierness to provide an explanation of why objects are identified as outliers. Current outlier explanation approaches for categorical data \citep{angiulli2009detecting} focus on finding suspected value combinations or patterns, but only using these patterns may be ineffective due to the presence of noisy features. The complex couplings modeled by CBRW/SDRW may well complement existing approaches when data sets contain noisy features. 

This work deals with point outliers only. In practice, there may exist other types of outliers, such as contextual outliers or collective outliers \citep{chandola2009anomaly}. How to effectively consider the non-IID characteristics of those outliers remains an open problem.

Additionally, we focus on categorical data only. Extending our methods to handle data with both categorical and numeric features is also an important and interesting direction as this type of data is commonly encountered in many real-world problems. In some application scenarios, outliers exhibit outlying behaviors in both categorical and numeric features. In such cases using either categorical data or numeric data is sufficient to detect the outliers. For example, our network intrusion data sets \textit{Probe} and \textit{U2R} are taken from mixed categorical-numeric data and they are used with the categorical features only in our experiments, but our methods achieve nearly perfect AUC performance, indicating a very high coverage of outlying behaviors in the categorical data space. Categorical data-based methods would fail and the numeric data-based counterparts should be used when the outlying behaviors are only detectable in the numeric data space. The task becomes significantly more challenging when outliers can be detected only by considering the couplings between categorical and numeric features. 

\subsubsection{Data Indicators}

We observe from Table \ref{tab:datasummary} that the four data indicators are somehow correlated, e.g., data sets with complex couplings and/or strong heterogeneity are often among the most complex data according to the three other indicators. This is due to an underlying fact that different data characteristics are often correlated, which is one of the main reasons for why CBRW/SDRW can not only handle non-IID outlying behaviors but also data with high outlier inseparability and feature noise level.  

It is very difficult for single data indicators to accurately describe real-world data sets and provide insights into the performance of outlier detectors on those data. For example, a data set with low heterogeneity does not indicate it is simple, as the data may have low outlier separability and high feature noise level, such as \textit{w7a}. Therefore, multiple data indicators are required to work together to provide a more complete picture of the data. The four data indicators we introduce capture some of the intrinsic data characteristics that are sensitive to the performance of outlier detectors. These indicators enable us to more comprehensively explain the detection results on these 15 data sets, but it may not necessarily fit other data sets. More data indicators may be required to generalize to diverse domains.

\appendix
\section{Proofs of the Theorems}

\subsection{Proof of Theorem \ref{thm:convg}}

\begin{proof}

If the graph $\mathsf{G}$ is irreducible and aperiodic, then based on the Perron–Frobenius Theorem \citep{meyer2000matrix}, the URWs on $\mathsf{G}$ based on the adjacency matrix $\mathbf{A}$ will converge to a unique probability vector. 
\begin{lemma} [Equivalence between BRWs and URWs]\label{thm:lem}
BRWs based on the adjacency matrix $\mathbf{A}$ and the bias $\delta$ is equivalent to URWs on a graph $\mathsf{G}^{b}$ with an adjacency matrix $\mathbf{B}$, in which
\begin{equation}\label{eqn:adjb}
    \mathbf{B}(u,v)=\delta(u)\mathbf{A}(u,v)\delta(v), \; \forall u,v \in \mathcal{V}.
\end{equation}
\end{lemma}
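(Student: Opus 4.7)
The plan is to show the equivalence by computing the transition matrix of the unbiased random walk on $\mathsf{G}^{b}$ and verifying it coincides with $\mathbf{W}^{b}$ from Equation (\ref{eqn:brws1}). This reduces the lemma to a direct algebraic identity, which is the natural approach since the random walks are uniquely determined by their transition matrices.

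Concretely, I would start from the definition of URWs applied to the graph with adjacency matrix $\mathbf{B}$. By Equation (\ref{eqn:basicform}) applied to $\mathbf{B}$, the URW transition probability from $u$ to $v$ on $\mathsf{G}^{b}$ is
\begin{equation*}
    \mathbf{W}^{u}(u,v) \;=\; \frac{\mathbf{B}(u,v)}{\sum_{v' \in \mathcal{V}} \mathbf{B}(u,v')} \;=\; \frac{\delta(u)\mathbf{A}(u,v)\delta(v)}{\sum_{v' \in \mathcal{V}} \delta(u)\mathbf{A}(u,v')\delta(v')}.
\end{equation*}
Since $\delta(u)$ does not depend on the summation variable $v'$ in the denominator, it factors out and cancels with the $\delta(u)$ in the numerator, yielding
\begin{equation*}
    \mathbf{W}^{u}(u,v) \;=\; \frac{\delta(v)\mathbf{A}(u,v)}{\sum_{v' \in \mathcal{V}} \delta(v')\mathbf{A}(u,v')} \;=\; \mathbf{W}^{b}(u,v),
\end{equation*}
where the right-hand side is exactly Equation (\ref{eqn:brws1}). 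Since the two walks share an identical transition matrix on the same node set $\mathcal{V}$, they induce the same Markov chain and hence are equivalent.

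This is essentially a one-line cancellation, so there is no serious obstacle; the only subtle point is the observation that the per-node bias factor $\delta(u)$ at the source node is invariant with respect to the outgoing-edge sum and therefore cancels, while the $\delta(v)$ factor at the destination node survives as the per-step bias. Once the lemma is established, the convergence claim in Theorem \ref{thm:convg} follows by applying the Perron--Frobenius theorem to $\mathsf{G}^{b}$: irreducibility and aperiodicity of $\mathsf{G}$ transfer to $\mathsf{G}^{b}$ because $\mathbf{B}(u,v) > 0$ iff $\mathbf{A}(u,v) > 0$ (as $\delta$ is strictly positive by the first property of $\delta(\cdot)$ in Section \ref{subsec:nodeproperty}), so $\mathsf{G}$ and $\mathsf{G}^{b}$ share the same directed edge structure and therefore the same irreducibility/aperiodicity properties.
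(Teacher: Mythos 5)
Your proof is correct and matches the paper's own argument essentially verbatim: both compute the URW transition matrix on $\mathsf{G}^{b}$, cancel the common factor $\delta(u)$ from numerator and denominator, and identify the result with $\mathbf{W}^{b}(u,v)$ from Equation (\ref{eqn:brws1}). Your closing remark on how irreducibility and aperiodicity transfer from $\mathsf{G}$ to $\mathsf{G}^{b}$ via the strict positivity of $\delta$ likewise mirrors the paper's subsequent use of the lemma in the proof of Theorem \ref{thm:convg}.
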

This lemma holds iff the transition matrix $\mathbf{T}$ of $\mathsf{G}^{b}$ satisfies: $\mathbf{T} \equiv \mathbf{W}^{b}$. Since $\mathbf{B}(u,v)=\delta(u)\mathbf{A}(u,v)\delta(v)$, we have 
\begin{align*}
\mathbf{T}(u,v) &=\frac{\mathbf{B}(u,v)}{\sum_{v \in \mathcal{V}}\mathbf{B}(u,v)}= \frac{\delta(u)\mathbf{A}(u,v)\delta(v)}{\sum_{v\in V}\delta(u)\mathbf{A}(u,v)\delta(v)} \\
& =\frac{\mathbf{A}(u,v)\delta(v)}{\sum_{v\in V}\mathbf{A}(u,v)\delta(v)}=\mathbf{W}^{b}(u,v),
\end{align*}
which completes the proof of the lemma.

Since $\delta$ is always positive, the inclusion of $\delta$ into $\mathbf{A}$ does not change the graph's irreducibility and aperiodicity. Based on Lemma \ref{thm:lem}, $\mathbf{B}$ and $\mathbf{A}$ have the same irreducibility and aperiodicity. Therefore, if $\mathsf{G}$ is irreducible and aperiodic, so is $\mathsf{G}^{b}$. We therefore have $\boldsymbol\pi^{*}=\mathbf{W}^{b}\boldsymbol\pi^{*}$.

\end{proof}

\subsection{Proof of Theorem \ref{thm:closedform}}

\begin{proof}

To prove Equation (\ref{eqn:closedform}), we need to show that  when $\boldsymbol\pi^{\prime}(u) =\frac{d^{\prime}(u)}{\mathit{vol}(\mathsf{G}^{\prime})}$, $\forall u \in \mathcal{V}$, we have $\boldsymbol\pi^{\prime}=\mathbf{W}^{b\prime}\boldsymbol\pi^{\prime}$, i.e., $\boldsymbol\pi^{\prime}$ becomes steady w.r.t. the time step.

    First, the probability of visiting $v$ is $\boldsymbol\pi^{\prime,t+1}(v) = \sum_{u\in \mathcal{V}}\boldsymbol\pi^{\prime,t}(u)\mathbf{W}^{b\prime}(u,v)$. We then have 
$$    \boldsymbol\pi^{\prime,t+1}(v) = \sum_{u\in \mathcal{V}}\boldsymbol\pi^{\prime,t}(u)\frac{\mathbf{B}^{\prime}(u,v)}{\sum_{w \in \mathcal{V}}\mathbf{B}^{\prime}_{u,w}}.$$

When $\boldsymbol\pi^{\prime,t}(u) =\frac{d^{\prime}(u)}{\mathit{vol}(\mathsf{G}^{\prime})}$, we have
$$
    \boldsymbol\pi^{\prime,t+1}(v) =\sum_{u\in \mathcal{V}}\frac{d^{\prime}(u)}{\mathit{vol}(\mathsf{G}^{\prime})}\frac{\mathbf{B}^{\prime}(u,v)}{\sum_{w \in \mathcal{V}}\mathbf{B}^{\prime}_{u,w}}= \sum_{u\in \mathcal{V}}\frac{d^{\prime}(u)}{\mathit{vol}(\mathsf{G}^{\prime})}\frac{\mathbf{B}^{\prime}(u,v)}{d^{\prime}(u)}=\sum_{u\in \mathcal{V}}\frac{\mathbf{B}^{\prime}(u,v)}{\mathit{vol}(\mathsf{G}^{\prime})}.
$$

Since $\mathbf{B}^{\prime}(u,v)= \mathbf{B}^{\prime}(v,u)$, we further have
$$
\boldsymbol\pi^{\prime,t+1}(v) =\sum_{u\in \mathcal{V}}\frac{\mathbf{B}^{\prime}(u,v)}{\mathit{vol}(\mathsf{G}^{\prime})}=\sum_{u\in \mathcal{V}}\frac{\mathbf{B}^{\prime}(v,u)}{\mathit{vol}(\mathsf{G}^{\prime})}= \frac{d^{\prime}(v)}{\mathit{vol}(\mathsf{G}^{\prime})}.
$$

Therefore, we also have $\boldsymbol\pi^{\prime,t+1}(u)=\frac{d^{\prime}(u)}{\mathit{vol}(\mathsf{G}^{\prime})}=\boldsymbol\pi^{\prime,t}(u)$, i.e., $\boldsymbol\pi^{\prime}$ becomes steady.

\end{proof}

\subsection{Proof of Theorem \ref{thm:gapdelta}}
\begin{proof}

We show $\mathit{diff}=\left(\delta(u^{\prime}) - \phi_{\mathit{freq}}(u^{\prime})\right) - \left(\delta(v^{\prime}) - \phi_{\mathit{freq}}(v^{\prime})\right) > 0$ to complete the proof.

First, we have
\begin{align*}
\delta(u^{\prime}) - \phi_{\mathit{freq}}(u^{\prime}) &= \left(1 - \mathit{freq}(m) + \frac{\mathit{freq}(m) - \mathit{freq}(u^{\prime})}{\mathit{freq}(m)}\right) - \left(1 - \mathit{freq}(u^{\prime})\right) \\
&= \frac{\mathit{supp}(m)N-\mathit{supp}(u^{\prime})N + \mathit{supp}(m)\mathit{supp}(u^{\prime})-\mathit{supp}(m)^{2}}{\mathit{supp}(m)N}.
\end{align*}

Let $C = N - \mathit{supp}(m)$ and $H = \mathit{supp}(m) + \mathit{supp}(u^{\prime})$. Then after some algebra, we have

$$\delta(u^{\prime}) - \phi_{\mathit{freq}}(u^{\prime}) = \frac{NH-\mathit{supp}(m)H}{\mathit{supp}(m)N}.$$

Similarly, we can obtain 
$$\delta(v^{\prime}) - \phi_{\mathit{freq}}(v^{\prime}) = \frac{\mathit{supp}(m)N-\mathit{supp}(m)^{2}-\mathit{supp}(v^{\prime})N+\mathit{supp}(v^{\prime})\mathit{supp}}{\mathit{supp}(m)N}.$$

Let $\mathit{supp}(v^{\prime}) = \mathit{supp}(m) - I$. Then after some algebra, we obtain
$$\delta(v^{\prime}) - \phi_{\mathit{freq}}(v^{\prime}) = \frac{NI-\mathit{supp}(m)I}{\mathit{supp}(m)N}.$$

Therefore, 
\begin{align*}
\mathit{diff} &= \frac{NH-\mathit{supp}(m)H}{\mathit{supp}(m)N} - \frac{NI-\mathit{supp}(m)I}{\mathit{supp}(m)N} \\
&= \frac{(N-\mathit{supp}(m))(H- I)}{\mathit{supp}(m)N}.
\end{align*}

We always have $N>\mathit{supp}(m)$. Moreover, as $u^{\prime}$ and $v^{\prime}$ are outlying and normal values respectively, we have $\mathit{supp}(u^{\prime}) < \mathit{supp}(v^{\prime})$, and thus $H > I$. Therefore, $\left(\delta(u^{\prime}) - \phi_{\mathit{freq}}(u^{\prime})\right) - \left(\delta(v^{\prime}) - \phi_{\mathit{freq}}(v^{\prime})\right)>0$.

\end{proof}

\section{Convergence and Sensitivity Test Results of CBRW}\label{sec:selfcbrw}

The empirical convergence analysis for CBRW is provided in the first subsection, followed by the sensitivity test of CBRW w.r.t. the parameter $\alpha$.

\subsection{Convergence Test}

The convergence rate of random walks is governed by two key graph properties - the graph diameter and the Cheeger constant \citep{diaconis1991cheegerbound,fill1991cheegerbounddigraph}. The runtime for computing the Cheeger constant is prohibitive for large graphs, so we replace this constant with clustering coefficients. The graph's diameter and clustering coefficients of the value graph for each data set are presented in Table \ref{tab:valuegraph}. It is clear that all the value graphs has small graph diameter and large clustering coefficient. This is because a value in one feature often co-occurs with most, if not all, of the values in other features. Moreover, there exist linkages between values as long as the values co-occur together, resulting in a highly connected dense value graph. Fast convergence rates are expected for random walks on such graphs \citep{diaconis1991cheegerbound,fill1991cheegerbounddigraph}. 

The convergence test results in Fig. \ref{fig:convergence} show that CBRW converges quickly on all 15 data sets, i.e., within 70 iterations. CBRW converges after about 10 iterations on 13 data sets, but takes about 70 iterations to converge on \textit{Probe} and \textit{U2R}. This is because these two data sets contain a large proportion of feature values having frequencies of less than three. This is particularly true for \textit{Probe}. As a result, although their overall clustering coefficient is high, its Cheeger constant can be quite small, which leads to slower convergence.

\begin{figure}[h!]
\CenterFloatBoxes
\begin{floatrow}
\ttabbox
  {\caption{Two Key Properties of a Value Graph. Data is sorted by clustering coefficient. `$\circ$' indicates out-of-memory exceptions.}\label{tab:valuegraph}}
  {
  \scalebox{0.80}{
  \begin{tabular}{|l|c|c|}
    \hline
    \textbf{Data}  & \textbf{Diameter} & \textbf{Coefficient} \\
    \hline    
    Census & 2  & 0.76 \\
    Chess & 2  & 0.79 \\
    U2R & 2  & 0.80 \\
    SF & 2  & 0.81 \\
    Probe & 2  & 0.82 \\
    BM & 2  & 0.85 \\
    LINK & 2  & 0.86 \\
    CT & 2  & 0.87 \\
    CMC & 2  & 0.89 \\
    APAS & 2  & 0.90 \\
    R10 & 2  & 0.91 \\
    AID362 & 2  & 0.92 \\
    w7a & 2  & 0.93 \\
    CelebA & 2  & 0.99 \\
    AD & $\circ$ & $\circ$ \\
    \hline
  \end{tabular}
  }
  }

\ffigbox
  {\includegraphics[width=0.35\textwidth]{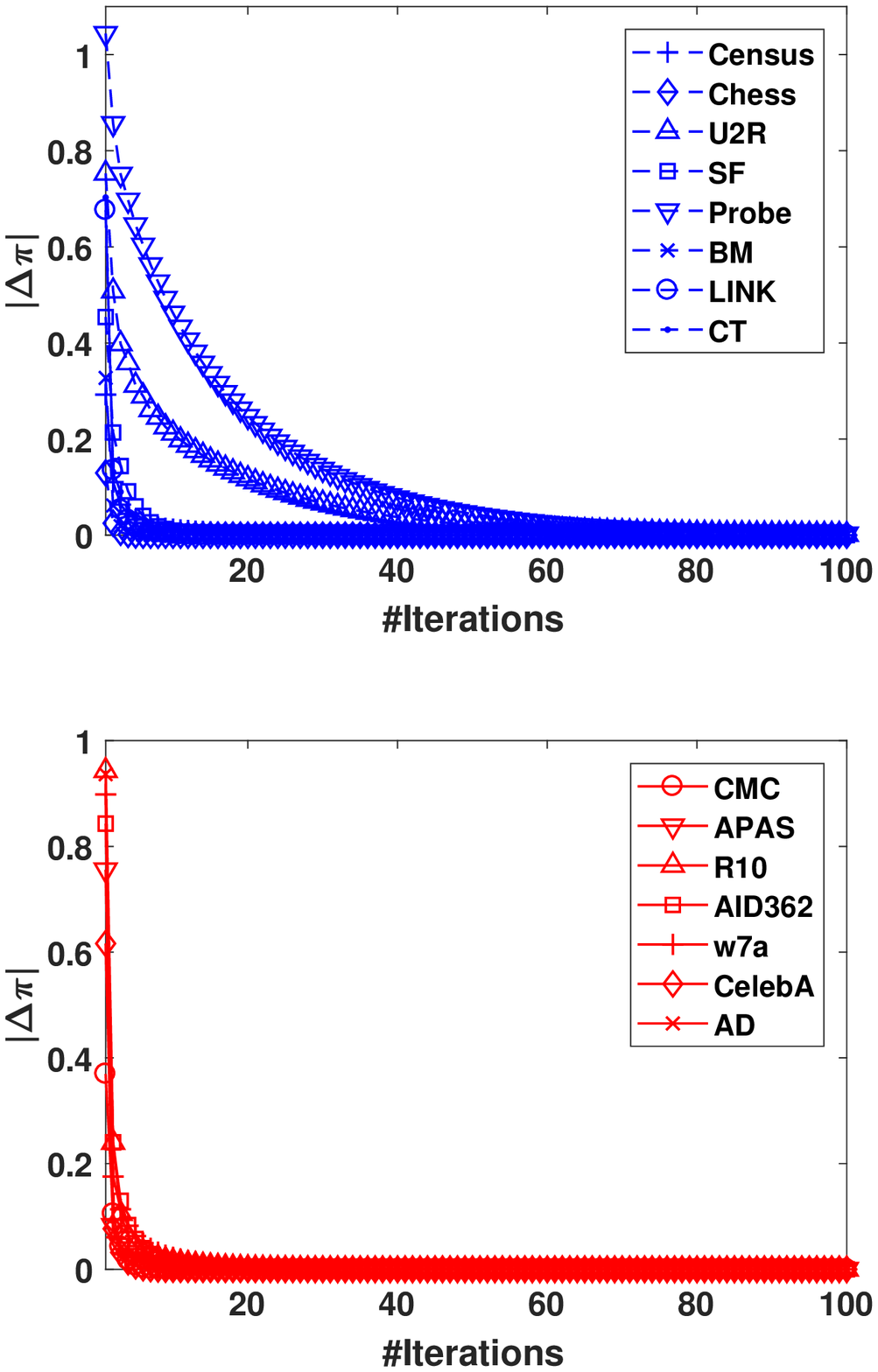}}
  {\caption{Convergence Test Results.}\label{fig:convergence}}

\end{floatrow}
\end{figure}

\subsection{Sensitivity Test w.r.t. the Damping Factor $\alpha$}\label{subsec:alpha}

CBRW only has one parameter, the damping factor $\alpha$. The use of $\alpha$ is to avoid the random walking getting stuck in isolated nodes by offering a small restart probability $(1-\alpha)$, which guarantees the algorithmic convergence while does not affect the effectiveness. $\alpha=1.0$ is not recommended as this may break the convergence condition. Also, $\alpha$ should be sufficiently large, e.g., $\alpha\geq 0.85$, and the underlying graph structure is ignored otherwise. Below we examine the sensitivity of CBRW w.r.t. $\alpha$ in a wide range of values $[0.85,0.99]$ by performing direct outlier detection (i.e., $\text{CBRW}_{\text{od}}$ is used). Fig. \ref{fig:sentest} reports the AUC results w.r.t. $\alpha$ on all 15 data sets.  

\begin{figure}[h!]
  \centering
    \includegraphics[width=0.70\textwidth]{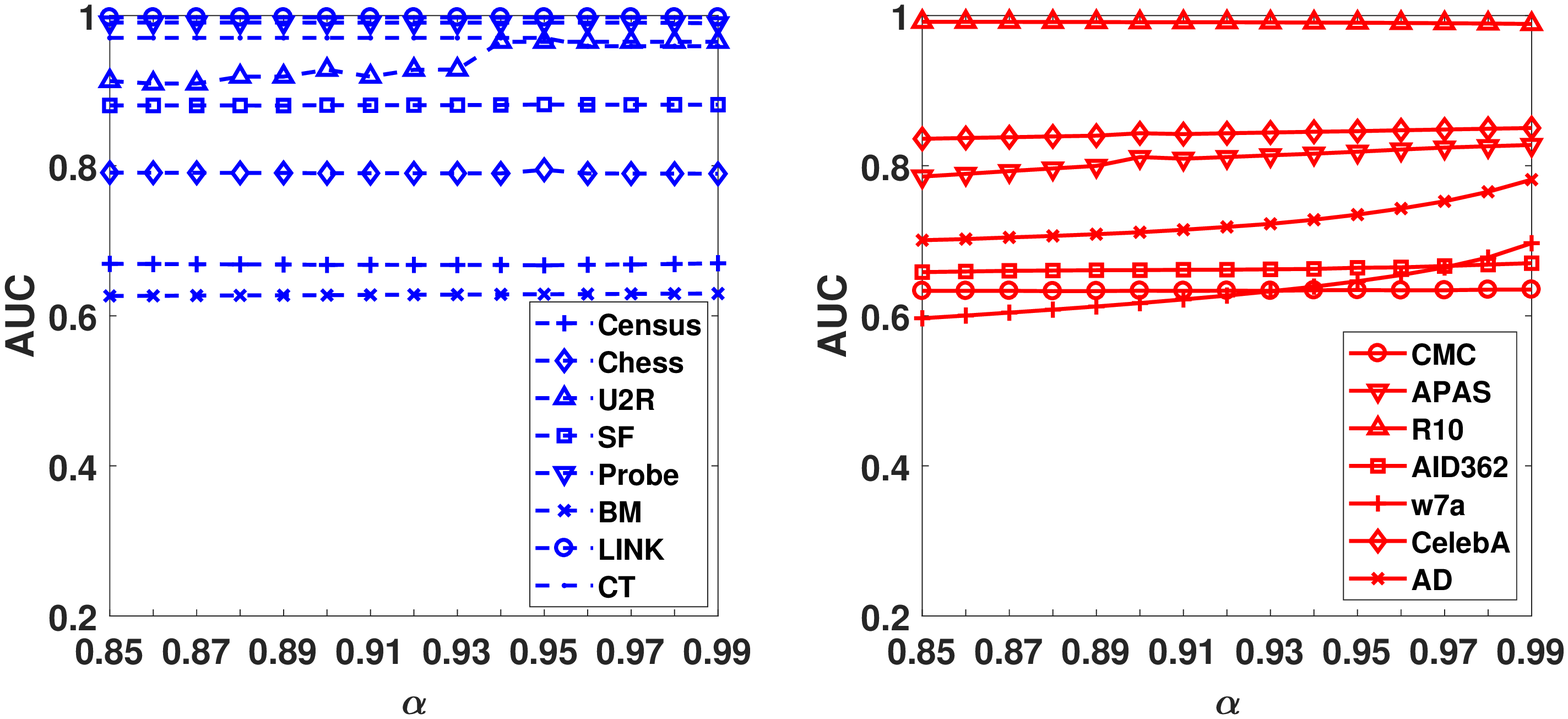}
  \caption{Sensitivity Test Results w.r.t. the Parameter $\alpha$. }
  \label{fig:sentest}
\end{figure}

The results show that CBRW performs very stably over a large range of tuning options on most of the data sets, and a large $\alpha$ is more preferable than a small one. This is because (i) $\alpha$ is introduced to guarantee the convergence of the CBRW algorithm and it is data-insensitive in terms of effectiveness, which is different from some data-sensitive parameters in other detectors, such as the minimum support in FPOF and the subsampling size in iForest; and (ii) the graph structure and edges weights are carefully designed to highlight the outlying values, and we need to make use of this graph nature by setting a large $\alpha$. A large $\alpha$ is needed to achieve the best performance on some data sets, e.g., \textit{U2R}, \textit{APAS}, \textit{w7a} and \textit{AD}. These data sets may contain some highly noisy values. A large $\alpha$ is required to increase the gap between the outlierness of outlying values and the highly noisy values. On the other hand, a medium $\alpha$ is needed to obtain the best performance on other data sets, like \textit{CT}. This may be because some outlying values in these data sets cannot attract sufficiently large outlierness in the original graph structure, but rather rely on some outlierness propagated through restart probabilities. Therefore, we recommend using a relatively large $\alpha$ (e.g., $\alpha=0.95$) to leverage both cases.

\section{Key Outputs of CBRW and SDRW}

We provide several key outputs of CBRW to enable an in-depth understanding of its algorithmic procedures. All these outputs are built upon the toy dataset in Table \ref{tab:toyexample}. In Table \ref{tab:tm} we present the transition matrix used by the BRWs in CBRW, i.e., $\mathbf{W}^b$ in Equation (\ref{eqn:brws1}), in which each entry $\mathbf{W}^b(u, v)$ is determined by the inter-feature outlierness influence $\eta$ and the intra-feature outlier factor $\delta$. After having $\mathbf{W}^b$, the power iteration method is used to perform random walks and obtains a stationary probability vector $\boldsymbol\pi^{*}$. In CBRW, these stationary probabilities are used as the outlierness of the values in the value-value graph according to Equation (\ref{eqn:score}). Table \ref{tab:value_outlierness} shows the outlierness of each value of the toy dataset, with each outlierness corresponding to one entry of $\boldsymbol\pi^{*}$. It is then followed by the calculation of the outlierness of data objects in Equation (\ref{eqn:od}) based on the value outlierness. Finally, CBRW produces the outlierness of all data objects as in Table \ref{tab:object_outlierness}. The first data object that is the only genuine outlier is assigned with larger outlierness than all data objects, including the noisy data object \#10.

\begin{table}[htbp]
  \centering
  \ttabbox{\caption{Transition Matrix Resulted in CBRW for the Value Graph Derived from Table \ref{tab:toyexample}.}\label{tab:tm}}
  {
  \scalebox{0.68}{
    \begin{tabular}{@{}cc|cc|ccc|ccc|ccc}
    \hline \hline
          &       & \multicolumn{2}{c|}{$\mathsf{F}_1$} & \multicolumn{3}{c|}{$\mathsf{F}_2$} & \multicolumn{3}{c|}{$\mathsf{F}_3$} & \multicolumn{3}{c}{$\mathsf{F}_4$} \\ 
          &       & \textbf{male} & \textbf{female} & \textbf{bachelor} & \textbf{master} & \textbf{PhD} & \textbf{married} & \textbf{single} & \textbf{divorced} & \textbf{low} & \textbf{medium} & \textbf{high} \\ \hline
    \multirow{2}[0]{*}{$\mathsf{F}_1$} & \textbf{male} & 0.0000 & 0.0000 & 0.2124 & 0.0607 & 0.0759 & 0.0637 & 0.0850 & 0.1077 & 0.1821 & 0.0303 & 0.1821 \\
          & \textbf{female} & 0.0000 & 0.0000 & 0.0000 & 0.0897 & 0.2242 & 0.1256 & 0.0628 & 0.3184 & 0.0000 & 0.1794 & 0.0000 \\ \hline
    \multirow{3}[0]{*}{$\mathsf{F}_2$} & \textbf{bachelor} & 0.1136 & 0.0000 & 0.0000 & 0.0000 & 0.0000 & 0.1591 & 0.1591 & 0.0000 & 0.4545 & 0.1136 & 0.0000 \\
          & \textbf{master} & 0.0446 & 0.1116 & 0.0000 & 0.0000 & 0.0000 & 0.0313 & 0.0937 & 0.3170 & 0.1786 & 0.0446 & 0.1786 \\
          & \textbf{PhD} & 0.0538 & 0.2688 & 0.0000 & 0.0000 & 0.0000 & 0.2258 & 0.0753 & 0.0000 & 0.0000 & 0.1613 & 0.2151 \\ \hline
    \multirow{3}[0]{*}{$\mathsf{F}_3$} & \textbf{married} & 0.0500 & 0.1667 & 0.2333 & 0.0333 & 0.2500 & 0.0000 & 0.0000 & 0.0000 & 0.0000 & 0.1333 & 0.1333 \\
          & \textbf{single} & 0.0588 & 0.0735 & 0.2059 & 0.0882 & 0.0735 & 0.0000 & 0.0000 & 0.0000 & 0.2353 & 0.0294 & 0.2353 \\
          & \textbf{divorced} & 0.0500 & 0.2500 & 0.0000 & 0.2000 & 0.0000 & 0.0000 & 0.0000 & 0.0000 & 0.4000 & 0.1000 & 0.0000 \\ \hline
    \multirow{3}[0]{*}{$\mathsf{F}_4$} & \textbf{low} & 0.0735 & 0.0000 & 0.3431 & 0.0980 & 0.0000 & 0.0000 & 0.1373 & 0.3480 & 0.0000 & 0.0000 & 0.0000 \\
          & \textbf{medium} & 0.0240 & 0.2404 & 0.1683 & 0.0481 & 0.1803 & 0.1346 & 0.0337 & 0.1707 & 0.0000 & 0.0000 & 0.0000 \\
          & \textbf{high} & 0.1471 & 0.0000 & 0.0000 & 0.1961 & 0.2451 & 0.1373 & 0.2745 & 0.0000 & 0.0000 & 0.0000 & 0.0000 \\ \hline \hline
    \end{tabular}%
    }
    }
\end{table}%

\begin{figure}[h!]
\CenterFloatBoxes
\begin{floatrow}
\ttabbox
  {\caption{Value Outlierness Yielded by CBRW.}
  \label{tab:value_outlierness}}
  {
  \scalebox{0.85}{
    \begin{tabular}{cc}
    \hline
    \textbf{Value} & \textbf{Outlierness} \\\hline
    male  & 0.0598 \\
    female & 0.0983 \\
    bachelor & 0.1075 \\
    master & 0.0794 \\
    PhD   & 0.0836 \\
    married & 0.0756 \\
    single & 0.0845 \\
    divorced & 0.1228 \\
    low   & 0.1403 \\
    medium & 0.0744 \\
    high  & 0.0739 \\\hline
    \end{tabular}%
  }
  }

\ttabbox
  {\caption{Final Object Outlierness Yielded by CBRW.}
  \label{tab:object_outlierness}}
  {
  \scalebox{0.80}{
    \begin{tabular}{ccc}
    \hline
    \textbf{ID} & \textbf{Outlierness}& \textbf{Ground Truth} \\\hline
    1     & 0.0982 & Yes\\
    2     & 0.0739 & No \\
    3     & 0.0702 & No \\
    4     & 0.0751 & No \\
    5     & 0.0863 & No \\
    6     & 0.0689 & No \\
    7     & 0.0702 & No \\
    8     & 0.0772 & No \\
    9     & 0.0690 & No \\
    10    & 0.0951 & No \\
    11    & 0.0749 & No \\
    12    & 0.0882 & No \\\hline
    \end{tabular}%
  }
  }
\end{floatrow}
\end{figure}

We also provide similar outputs for SDRW. In Table \ref{tab:tm_sdrw} we present the adjacency matrix used by SDRW, i.e., $\mathbf{C}$ in Equation (\ref{eqn:valuegraph2}). Note that the value graph in SDRW is undirected, so we have $\mathbf{C}(u,v)=\mathbf{C}(v,u)$. We then incorporate the subgraph density-based outlier factor into the matrix and calculate the value outlierness using the closed-form solution in Equation (\ref{eqn:closedform_sdrw}). The resulting value outlierness is shown in Table \ref{tab:value_outlierness_sdrw}. SDRW finally uses the same object outlierness calculation as in CBRW, i.e., Equation (\ref{eqn:od}) to obtain the object-level outlier scores. As shown in Table \ref{tab:object_outlierness_sdrw}, SDRW can also easily identify the outliers in the toy dataset.

\begin{table}[htbp]
  \centering
  \ttabbox{\caption{Adjacency Matrix Resulted in SDRW for the Value Graph Derived from Table \ref{tab:toyexample}.}\label{tab:tm_sdrw}}
  {
  \scalebox{0.68}{
    \begin{tabular}{@{}cc|cc|ccc|ccc|ccc}
    \hline \hline
          &       & \multicolumn{2}{c|}{$\mathsf{F}_1$} & \multicolumn{3}{c|}{$\mathsf{F}_2$} & \multicolumn{3}{c|}{$\mathsf{F}_3$} & \multicolumn{3}{c}{$\mathsf{F}_4$} \\ 
          &       & \textbf{male} & \textbf{female} & \textbf{bachelor} & \textbf{master} & \textbf{PhD} & \textbf{married} & \textbf{single} & \textbf{divorced} & \textbf{low} & \textbf{medium} & \textbf{high} \\ \hline
    \multirow{2}[0]{*}{$\mathsf{F}_1$} & \textbf{male} & 0.0000 & 0.0000 & 0.0122 & 0.0035 & 0.0043 & 0.0036 & 0.0049 & 0.0062 & 0.0104 & 0.0017 & 0.0104 \\
          & \textbf{female} & 0.0000 & 0.0000 & 0.0000 & 0.0087 & 0.0217 & 0.0122 & 0.0061 & 0.0308 & 0.0000 & 0.0174 & 0.0000 \\ \hline
     \multirow{3}[0]{*}{$\mathsf{F}_2$} & \textbf{bachelor} & 0.0122 & 0.0000 & 0.0000 & 0.0000 & 0.0000 & 0.0170 & 0.0170 & 0.0000 & 0.0486 & 0.0122 & 0.0000 \\
          & \textbf{master} & 0.0035 & 0.0087 & 0.0000 & 0.0000 & 0.0000 & 0.0024 & 0.0073 & 0.0247 & 0.0139 & 0.0035 & 0.0139 \\
          & \textbf{PhD} & 0.0043 & 0.0217 & 0.0000 & 0.0000 & 0.0000 & 0.0182 & 0.0061 & 0.0000 & 0.0000 & 0.0130 & 0.0174 \\ \hline
     \multirow{3}[0]{*}{$\mathsf{F}_3$} & \textbf{married} & 0.0036 & 0.0122 & 0.0170 & 0.0024 & 0.0182 & 0.0000 & 0.0000 & 0.0000 & 0.0000 & 0.0097 & 0.0097 \\
          & \textbf{single} & 0.0049 & 0.0061 & 0.0170 & 0.0073 & 0.0061 & 0.0000 & 0.0000 & 0.0000 & 0.0194 & 0.0024 & 0.0194 \\
          & \textbf{divorced} & 0.0062 & 0.0308 & 0.0000 & 0.0247 & 0.0000 & 0.0000 & 0.0000 & 0.0000 & 0.0493 & 0.0123 & 0.0000 \\ \hline
     \multirow{3}[0]{*}{$\mathsf{F}_4$} & \textbf{low} & 0.0104 & 0.0000 & 0.0486 & 0.0139 & 0.0000 & 0.0000 & 0.0194 & 0.0493 & 0.0000 & 0.0000 & 0.0000 \\
          & \textbf{medium} & 0.0017 & 0.0174 & 0.0122 & 0.0035 & 0.0130 & 0.0097 & 0.0024 & 0.0123 & 0.0000 & 0.0000 & 0.0000 \\
          & \textbf{high} & 0.0104 & 0.0000 & 0.0000 & 0.0139 & 0.0174 & 0.0097 & 0.0194 & 0.0000 & 0.0000 & 0.0000 & 0.0000 \\ \hline \hline
    \end{tabular}%
    }
    }
\end{table}%

\begin{figure}[h!]
\CenterFloatBoxes
\begin{floatrow}
\ttabbox
  {\caption{Value Outlierness Yielded by SDRW.}
  \label{tab:value_outlierness_sdrw}}
  {
  \scalebox{0.85}{
    \begin{tabular}{cc}
    \hline
    \textbf{Value} & \textbf{Outlierness} \\\hline
    male  & 0.0175 \\
    female & 0.1089 \\
    bachelor & 0.1350 \\
    master & 0.1222 \\
    PhD   & 0.0661 \\
    married & 0.0807 \\
    single & 0.0507 \\
    divorced & 0.1446 \\
    low   & 0.1446 \\
    medium & 0.0952 \\
    high  & 0.0344 \\\hline
    \end{tabular}%
  }
  }

\ttabbox
  {\caption{Final Object Outlierness Yielded by SDRW.}
  \label{tab:object_outlierness_sdrw}}
  {
  \scalebox{0.80}{
    \begin{tabular}{ccc}
    \hline
    \textbf{ID} & \textbf{Outlierness}& \textbf{Ground Truth} \\\hline
    1     & 0.1124 & Yes\\
    2     & 0.0942 & No \\
    3     & 0.0603 & No \\
    4     & 0.0870 & No \\
    5     & 0.1106 & No \\
    6     & 0.0509 & No \\
    7     & 0.0603 & No \\
    8     & 0.0701 & No \\
    9     & 0.0664 & No \\
    10    & 0.0925 & No \\
    11    & 0.0777 & No \\
    12    & 0.0886 & No \\\hline
    \end{tabular}%
  }
  }
\end{floatrow}
\end{figure}

It should be noted that the above results are built upon a simple synthetic toy dataset to demonstrate the procedure of our methods; they do not imply any bias and discrimination issues in the applications of our methods to real-world datasets.  

\begin{acknowledgements}
This work was partially supported by the Australian Research Council discovery grant (DP190101079) and ARC Future Fellowship grant (FT190100734). 
\end{acknowledgements}

\bibliographystyle{plainnat}
\bibliography{AnomalyDetection}  
\end{document}